\definecolor{myboxbackground}{RGB}{255,215,0}
\definecolor{backgroundblue}{RGB}{155, 221, 255}
\definecolor{myboxborder}{RGB}{117, 170, 219}
\definecolor{myboxtitle}{rgb}{0.95, 0.2, 0.6}
\newtheorem{theorem}{Theorem}
\newtheorem{lemma}{Lemma}
\newtheorem{proposition}{Proposition}
\newtheorem{corollary}{Corollary}
\newtheorem{definition}{Definition}
\title{Paid with Models: Optimal Contract Design for Collaborative Machine Learning}
\author {
    Bingchen Wang\textsuperscript{\rm 1}\thanks{Corresponding Author.},
    Zhaoxuan Wu\textsuperscript{\rm 1,2},
    Fusheng Liu\textsuperscript{\rm 1},
    Bryan Kian Hsiang Low\textsuperscript{\rm 3}
}
\begin{document}

\maketitle

\begin{abstract}
Collaborative machine learning (CML) provides a promising paradigm for democratizing advanced technologies by enabling cost-sharing among participants. However, the potential for rent-seeking behaviors among parties can undermine such collaborations. Contract theory presents a viable solution by rewarding participants with models of varying accuracy based on their contributions. However, unlike monetary compensation, using models as rewards introduces unique challenges, particularly due to the stochastic nature of these rewards when contribution costs are privately held information. This paper formalizes the optimal contracting problem within CML and proposes a transformation that simplifies the non-convex optimization problem into one that can be solved through convex optimization algorithms. We conduct a detailed analysis of the properties that an optimal contract must satisfy when models serve as the rewards, and we explore the potential benefits and welfare implications of these contract-driven CML schemes through numerical experiments.
\end{abstract}

%

\section{Introduction}
Training a state-of-the-art machine learning (ML) model is a Herculean task due to the requirement of an  enormous amount of data and computational resources. The exorbitant cost often precludes budget-constrained small parties from training a model on their own, resulting in a high industrial concentration where top-performing models are owned by big firms \cite{aiindex2024}. In this regard, collaborative machine learning (CML) provides a promising crowdsourcing paradigm. The advent of CML schemes like federated learning \cite{mcmahan_communication-efficient_2017, kairouz_advances_2021, sheller_federated_2020, nguyen_deep_2022} allows participants to join their resources for model training and share the training cost that would otherwise be insurmountable at an individual level. Despite their great potential, such schemes might not make economic sense. As is shown by \citet{karimireddy_mechanisms_2022}, catastrophic freeriding can occur when profit-maximizing parties in a collaboration have the ability to observe each other's data collection costs. This issue can be mitigated through the role of a scheme coordinator who conducts model training on the parties' behalf and rewards models with modified accuracy levels based on the parties' contributions. Practically, this could be achieved through the \textit{design of contracts}, where the scheme coordinator acts as the \textit{principal} and each participating party of the scheme acts as the \textit{agent}.

Prior to our work, there has been a line of research that resorts to contract theory to address the incentive issue in collaborative machine learning \citep{kang_incentive_2019, ding_incentive_2020, karimireddy_mechanisms_2022, Liu2023}, but most of them focus on using money as the reward for the collaboration. \citet{karimireddy_mechanisms_2022} attends to the administration of models with different accuracy levels as rewards, while their primary focus is on the case where the scheme coordinator can directly observe each party's data collection costs. However, in reality, the cost of contribution is typically \textit{private information} known only to the contributing party. For instance, consider a CML scheme where private computing firms pull together their GPUs for the training of a language model for code generation. Each firm could face a different vendor price and incur dissimilar maintenance cost of the chips. As another example, consider the CML scheme where investment firms join their privately curated data for the training of an investment model. To gather the data, each firm needs to recruit analysts, the overheads of which are usually determined by conditions of the local labor market and the firm's own incentive policies. The differences in the operating environments cause the parties of a CML scheme to have a wide range of per-unit contribution costs. While the scheme coordinator can be an expert in the domain field, thereby possessing some general information about the process, it remains challenging for them to gauge the exact costs borne by the parties. Even if the parties willingly inform the coordinator of their costs, the coordinator cannot verify the truthfulness of these reports without incurring significant auditing expenses. Worse still, a rent-seeking party may cheat by misreporting their cost if it leads to higher profits being gained from the scheme. This information asymmetry results in what is known as a \textit{principal-agency problem} in economic literature (see \citealt{mwg1995, laffont_theory_2002, bolton2004} for a comprehensive treatment of the subject).

In the presence of private information, optimal contract design with models as the rewards poses unique challenges that distinguish it from its economic counterparts. For one, unlike money, models are a non-rivalrous and non-exclusive good, and can be replicated and offered to the participants at a nominal cost if not free of charge. Therefore, the scheme coordinator would find it tempting to offer less capable parties a good-performing model as long as it does not cause the more capable parties to cheat. For another, the administrable model rewards are constrained by the accuracy level of the model trained using all parties' data or computational resources. Due to \textit{incomplete information}, the coordinator cannot observe the exact numbers of parties with different contribution costs in the CML scheme, and consequently cannot determine the exact accuracy level of the collectively trained model before the training completes. This makes the rewards of the contract stochastic ex-ante. The optimal contracting problem for CML needs to accommodate these challenges, whilst heeding the classical requirements of individual rationality and incentive compatibility. To this end, our paper makes the following contributions:
\begin{itemize}
    \item We provide a \textit{coherent formalization} of the optimal contracting problem in CML with models as the rewards, casting it as a constrained optimization problem.
    \item We \textit{simplify} the original non-convex constrained optimization problem into one that can be solved using numerical optimization algorithms.
    \item We conduct \textit{theoretical analysis of the constraints}, delineating the properties optimally designed contracts should obey for both scenarios when the coordinator can and cannot observe parties' contribution costs.
    \item We \textit{illustrate the potential and the welfare implications} of optimally designed contracts through numerical experiments, showing that, inter alia, it could help small parties surmount the cost barrier of model training and reaping the reward of emergent technologies.
\end{itemize}

\section{Problem Setup}
\begin{figure*}[!htb]
    \centering
    \includegraphics[width=0.85\textwidth]{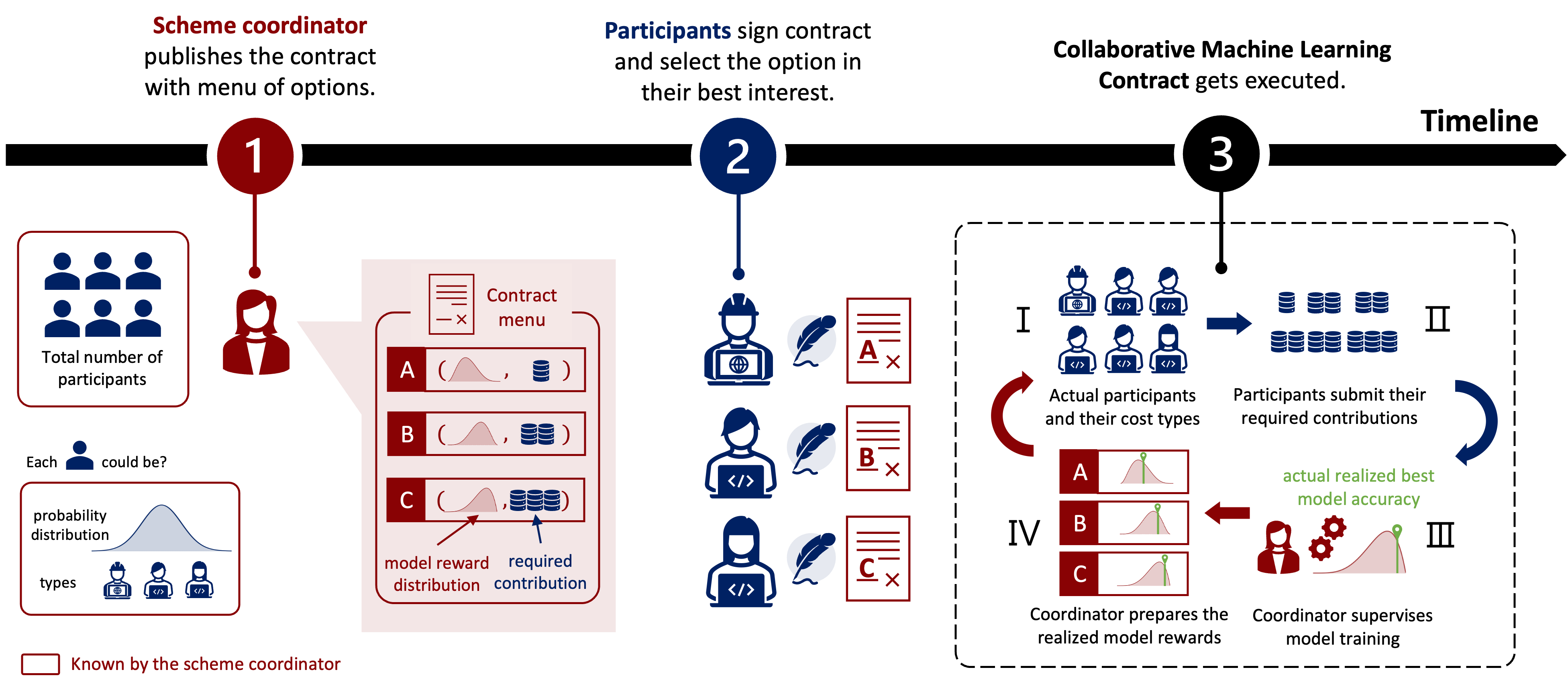}
    \caption{Optimal Contract Design for Collaborative Machine Learning: The Timeline.}
    \label{fig:timeline}
\end{figure*}
\subsubsection*{Collaborative Machine Learning.} We consider a typical setting of collaborative machine learning where budget-constrained parties contribute their resources, such as data and GPUs, to collectively train a model through the mediation of a scheme coordinator. We capture the dependency between contributed resources and model accuracy through the function $a(\cdot)$, and make the standard assumptions that more contribution leads to better model performance, $a^\prime(\cdot)>0$, and that the same amount of contribution has diminishing marginal returns as the total contribution increases, $a^{\prime\prime}(\cdot)<0$. To ensure that the framework is well-grounded economically, we additionally represent the dependency between accuracy of a model and the economic profit gained by the participant through a weakly concave and increasing valuation function $v(\cdot)$, with $v^\prime(\cdot)>0$ and $v^{\prime\prime}(\cdot)\leq 0$. In the computing firm example mentioned in the introduction, the accuracy function captures the fact that the use of more GPUs enables more iterations of model training to be undertaken within the given time span; the valuation function reflects the fact that improved performance of the language model reduces human overheads—enabling a firm to hire fewer software engineers for the same amount of work. In the investment firm example, $a(\cdot)$ captures the fact that more data leads to a fuller picture of the market landscape, which in turn boosts the predictive accuracy of the trained model; $v(\cdot)$ embodies the fact that a model's predictive accuracy directly affects the quality of a firm's investment decisions and better decisions lead to more revenues being earned by the firm. To consolidate the intuition, we henceforth use data consistently throughout the paper as the resources contributed by parties in the collaboration, whereas the analysis can also apply readily to other resource types, like GPUs or computation devices. 

\subsubsection*{Principal-Agent Problem.} There are two distinct features of the CML setting mentioned above: Firstly, there is a \textit{conflict of interests} between the scheme coordinator, who aims to maximize the performance of the collectively trained model by encouraging data contribution, and the scheme participant, who hopes to receive a good-performing model by contributing as little data as possible since collection is costly. Secondly, the scheme participant possesses \textit{information advantage} over the scheme coordinator, as the per-unit data collection cost is directly known by the party themself but may not be observed by the coordinator likely due to costly verification or auditing process. This opens up the possibility that a party might cheat by misreporting their cost so that they can be compensated better by the coordinator.
To facilitate the formalization of the model, we introduce the following notation: Let $I$ denote the total number of distinct types of private per-unit data collection costs in the population—henceforth called \textit{private types} or \textit{types}; we denote the per-unit data costs by $c_i, i = 1,\dots, I$,  and order them decreasingly, $c_1 > \ldots > c_I$, for analytical tractability. Hence, a type-$1$ party incurs the highest per-unit cost while contributing data to the CML scheme. We let $n = (n_1, \ldots, n_I)$ be a vector that counts the numbers of parties of each type in the collaboration and $N= \sum_i^I n_i$ denote the total number of participants of the CML scheme. Here, we assume the coordinator observes $N$ but not $n$. She nevertheless has the general knowledge that $n$ follows a multinomial distribution $\mathrm{Mul}(N,p)$ with probabilities $p = (p_1, \ldots, p_I)$   for the types. We denote the model reward received by a type-$i$ party by $r_i$ measured in model accuracy, their data contribution by $m_i$ measured in amount, and their reservation utility by $f_i$ measured in monetary terms and defined as the highest profits they could achieve by not participating in the CML scheme. 

\subsubsection*{Model as the Reward.} When we use models as the rewards for the CML scheme, a natural \textit{budget constraint} we need to abide by is
\begin{equation*}
    \Vert r(n)\Vert_\infty \leq  a\left(\sum_{i=1}^I n_i m_i\right), \forall n \in \mathrm{Mul}(N,p)\ ,
\end{equation*}
where $r(n) = \big(r_1(n), \ldots, r_I(n)\big)$ is a vector with elements specifying the model rewards received by different types of parties under the realized combination $n$. In words, for every possible combination of types, the maximal model accuracy reward assigned to the parties cannot surpass the accuracy of the collectively trained model. Fixing the contributions $m_i, i=1,\dots, I$ from each type of parties, we see that this upper bound is dependent on $n$. Consequently, the model rewards the coordinator can assign also depend on $n$. At the stage of contract design, unless the coordinator can perfectly observe the private types of the parties, she does not know $n$ and therefore the model rewards are stochastic. We accommodate this fact with the notation $\mathbf{r}_i,\forall i = 1,\dots, I$, which embodies the reward distributions for the types. Figure \ref{fig:timeline} shows the complete timeline of the contracting process.

\section{Optimal Contracting in CML}
With the notation in place, we are now ready to formalize the optimal contracting problem in CML.
\subsubsection*{Coordinator's Objective.}
The objective of the coordinator reflects the goal of the CML scheme. Here we assume that she aims to maximize the expected model accuracy,
\begin{equation*}
\mathbb{E}_{n\sim\mathrm{Mul}(N,p)} \left[ a\left(\sum_{i=1}^I n_i m_i\right)\right].
\end{equation*}
For readability, we henceforth abbreviate $\mathbb{E}_{n\sim\mathrm{Mul}(N,p)}$ to $\mathbb{E}$ unless otherwise stated.

\subsubsection*{Participant's Utility Function.}
Each participant is assumed to be a von Neumann–Morgenstern utility maximizer \citep{Neumann_Morgenstern_1944}, who has the following utility function:
\begin{equation}
\label{participants' utility functions}
    u_i(\mathbf{r}_i,m_i) = \mathbb{E}_{n_i \geq 1}[v(r_i)] - c_i m_i\ .
\end{equation}
Recall that $r_i$ denotes the stochastic model reward (measured in accuracy), which takes on different values for different $n$; $m_i$ is the amount of data contribution; and $c_i$ denotes the private per-unit data cost. $v(\cdot)$ reflects the profit mechanism that maps a model accuracy level to a pecuniary amount. The expectation operator $\mathbb{E}_{n_i \geq 1}$ codifies the information advantage possessed by the type-$i$ participant—when they decide to participate, they know in advance that there is at least one party making the type-$i$ commitment. The aim of each participant is to maximize their expected net profit, represented by the utility function in (\ref{participants' utility functions}).

\subsubsection*{Individual Rationality.}
To ensure the contract is well designed, it must pass the first test that it gives parties of different types enough incentives to join the CML scheme. Formally, this requires that a party upon choosing the contract option designed for their type cannot be made worse off than them not participating in the CML scheme. This is known as the \textit{individual rationality} (IR) conditions. To formalize the idea, we should specify the reservation utility (a.k.a. opportunity cost) for each type of parties. Here we define a party's reservation utility to be the utility level they achieves by training a model on their own, which amounts to solving the following optimization problem:
\begin{equation}
    f_i \triangleq \max_{m\geq 0} \; \tilde{u}_i(m) = v(a(m)) - c_i m\ . \label{indiv training problem}
\end{equation}
We make the following observation upon solving this problem, the proof of which can be found in Appendix A.
\begin{proposition}
    Let $\bar{m}_i$ denote the data contribution a type-$i$ party is willing to commit to when training a model on their own. If $c_i \leq c_j$, then $\bar{m}_i \geq \bar{m}_j$, and $f_i \geq f_j\ .$
\end{proposition}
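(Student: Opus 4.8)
The plan is to treat the individual training problem as a one-parameter family of maximizations indexed by the cost $c$, and to extract both claims from a single comparative-statics argument. First I would define the composite payoff $g(m) \triangleq v(a(m))$ and record that it is strictly increasing and strictly concave: by the chain rule $g'(m) = v'(a(m))\,a'(m) > 0$ since $v' > 0$ and $a' > 0$, while $g''(m) = v''(a(m))\,(a'(m))^2 + v'(a(m))\,a''(m) < 0$ because $v'' \leq 0$, $v' > 0$, and $a'' < 0$. This lets me write $f_i = \max_{m \geq 0}[g(m) - c_i m]$ and rely on the maximizer $\bar{m}_i$ being well defined, which the proposition already presupposes.

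For the ordering of contributions I would avoid differentiating through a possibly-binding nonnegativity constraint and instead use a revealed-preference (exchange) argument that needs only optimality of each $\bar{m}_i$. Optimality of $\bar{m}_i$ for type $i$ gives $g(\bar{m}_i) - c_i \bar{m}_i \geq g(\bar{m}_j) - c_i \bar{m}_j$, and optimality of $\bar{m}_j$ for type $j$ gives $g(\bar{m}_j) - c_j \bar{m}_j \geq g(\bar{m}_i) - c_j \bar{m}_i$. Adding the two inequalities cancels the $g$-terms and rearranges to $(\bar{m}_i - \bar{m}_j)(c_j - c_i) \geq 0$. Since $c_i \leq c_j$, the factor $c_j - c_i \geq 0$, which forces $\bar{m}_i \geq \bar{m}_j$.

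For the ordering of reservation utilities I would use a direct substitution: because $\bar{m}_j \geq 0$ and $c_i \leq c_j$, evaluating type $i$'s objective at type $j$'s optimum yields $f_i \geq g(\bar{m}_j) - c_i \bar{m}_j \geq g(\bar{m}_j) - c_j \bar{m}_j = f_j$, where the first inequality is optimality of $\bar{m}_i$ and the second uses $c_i \bar{m}_j \leq c_j \bar{m}_j$. Equivalently, $f_i = F(c_i)$ with $F(c) = \max_m [g(m) - cm]$ a pointwise maximum of functions affine and decreasing in $c$ (slopes $-m \leq 0$), hence nonincreasing in $c$; this also recovers the envelope relation $F'(c) = -\bar{m}(c)$.

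The result is essentially textbook comparative statics, so I do not anticipate a serious obstacle; the only care needed is that the argument stay valid at corner solutions (e.g.\ $\bar{m}_i = 0$) and in the degenerate case $c_i = c_j$. The revealed-preference and substitution arguments above handle both automatically, since they never invoke an interior first-order condition. A differentiation-based proof—setting $g'(\bar{m}_i) = c_i$ and inverting the strictly decreasing map $g'$, so that $\bar{m}_i = (g')^{-1}(c_i)$ is decreasing in $c_i$—would be a cleaner-looking but strictly less general alternative that I would relegate to a remark, noting that it relies on $g'' < 0$ and on the optimum being interior.
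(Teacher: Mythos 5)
Your proof is correct, but it takes a genuinely different route from the paper's. The paper works through calculus: it writes the first-order condition $\tilde v'(\bar m_k)=c_k$ with $\tilde v \triangleq v\circ a$, orders the $\bar m_k$ by monotonicity of the strictly decreasing $\tilde v'$, and then obtains $f_i\ge f_j$ by implicit differentiation plus an envelope computation, $\partial f_k/\partial c_k=-\bar m_k\le 0$, with the corner case $c_h>\tilde v'(0)$ (so $\bar m_h=0$, $f_h=0$) patched in separately. You instead use a revealed-preference exchange argument for the ordering of contributions and a two-step substitution chain $f_i\ge g(\bar m_j)-c_i\bar m_j\ge g(\bar m_j)-c_j\bar m_j=f_j$ for the reservation utilities; this is more elementary, never invokes interiority or differentiability, and treats corner solutions uniformly rather than as a special case. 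One small caveat: in the degenerate case $c_i=c_j$ your exchange inequality collapses to $0\ge 0$ and forces nothing by itself—you need the strict concavity of $g$ (which you did establish) to get uniqueness of the maximizer and hence $\bar m_i=\bar m_j$; it is worth saying that explicitly rather than calling it automatic. The trade-off to be aware of is that the paper's heavier machinery is not gratuitous: the derivative formulas it produces, $\bar m'(c)=1/\tilde v''(\bar m)$ and $f'(c)=-\bar m$, are reused later in the appendix (e.g.\ to show $f$ is convex in $c$ via $d^2f/dc^2=-d\bar m/dc\ge 0$ in the Jensen-type lemma, and in the type-selection analysis), so if you adopted your proof in place of the paper's, those auxiliary facts would still have to be derived somewhere—essentially by the remark you relegate to the end.
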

The proposition states that when training a model alone, a party with lower data cost is willing to utilize more data and will end up with a better model and generate higher profits. We can write the IR conditions as follows:
\begin{equation*}
    \mathbb{E}_{n_i \geq 1}[v(r_i)] - c_i m_i \geq f_i\ , \forall i\ .
\end{equation*}
Using this formulation, readers familiar with the economic literature could also interpret $f_i$ as the fixed cost and $c_i$ the variable cost of joining the CML scheme for type-$i$ parties. 
\subsubsection*{Incentive Compatibility.}
Designing a contract $\mathcal{C}$ in the presence of hidden information seems highly complicated, as participants could lie about their private costs. Luckily, with the aid of the revelation principle (\citealp{myerson_optimal_1981, myerson_optimal_1982}; see \citealp{mwg1995} for a lucid explanation on the concept), we could confine the design space to one contract option per private type, with each party choosing the option designed for their type upon signing the contract. In the CML setting, this translates to designing a contract option that stipulates a required data contribution $m_i$ and the corresponding reward distribution $\mathbf{r}_i$ for each type of parties. The veracity of parties is formally ensured through the \textit{incentive compatibility} (IC) constraints:
\begin{equation*}
    \mathbb{E}_{n_i \geq 1}[v(r_i)] - c_i m_i \geq \mathbb{E}_{n_j \geq 1}[v(r_j)] - c_i m_j, \forall i,j\ .
\end{equation*}
In words, for each party, choosing the option designed for their type yields a weakly higher expected profit than choosing any other option on offer. We assume a tie is broken in favour of veracity. Note the change from $n_i\geq 1$ to $n_j\geq 1$ in the expectation operator in defining the IC constraints, which again reflects participant's information advantage—they know once they choose another type's option, there is at least one party making such a commitment.

\subsubsection*{Optimal Contract Design in CML.}
Amalgamating the constraints and the coordinator's objective, we obtain the constrained optimization problem for optimal contract design in CML using models as the rewards:
\begin{align}
      &  \max_{\mathcal{C} \triangleq {(\mathbf{r}_i, m_i)}_{i=1}^{I}}  \; \mathbb{E}_{n\sim \mathrm{Multi}(N,p)}\left[a\left(\sum_{i=1}^I n_i m_i\right)\right] \label{objective} \; \text{s.t.}\\
    & \; \left\{\begin{array}{l}
       \mathbb{E}_{n_i \geq 1}[v(r_i)] - c_i m_i \geq f_i, \forall i\ ; \\
       \mathbb{E}_{n_i \geq 1}[v(r_i)] - c_i m_i \geq \mathbb{E}_{n_j \geq 1}[v(r_j)] - c_i m_j, \forall i,j\ ; \\
       \Vert r(n)\Vert_\infty \leq  a\left(\sum_{i=1}^I n_i m_i\right), \forall n \in \mathrm{Multi}(N,p)\ .
    \end{array}\right. \label{constraints}
\end{align}
Before solving this problem, we detour to analyze a special benchmark where the coordinator has the ability to observe the private information possessed by the participants, which is conventionally called the \textit{complete information} scenario.

\section{Contracting with Observable Costs} \label{Contracting with Observable Costs}
We conduct an analysis of the complete information scenario, as it offers insights on two fronts. Firstly, it helps establish a welfare benchmark, against which we could evaluate the welfare loss incurred by the existence of hidden information. Secondly, it models particular CML settings, in which it is relatively cheap to obtain or elicit the private cost information. When the coordinator can observe a party's type, the IC conditions become redundant, as the coordinator can directly contract parties based on their costs. The problem thus reduces significantly to
\begin{align*}
        \max_{\{r_i, m_i\}_{i=1}^{I}} & \; a\left(\sum_{i=1}^I n_i m_i\right) \\
    \text{s.t.} & \; \left\{\begin{array}{l}
       v(r_i) - c_i m_i - f_i \geq 0\ ,  \forall i\ ; \\
       \\ r_i \leq  a\left(\sum_{i=1}^I n_i m_i\right),\forall i\ .
    \end{array}\right.
\end{align*}
Note that we drop the expectation operators because the coordinator now fully knows the number of parties present for each type. Consequently, the rewards also become deterministic. Solving the problem leads to the following proposition. 

\begin{proposition}
    Under the complete information scenario, the optimal strategy for the principal is to offer the best model to all participating parties and require them utilize the amount of data such that an party’s IR constraint binds. The parties will in general utilize more data than they would when training a model on their own.    
\end{proposition}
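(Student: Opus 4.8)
Write $M \triangleq \sum_{i=1}^I n_i m_i$ for the total contribution. Since $a$ is strictly increasing, maximizing $a(M)$ is equivalent to maximizing $M$, so I treat the objective as $\max M$ throughout. The plan is to peel off the two decision variables in turn, first pinning down the rewards $r_i$ and then the contributions $m_i$, using feasibility-preserving perturbations rather than a full KKT system, since the budget constraint $r_i \le a(M)$ is self-referential in the $m_j$'s and makes a direct Lagrangian messy.

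\emph{Best model to all.} I would first argue that at any optimum we may take $r_i = a(M)$ for every $i$. The reward $r_i$ enters only type $i$'s IR constraint (through the increasing map $v$) and its own budget constraint $r_i \le a(M)$, and it is absent from the objective. Hence raising any $r_i$ up to the ceiling $a(M)$ keeps the budget constraint satisfied, relaxes type $i$'s IR constraint, and leaves every other constraint and the objective untouched. Because models are non-rivalrous this costs the principal nothing, so an optimum with $r_i = a(M)$ for all $i$ exists — the ``offer the best model to all parties'' claim.

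\emph{IR binds.} With $r_i = a(M)$, I would show each IR constraint $v(a(M)) - c_i m_i \ge f_i$ binds. Suppose it is slack for some $i$. Increasing $m_i$ by a small $\delta>0$ raises the total to $M + n_i\delta$; resetting all rewards to the new ceiling $a(M+n_i\delta)$ keeps every budget constraint tight, and since $a$ (hence $v\circ a$) is increasing it only relaxes the other types' IR constraints. By continuity the perturbed IR constraint of type $i$ stays satisfied for small $\delta$, while the objective strictly increases, contradicting optimality. Thus every IR binds, giving $m_i = \big(v(a(M)) - f_i\big)/c_i$ and the fixed-point equation $M = v(a(M))\sum_i (n_i/c_i) - \sum_i (n_i f_i/c_i)$; since we maximize $M$ and $v\circ a$ is concave, the function $h(M) \triangleq v(a(M))\sum_i(n_i/c_i) - \sum_i(n_i f_i/c_i) - M$ is concave, so the relevant optimum $M^\ast$ is its largest root.

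\emph{More data than when alone, and the main obstacle.} Finally, using $f_i = v(a(\bar m_i)) - c_i \bar m_i$ from the definition \eqref{indiv training problem} and substituting into the binding IR constraint yields $c_i(m_i - \bar m_i) = v(a(M^\ast)) - v(a(\bar m_i))$, so $m_i \ge \bar m_i$ is equivalent to $M^\ast \ge \bar m_i$. By Proposition 1 the lowest-cost type $I$ has the largest standalone contribution, $\bar m_I = \max_i \bar m_i$, so it suffices to establish $M^\ast \ge \bar m_I$. Plugging $f_i = v(a(\bar m_i)) - c_i\bar m_i$ into $h$ and using $v(a(\bar m_I)) \ge v(a(\bar m_i))$ collapses $h(\bar m_I)$ to a lower bound of $\sum_i n_i \bar m_i - \bar m_I \ge 0$, which holds whenever the collaboration is nontrivial; concavity of $h$ then places $\bar m_I$ inside the interval where $h\ge 0$, forcing $\bar m_I \le M^\ast$. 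I expect this last step to be the main obstacle: the inequality $M^\ast \ge \bar m_i$ is precisely what makes the pooled model dominate any party's standalone model, and nailing it down rigorously requires the concavity-based root selection together with Proposition 1. The qualifier ``in general'' presumably excludes degenerate corners (e.g.\ a type that is essentially absent) where the inequality can only hold weakly.
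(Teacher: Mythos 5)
Your proposal is correct in its essentials, but it takes a genuinely different route from the paper. The paper runs the standard mechanism-design machinery: it writes the full Lagrangian and uses stationarity plus complementary slackness to show that both the budget constraint $r_i \le a\bigl(\sum_j n_j m_j\bigr)$ and the IR constraint must bind at any optimum (if either multiplier vanished, stationarity in $m_i$ would contradict $a'(\cdot)>0$), and then proves existence of a solution to the resulting binding system coordinate-wise via the intermediate value theorem, with $m_i^* > \bar{m}_i$ falling out of the IVT interval $(\bar{m}_i, M_i)$ because others' contributions strictly raise the pooled accuracy above the standalone level. You instead avoid KKT entirely: feasibility-preserving perturbations (lift every $r_i$ to the ceiling, then nudge $m_i$ upward if any IR is slack) give the two binding properties, and you then collapse the whole problem to the scalar fixed-point equation $h(M)=0$ with $h(M) = v(a(M))\sum_i n_i/c_i - \sum_i n_i f_i/c_i - M$, using concavity of $v\circ a$ to identify the optimum as the largest root and deriving $m_i \ge \bar{m}_i \iff M^* \ge \bar{m}_i$, settled by $h(\bar{m}_I)\ge \sum_i n_i\bar{m}_i - \bar{m}_I \ge 0$. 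Your scalar reduction buys a constructive characterization of the optimum and handles existence, uniqueness of the relevant root, and the ``more data'' claim in one stroke; the paper's KKT route is the more conventional necessity argument and generalizes more readily when additional constraints (e.g.\ the fairness surpluses $s_i$ of Appendix B) are bolted on. Two points you should make explicit to close the argument: first, your perturbation steps presuppose that an optimum exists, so you need to note that the set $\{M \ge 0 : h(M)\ge 0\}$ is a nonempty compact interval (nonempty by $h(\bar{m}_I)\ge 0$, bounded because $a(\cdot)\le 1$ forces $h(M)\to -\infty$), whose right endpoint is attained and is feasible by setting $m_i = (v(a(M^*))-f_i)/c_i$; second, strictness of $m_i > \bar{m}_i$ ``in general'' follows because $\sum_i n_i \bar{m}_i > \bar{m}_I$ whenever the collaboration is nontrivial, giving $h(\bar{m}_I)>0$ and hence $M^* > \bar{m}_I \ge \bar{m}_i$, which matches the paper's strict conclusion.
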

The detailed proof is deferred to Appendix A, and we provide here a brief intuition on offering the best model to all. The key is that the full observability of a party's cost eliminates the possibility of cheating. Even if a party wishes to choose the option designed for another type that requires less data contribution, they can longer do so as the coordinator can embed the type into the option and easily verify a party's eligibility at the time of contract signing. Since models are freely replicable, granting the highest rewards to parties incentivizes them to make the highest possible level of contribution while satisfying the IR condition.

While the idea of offering the same model to all parties making different contributes may seem counter-intuitive from a fairness perspective, it actually still obeys the principle that a bigger contributor ends up with a higher profit.
\begin{proposition}
    Under the complete information scenario, a party with lower cost makes more contribution and obtains higher profits, thereby obeying the principle that a bigger contributor ends up with a higher profit.
\end{proposition}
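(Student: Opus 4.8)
The plan is to leverage the structure of the optimal complete-information contract established in the preceding proposition. First I would invoke that result to fix, at the optimum, a common reward $r^{*} \triangleq a\!\left(\sum_{i} n_i m_i\right)$ handed to every type, a binding individual-rationality constraint for each type,
\begin{equation*}
    v(r^{*}) - c_i m_i = f_i, \quad \forall i,
\end{equation*}
and the fact that each type contributes at least its stand-alone amount, $m_i \geq \bar{m}_i$. With this in hand the profit claim is immediate: a type-$i$ party's realized net profit from the scheme is exactly $v(r^{*}) - c_i m_i = f_i$, so by Proposition 1 a lower cost yields a weakly larger $f_i$ and hence a weakly higher profit. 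Equivalently, since $c_i m_i = v(r^{*}) - f_i$, a lower-cost party incurs a smaller total expenditure while receiving the same model, so its net profit is higher.

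The substantive part is showing that the lower-cost party also contributes more, i.e., $c_i < c_j \Rightarrow m_i \geq m_j$. The main obstacle is that the closed form $m_i = (v(r^{*}) - f_i)/c_i$ contains two competing effects: a lower cost shrinks the denominator $c_i$ (pushing $m_i$ up) but, via Proposition 1, raises $f_i$ and thus lowers the numerator (pushing $m_i$ down). To break the tie I would avoid differentiating this expression directly and instead use a revealed-preference comparison against the stand-alone benchmark. Exploiting that $\bar{m}_i$ is optimal for cost $c_i$ but merely feasible for cost $c_j$, I would rewrite the two binding IR conditions as
\begin{equation*}
    v(r^{*}) = v(a(\bar{m}_i)) + c_i(m_i - \bar{m}_i), \qquad v(r^{*}) \geq v(a(\bar{m}_i)) + c_j(m_j - \bar{m}_i),
\end{equation*}
where the first is an identity from $f_i = v(a(\bar{m}_i)) - c_i\bar{m}_i$ and the second follows from $f_j \geq v(a(\bar{m}_i)) - c_j \bar{m}_i$. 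Subtracting yields the single scalar inequality $c_i(m_i - \bar{m}_i) \geq c_j(m_j - \bar{m}_i)$.

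Finally I would close the argument by contradiction. Suppose $m_i < m_j$. Since $m_i \geq \bar{m}_i$ it follows that $m_j - \bar{m}_i > m_i - \bar{m}_i \geq 0$, and then, using $c_j > c_i > 0$,
\begin{equation*}
    c_j(m_j - \bar{m}_i) \geq c_i(m_j - \bar{m}_i) > c_i(m_i - \bar{m}_i),
\end{equation*}
which contradicts the inequality derived above; hence $m_i \geq m_j$, with strict inequality whenever $m_i > \bar{m}_i$ (the degenerate equality requiring $m_i = \bar{m}_i$). Combining the two parts establishes that the bigger, lower-cost contributor ends up with the higher profit. I expect the only delicate point to be the clean use of $m_i \geq \bar{m}_i$ from the previous proposition, since this is precisely what rules out a lower-cost party being simultaneously cornered below its stand-alone level and made to contribute less; everything else reduces to elementary algebra and the monotonicity of $v$ and $a$.
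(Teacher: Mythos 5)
Your proposal is correct, and the profit half coincides with the paper's argument (binding IR from Proposition 2 plus monotonicity of $f_i$ from Proposition 1). For the contribution half, however, you take a genuinely different route. The paper treats $m_i^*$ and $\bar{m}_i$ as implicit functions of $c_i$, differentiates the binding IR condition to get $\left[v^\prime\left(a\left(\sum_k n_k m_k^*\right)\right) a^\prime\left(\sum_k n_k m_k^*\right) n_i - c_i\right]\frac{d m_i^*}{d c_i} = m_i^* - \bar{m}_i \geq 0$, and then spends two contradiction arguments (one invoking the intermediate value theorem and optimality of the contract, one invoking concavity) to show that the bracketed coefficient is strictly negative, concluding $\frac{d m_i^*}{d c_i} \leq 0$. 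You instead compare two types \emph{within a single solution}: subtracting the binding IR identity for type $i$ from the revealed-preference bound $f_j \geq v(a(\bar{m}_i)) - c_j \bar{m}_i$ applied to type $j$'s binding IR gives the scalar inequality $c_i(m_i - \bar{m}_i) \geq c_j(m_j - \bar{m}_i)$, and the conclusion follows by elementary algebra once $m_i \geq \bar{m}_i$ is invoked. Your approach is more elementary (no calculus, no signing of a derivative coefficient) and arguably cleaner, since the paper's comparative statics implicitly holds the other types' contributions fixed while varying $c_i$, a step that requires care to justify; the paper's approach, in exchange, yields extra local information — the sensitivity $\frac{d m_i^*}{d c_i}$ and the fact that the marginal utility $\frac{d \breve{u}_i}{d m_i}(m_i^*) < 0$, i.e., the contract pushes each party strictly past their privately optimal contribution. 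Note that both proofs lean on exactly the same input from Proposition 2, namely $m_i \geq \bar{m}_i$ (you need it to guarantee $m_j > \bar{m}_i$ under the supposition $m_j > m_i$, without which your final chain of inequalities can fail), so your identification of this as the delicate point is accurate.
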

The proof is slightly involved and thus deferred to Appendix A. We conclude this section by noting that the results hold with the goal of maximizing the accuracy of the collectively trained model. In practice, if the coordinator wants to enact additional fairness requirements, such as letting bigger contributors gain more through the CML scheme, she can do so by modifying the IR constraints. We provide such a framework in Appendix B for interested readers.

\section{Contracting with Private Costs}
When data costs are private information of the parties, it could have serious implications on the CML scheme without contract. As we demonstrate in Appendix B, a complete collaboration failure could occur where all parties contribute nothing to the scheme. In other cases, equilibrium does not exist, making the learning outcome unpredictable. Designing a contract helps address these issues but solving the problem defined by (\ref{objective}) and (\ref{constraints}) directly is difficult due to the non-convexity of the constraints and the enormous number of choice variables. Luckily, the problem can be simplified on two fronts to improve its tangibility. Firstly, we transform the original problem into a convex constrained optimization problem with respect to first moments (termed the \textit{first-moment problem}), which significantly reduces the number of choice variables we need to optimize with. We then derive a mapping from the solution of the first-moment problem to one that elegantly solves the original problem. Secondly, we conduct constraint analysis of the first-moment problem, further removing redundant constraints and delineating the properties an optimal contract should satisfy.

\subsection{First-moment problem}
The key to converting the original problem into a first-moment problem lies in the relaxation of the budget constraint. The following provides a necessity result.
\begin{proposition}
    The budget constraint in the original problem implies the budget constraint in first moments. Namely,
    \begin{align*}
        \Vert r(n)\Vert_\infty \leq &  a\left(\sum_{i=1}^I n_i m_i\right) \implies \notag \\  \mathbb{E}_{n_i \geq 1}[v(r_i)] & \leq \mathbb{E}_{n_i \geq 1}\left[v\left(a\left(\sum_{i=1}^I n_i m_i\right)\right)\right], \forall i\ .
    \end{align*}
\end{proposition}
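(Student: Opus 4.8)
The plan is to argue pointwise over realizations $n$ and then pass to conditional expectations, using nothing more than the definition of the $\ell_\infty$ norm and the monotonicity of $v$. Fix an arbitrary type $i$ and restrict attention to the event $\{n_i \geq 1\}$ over which $\mathbb{E}_{n_i \geq 1}[\cdot]$ is taken. The first step is to observe that for every realization $n$ in this event, the $i$-th component of the reward vector is dominated by its sup-norm:
\[
    r_i(n) \leq |r_i(n)| \leq \Vert r(n)\Vert_\infty .
\]
This follows directly from $\Vert r(n)\Vert_\infty = \max_j |r_j(n)|$ and needs no sign assumption on the rewards.

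Second, I would invoke the hypothesis, namely the budget constraint of the original problem, which asserts $\Vert r(n)\Vert_\infty \leq a\!\left(\sum_{j=1}^I n_j m_j\right)$ for every admissible $n$. Chaining this with the previous display gives the pointwise bound $r_i(n) \leq a\!\left(\sum_{j=1}^I n_j m_j\right)$, valid for all $n$ with $n_i \geq 1$. Third, since $v^\prime>0$ the valuation function is increasing, so applying $v$ to both sides preserves the inequality:
\[
    v(r_i(n)) \leq v\!\left(a\!\left(\textstyle\sum_{j=1}^I n_j m_j\right)\right).
\]
Because this holds for every realization in the conditioning event, taking the conditional expectation $\mathbb{E}_{n_i \geq 1}[\cdot]$, which is monotone, preserves it and yields exactly the claimed first-moment budget constraint. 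Quantifying over $i$ completes the argument.

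Since the reasoning reduces to a pointwise comparison followed by two monotone operations (applying $v$, then averaging), there is no genuine analytic difficulty here, and I would not expect to need either the concavity of $a$ or the concavity of $v$. The only place warranting a moment of care is the very first inequality: the sup-norm bounds $|r_i(n)|$ rather than $r_i(n)$ directly, so the chain $r_i(n) \leq |r_i(n)| \leq \Vert r(n)\Vert_\infty$ should be written out explicitly to cover the degenerate possibility of a negative accuracy value. Everything downstream of that observation is routine, and it is worth emphasising that the implication is strictly one-directional: the first-moment inequality is a relaxation, so the converse need not hold, which is precisely what makes it useful for reducing the dimensionality of the optimisation.
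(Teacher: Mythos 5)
Your proof is correct and, in substance, identical to the paper's: the paper packages the argument as a contraposition followed by a contradiction, but inside that wrapper it performs exactly your chain --- the pointwise bound $v(r_i(n)) \leq \Vert v(r(n))\Vert_\infty \leq v\left(a\left(\sum_{j=1}^I n_j m_j\right)\right)$ for every realization $n$, followed by monotonicity of the conditional expectation. Your direct presentation (pointwise domination, then apply increasing $v$, then average) is if anything cleaner than the paper's logically redundant contrapositive framing, and your explicit handling of $r_i(n) \leq |r_i(n)| \leq \Vert r(n)\Vert_\infty$ makes precise a step the paper leaves implicit.
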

The proof is left to Appendix A. With this relaxation, all constraints in the original optimization problem are related to only the first moments of $v(r_i), \forall i$. For notational brevity, we let $t_i \triangleq \mathbb{E}_{n_i \geq 1}[v(r_i)]$, which denotes the expected revenue gained by a type-$i$ party using the rewarded model from the scheme. The first-moment problem is defined as follows:
\begin{align*}
        \max_{\{(t_i, m_i)_{i=1}^{I}\}} & \; \mathbb{E}_{n\sim \mathrm{Multi}(N,p)}\left[a\left(\sum_{i=1}^I n_i m_i\right)\right] \\
    \text{s.t.} & \; \left\{\begin{array}{l}
       t_i - c_i m_i \geq f_i\ , \forall i\ ; \\
       t_i - c_i m_i \geq t_j - c_i m_j\ , \forall i,j\ ; \\
       t_i \leq  \mathbb{E}_{n_i \geq 1}\left[v\left(a\left(\sum_{i=1}^I n_i m_i\right)\right)\right], \forall i\ .
    \end{array}\right.
\end{align*}
Note that instead of optimizing with respect to the distributions $\mathbf{r}_i, \forall i$, we only need to optimize with respect to scalars $t_i, \forall i$ in the first-moment problem. To complete the transformation, we are left to show that there exists a mapping from the solution of the first-moment problem to that of the original problem. We do this by construction.
\begin{proposition}
    Let $(t^*_i, m^*_i)_{i=1}^I$ denote the solution to the first-moment problem, and $\bar{t}_i \triangleq \mathbb{E}_{n_i \geq 1}\left[v\left(a\left(\sum_{i=1}^I n_i m_i\right)\right)\right]$. Then, the following mapping (\textbf{proportional assignment}) maximizes the original problem defined by (\ref{objective}) and (\ref{constraints}):
    \begin{equation}
        r_i(n) = v^{-1}\left(\frac{t_i^*}{\bar{t}_i}v\left(a\left(\sum_{i=1}^I n_i m_i\right)\right)\right). \label{proportional mapping}
    \end{equation}
\end{proposition}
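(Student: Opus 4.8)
The plan is to leverage the structural fact that the objective of both the original and the first-moment problem is the same function of the contribution profile $(m_i)_{i=1}^I$ alone; the reward variables enter only through the constraints. I would therefore argue optimality in three steps: establish that the first-moment problem is a genuine relaxation (so its optimum upper-bounds that of the original), verify that the proportional assignment is feasible for the original problem when paired with the contributions $(m_i^*)$, and then combine these to pin down optimality.

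\emph{Relaxation direction.} Any contract $\mathcal{C}$ feasible for the original problem induces scalars $t_i = \mathbb{E}_{n_i \geq 1}[v(r_i)]$. The necessity result converts the original budget constraint into the first-moment budget constraint, while the original IR and IC constraints already depend on the rewards only through these first moments and hence coincide verbatim with their first-moment counterparts. Thus every original-feasible point maps to a first-moment-feasible point with identical objective value, which gives the upper bound $\mathrm{OPT}_{\mathrm{orig}} \le \mathrm{OPT}_{\mathrm{fm}}$.

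\emph{Feasibility of the proportional assignment.} I would first record its induced first moment. By linearity of expectation and the definition of $\bar{t}_i$,
\begin{equation*}
\mathbb{E}_{n_i \geq 1}\!\left[v\!\left(r_i(n)\right)\right] = \frac{t_i^*}{\bar{t}_i}\,\mathbb{E}_{n_i \geq 1}\!\left[v\!\left(a\!\left(\textstyle\sum_{j=1}^I n_j m_j^*\right)\right)\right] = \frac{t_i^*}{\bar{t}_i}\,\bar{t}_i = t_i^*.
\end{equation*}
Because the induced first moment is exactly $t_i^*$, the original IR and IC constraints reduce to $t_i^* - c_i m_i^* \ge f_i$ and $t_i^* - c_i m_i^* \ge t_j^* - c_i m_j^*$, which hold by feasibility of $(t_i^*, m_i^*)$ in the first-moment problem. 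For the budget constraint I would use that $v' > 0$ makes $v$ strictly increasing and invertible, so $r_i(n) \le a(\sum_{j=1}^I n_j m_j^*)$ is equivalent to $\tfrac{t_i^*}{\bar{t}_i}\,v(a(\cdot)) \le v(a(\cdot))$, i.e.\ to $t_i^*/\bar{t}_i \le 1$; the first-moment budget constraint $t_i^* \le \bar{t}_i$ supplies exactly this, uniformly in $n$ since the scaling factor is a single constant. A quick check that $0 \le t_i^*/\bar{t}_i$ (e.g.\ from $t_i^* \ge f_i \ge 0$) confirms the argument of $v^{-1}$ lies in the range of $v$, so $r_i(n)$ is a well-defined reward.

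\emph{Conclusion and main obstacle.} Since the constructed contract uses the contributions $(m_i^*)$ and the objective depends only on these, it attains objective value $\mathrm{OPT}_{\mathrm{fm}}$; together with feasibility and the bound $\mathrm{OPT}_{\mathrm{orig}} \le \mathrm{OPT}_{\mathrm{fm}}$, this forces the proportional assignment to be optimal for the original problem. I expect the budget constraint to be the only real obstacle, as it is the single condition that must hold pointwise for every realization $n$ rather than merely in expectation; the construction succeeds precisely because proportional scaling by the constant $t_i^*/\bar{t}_i \le 1$ deflates the pointwise ceiling $a(\sum_{j=1}^I n_j m_j^*)$ uniformly across all $n$ while holding the first moment fixed at its target $t_i^*$.
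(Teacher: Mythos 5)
Your proposal is correct and takes essentially the same route as the paper: the paper likewise combines the relaxation direction (its Proposition~4, plus the observation that IR/IC depend on rewards only through first moments) with pointwise feasibility of the proportional assignment, where the budget check reduces via monotonicity of $v$ (the paper's Lemma~1) to the constant scaling factor satisfying $t_i^*/\bar{t}_i \leq 1$. The only difference is one of exposition—you spell out the induced first moment $\mathbb{E}_{n_i \geq 1}[v(r_i(n))] = t_i^*$, the IR/IC verification, and the sandwich bound $\mathrm{OPT}_{\mathrm{orig}} \le \mathrm{OPT}_{\mathrm{fm}}$ explicitly, whereas the paper's proof states only the budget-constraint check and leaves the rest implicit in the surrounding discussion.
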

The proof is deferred to Appendix A, while the idea is intuitive: Regardless of the actual realization of the type distribution, we will assign the realized model rewards according to the ratios defined by the first moments. Note that there exist other assignment policies that solve the original problem while aligning with the first-moment problem solution, while the proportional assignment rule is one that is intuitive and easy to implement in reality.

\subsection{Constraint analysis}
The first-moment problem has in total $(I^2 + I)$ inequality constraints, which gets unwieldy as $I$ rises. As we will see later, many of these constraints turn out to be superfluous, thus inviting a more precise description of the feasible set. 

\subsubsection*{Proportional Fairness.} We start with a useful observation about parties' levels of contribution and rewards in an optimal contract under incomplete information.
\begin{proposition}\label{proportional fairness}
    With incomplete information, a party with lower cost should contribute weakly more data and receive weakly better rewards, i.e., given $c_i > c_{i+1}\ ,$
    \begin{equation*}
        m_{i+1} \geq m_i\quad \text{and}\quad t_{i+1} \geq t_i\ .
    \end{equation*} 
\end{proposition}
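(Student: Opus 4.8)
The plan is to derive both monotonicity claims purely from the incentive-compatibility constraints of the first-moment problem, so that the conclusion in fact holds for every IC-feasible contract and a fortiori for the optimal one. The key structural observation is that, after the reduction to first moments, the IC constraints $t_i - c_i m_i \geq t_j - c_i m_j$ are bilinear in $(t,m)$ and carry the single-crossing (Spence--Mirrlees) structure that drives monotone comparative statics in screening models. First I would isolate the two \emph{adjacent} constraints: the one in which a type-$i$ party mimics the type-$(i+1)$ option, and the one in which a type-$(i+1)$ party mimics the type-$i$ option.

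Concretely, I would write down
\[
t_i - c_i m_i \geq t_{i+1} - c_i m_{i+1} \quad \text{and} \quad t_{i+1} - c_{i+1} m_{i+1} \geq t_i - c_{i+1} m_i,
\]
and add them. The $t$-terms cancel, and the surviving inequality rearranges into the product form $(c_i - c_{i+1})(m_{i+1} - m_i) \geq 0$. Since the types are ordered so that $c_i > c_{i+1}$, the first factor is strictly positive, which forces $m_{i+1} \geq m_i$. This settles the contribution-monotonicity half of the claim.

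For the reward half I would reuse only the second (downward) IC constraint, rearranged as $t_{i+1} - t_i \geq c_{i+1}(m_{i+1} - m_i)$. Because per-unit costs are positive and the previous step already gives $m_{i+1} - m_i \geq 0$, the right-hand side is nonnegative, which yields $t_{i+1} \geq t_i$. Chaining these adjacent inequalities along the cost order $c_1 > \dots > c_I$ then delivers the global monotone pattern.

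The argument is short, and the single point that genuinely needs care is verifying that the first-moment reduction preserves the clean bilinear form of the IC constraints: the information-advantage expectation $\mathbb{E}_{n_j \geq 1}$ appearing on the right-hand side of the original IC constraint is precisely the quantity folded into $t_j \triangleq \mathbb{E}_{n_j \geq 1}[v(r_j)]$, so no residual expectation operator survives to obstruct the cancellation of the $t$-terms. Once this is confirmed there is no real obstacle, and I would emphasize that the result is a consequence of incentive compatibility alone rather than of optimality of the coordinator's objective.
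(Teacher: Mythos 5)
Your proposal is correct and follows essentially the same route as the paper: both sum the two adjacent IC constraints to cancel the $t$-terms and obtain $(c_i - c_{i+1})(m_{i+1}-m_i)\geq 0$, hence $m_{i+1}\geq m_i$. For the reward half, the paper argues by contradiction from the constraint $t_{i+1}-c_{i+1}m_{i+1}\geq t_i - c_{i+1}m_i$ while you rearrange that same constraint directly into $t_{i+1}-t_i \geq c_{i+1}(m_{i+1}-m_i)\geq 0$; these are logically equivalent uses of identical ingredients, with your direct version being marginally cleaner.
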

The proof makes use of the IC constraints and is postponed to Appendix A. Importantly, Proposition \ref{proportional fairness} establishes a notion of fairness as a necessary condition of an optimal contract, suggesting that a bigger contributor must receive a better reward. This differs from incentive mechanism designs using cooperative game theory (CGT) where this property is treated as a desideratum based on which one devises a reward scheme \citep{sim_collaborative_2020}.

\subsubsection*{Adjacent Comparisons Constraints.}
As each participant needs to compare their tailored option with the rest $(I-1)$ option, we have a copious set of $I(I-1)$ incentive compatibility constraints. The following result shows that the actual key constraints is much smaller in number. The proof is involved and is left to Appendix A for interested readers.
\begin{theorem}
\label{equivalent constraints theorem}
    With incomplete information, the following two sets of constraints are equivalent:
    \begin{enumerate}[label=(\alph*)] 
      \item $t_i - c_i m_i \geq t_j - c_j m_j, \forall i, j\ ;$
      \item $\left\{\begin{array}{l}
      m_{i+1} \geq m_{i}, \forall i \in\{1, \dots, I-1\}\ ;\\
      t_i - c_i m_i = t_{i-1} - c_i m_{i-1}, \forall i \in \{2, \dots, I\}\ .
      \end{array}\right.$
    \end{enumerate}
\end{theorem}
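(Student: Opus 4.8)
The plan is to treat this as the standard ``local incentive compatibility plus monotonicity characterizes global incentive compatibility'' result under a single-crossing condition, here supplied by the decreasing cost ordering $c_1 > \cdots > c_I$. I read constraint set (a) as the full family of incentive constraints $t_i - c_i m_i \ge t_j - c_i m_j$ for all $i,j$ (the deviator retains their own cost $c_i$), matching the first-moment problem. Since (b) contains equalities while (a) contains only inequalities, the equivalence is to be understood within the optimization: feasibility under (b) implies feasibility under (a), whereas the reverse—recovering the binding equalities—invokes the objective. I would prove the two directions separately, the forward one by a telescoping argument and the reverse one by first reducing to adjacent constraints and then appealing to optimality.

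For (b) $\Rightarrow$ (a), I would start from the adjacent equalities written as $t_k - t_{k-1} = c_k(m_k - m_{k-1})$, which are nonnegative increments because monotonicity gives $m_k \ge m_{k-1}$ and $c_k>0$; hence $t$ is itself nondecreasing. To verify $t_i - t_j \ge c_i(m_i - m_j)$ for arbitrary $i,j$, I would telescope $t_i - t_j = \sum_{k} c_k(m_k - m_{k-1})$ over the index range between $j$ and $i$ and bound each coefficient $c_k$ against $c_i$ using the cost ordering: for a downward deviation ($j<i$) every $c_k \ge c_i$, so the sum is at least $c_i(m_i-m_j)$; for an upward deviation ($j>i$) every $c_k \le c_i$, so the sum is at most $c_i(m_j-m_i)$. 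Either way the global constraint in (a) follows, monotonicity guaranteeing that all increments keep their sign.

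For (a) $\Rightarrow$ (b) I would proceed in two steps. Monotonicity $m_{i+1}\ge m_i$ follows by adding the two adjacent constraints between types $i$ and $i+1$, which collapses to $(c_i-c_{i+1})(m_{i+1}-m_i)\ge 0$ and then uses $c_i>c_{i+1}$; alternatively this is exactly Proposition \ref{proportional fairness}. The same telescoping then shows every non-adjacent constraint in (a) is implied by the adjacent ones together with monotonicity, so the adjacent comparisons are the only active incentive constraints. It remains to upgrade the adjacent inequalities $t_i - c_i m_i \ge t_{i-1} - c_i m_{i-1}$ to equalities. Here I would argue from optimality: the objective depends only on $m$ and is strictly increasing in each $m_i$, while the $t_i$ are absent from the objective and appear in the budget constraints $t_i \le \bar t_i(m)$ only as upper-bounded quantities, so driving them down to the smallest values permitted by IR and IC can only relax the budget and weakly enlarge the feasible $m$. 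Minimizing rents from the highest-cost type downward pins $t_1$ at its IR floor and then forces each $t_i$ down until its adjacent-downward constraint binds, yielding $t_i - c_i m_i = t_{i-1} - c_i m_{i-1}$.

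The main obstacle is this last binding step. I must rule out the alternative in which lowering $t_i$ is halted by type $i$'s own individual-rationality constraint rather than by its adjacent incentive constraint, for then the equality in (b) would fail. Controlling this requires the specific structure of the reservation utilities: Proposition 1 tells us lower cost yields higher $f_i$, and I would need to show, using $f_i = \max_m v(a(m)) - c_i m$, that the highest-cost type's IR is the binding floor and that every other type's IR is implied by that floor together with the chain of adjacent incentive constraints. Verifying that rent minimization is simultaneously compatible with monotonicity and with these implied IR constraints is the delicate part; the telescoping identities and the single-crossing ordering are the tools I expect to carry it through.
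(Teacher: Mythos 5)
Your reading of (a) as the IC family $t_i - c_i m_i \ge t_j - c_i m_j$ (the printed $c_j m_j$ is a typo) matches the paper's intent, and two of your three steps are essentially the paper's own: the telescoping proof of (b)~$\Rightarrow$~(a) is Lemma~\ref{ACCs} combined with the observation that binding downward constraints plus monotonicity imply the upward ones, and the summation trick giving $m_{i+1}\ge m_i$ is Proposition~\ref{proportional fairness}. You are also right that the equalities in (b) can only be recovered from optimality. The genuine gap is the one you flagged yourself: the binding step. Your mechanism---driving each $t_i$ down, with the $m_i$ held fixed, until the downward adjacent constraint binds---cannot be completed, and the repair you propose is unavailable. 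The implication you would need (type~$1$'s IR floor plus the adjacent IC chain implies the IRs of types $2,\dots,I$) is false in general: once the downward constraints bind, $t_i - c_i m_i = (t_1 - c_1 m_1) + \sum_{k=2}^{i}(c_{k-1}-c_k)\,m_{k-1}$, whereas $f_i - f_1 = \sum_{k=2}^{i}\int_{c_k}^{c_{k-1}}\bar m(c)\,dc$ with $\bar m(\cdot)$ the standalone contribution, so the implication requires each $m_{k-1}$ to be roughly as large as the standalone contributions on $[c_k,c_{k-1}]$---something neither (a) nor monotonicity delivers, and something that fails exactly in the disincentivized-contribution regime the paper's experiments exhibit. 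This is precisely why the simplified first-moment problem retains the IR constraints for types $2,\dots,I$ as separate constraints rather than discarding them as implied. A concrete stall: the feasible contract that gives every type its standalone bundle, $(\bar t_i, \bar m_i)$ with $\bar t_i = v(a(\bar m_i))$, satisfies (a) with every IR binding and every downward adjacent constraint strictly slack; your rent-lowering procedure cannot move from this point at all (any decrease in any $t_i$ violates IR), so it cannot be the tool that rules such configurations out as optima. Two further difficulties: lowering $t_i$ with the $m$'s fixed never changes the objective, which depends only on the $m$'s, so by itself it produces no contradiction with optimality; and it can break type $i$'s upward constraint against type $i+1$, forcing a cascade of adjustments.

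What the paper does instead---and what you would need to adopt to close the argument---is a local improvement on the \emph{adjacent lower type's bundle} rather than on $t_i$ (Lemma~\ref{binding DACs}, proved by downward induction from type $I$). If $t_i - c_i m_i > t_{i-1} - c_i m_{i-1}$ at a purported optimum, replace $(t_{i-1},m_{i-1})$ by the point $(\hat t_{i-1},\hat m_{i-1})$ where type $(i-1)$'s isoprofit line $\hat t_{i-1} - c_{i-1}\hat m_{i-1} = t_{i-1}-c_{i-1}m_{i-1}$ meets type $i$'s indifference line $\hat t_{i-1} - c_i \hat m_{i-1} = t_i - c_i m_i$. The slack forces $\hat m_{i-1} > m_{i-1}$, which strictly raises expected accuracy, while type $(i-1)$'s utility (hence IR) is untouched, monotonicity is preserved ($\hat m_{i-1}\le m_i$), and the only nontrivial IC check is type $(i-2)$'s upward constraint, which survives because $c_{i-2}>c_{i-1}$. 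Crucially, this contradiction is reached regardless of which of type $i$'s constraints happen to bind---exactly the case analysis your rent-reduction route cannot absorb. With that lemma in place of your binding step, the theorem follows from the two pieces you already have.
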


\subsubsection*{Weak Efficiency.}
Proposition \ref{proportional fairness} establishes one connection with incentive mechanism designs in CGT. Here, we show another shared feature between the paradigms.
\begin{proposition}
    With incomplete information, the most cost-efficient type (party with the lowest variable cost) must be offered the best model, i.e., \begin{equation*}
        t_I = \mathbb{E}_{n_I \geq 1}\left[v\left(a\left(\sum_{i=1}^I n_i m_i\right)\right)\right].
    \end{equation*}
\end{proposition}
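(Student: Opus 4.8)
The plan is to argue by contradiction using a local perturbation of an optimal contract. Suppose $(t_i^*, m_i^*)_{i=1}^I$ is optimal for the first-moment problem, yet the budget constraint of the lowest-cost type is slack, i.e. $t_I^* < \bar t_I \triangleq \mathbb{E}_{n_I \geq 1}[v(a(\sum_{i=1}^I n_i m_i^*))]$. I would then exhibit a feasible contract that strictly increases the objective, contradicting optimality, and thereby conclude that the budget constraint for type $I$ must bind.

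The perturbation I would use increases the contribution of the cheapest type by a small amount $\delta > 0$, setting $m_I \leftarrow m_I^* + \delta$, while simultaneously raising its promised revenue by $t_I \leftarrow t_I^* + c_I \delta$, so that type $I$'s net utility $t_I - c_I m_I$ is held fixed; all other pairs $(t_j^*, m_j^*)$ are left unchanged. The reason for coupling the two moves is to keep every constraint in which type $I$'s own rent appears on the favourable side exactly as before.

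Next I would verify feasibility of the perturbed contract term by term. Because $t_I - c_I m_I$ is unchanged, type $I$'s IR constraint and the IC constraints in which type $I$ is the potential deviator hold with the same slack; monotonicity $m_I \geq m_{I-1}$ is preserved, since $m_I^*$ is already the largest contribution by Proposition~\ref{proportional fairness} and is only increased. The IC constraints in which a higher-cost type $j < I$ could mimic type $I$, namely $t_j - c_j m_j \geq t_I - c_j m_I$, are actually relaxed: their right-hand side changes by $(c_I - c_j)\delta < 0$ because $c_I$ is the smallest cost. The budget constraints of the remaining types are also relaxed, as increasing $m_I$ raises every $\bar t_j$ while leaving $t_j$ fixed. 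The only constraint that could bind is type $I$'s own budget constraint, and here the strict slack $t_I^* < \bar t_I$, together with the fact that $\bar t_I$ is nondecreasing in $m_I$, guarantees that $t_I^* + c_I \delta \leq \bar t_I$ for all sufficiently small $\delta$.

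Finally, I would show the objective strictly improves: increasing $m_I$ by $\delta$ changes the objective to $\mathbb{E}[a(\sum_i n_i m_i^* + n_I \delta)]$, which is strictly larger since $a' > 0$ and $n_I \geq 1$ with positive probability under $\mathrm{Mul}(N,p)$. This contradicts the optimality of $(t_i^*, m_i^*)$, forcing $t_I = \bar t_I$. I expect the delicate step to be the bookkeeping of type $I$'s budget constraint after the perturbation — ensuring that the preexisting strict slack dominates the $c_I\delta$ increase in $t_I$ for small $\delta$ — while confirming that none of the $I(I-1)$ IC constraints is inadvertently violated; the key simplification there is that type $I$ being the cheapest type makes every mimicry-of-$I$ constraint move in the safe direction.
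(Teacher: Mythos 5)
Your proof is correct, but it takes a genuinely different route from the paper. The paper proves this proposition via KKT machinery: it forms the Lagrangian of the (adjacent-comparison-reduced) first-moment problem, writes out stationarity, primal/dual feasibility, and complementary slackness, then assumes the type-$I$ budget constraint is slack, deduces $\mu_I=0$, substitutes the $t_I$-stationarity equation into the $m_I$-stationarity equation, and shows the latter's left-hand side is strictly positive---a contradiction. You instead give a direct primal improvement argument: perturb along type $I$'s indifference line, $m_I \mapsto m_I^*+\delta$, $t_I \mapsto t_I^*+c_I\delta$, verify every IR, IC, and budget constraint survives (the mimicry-of-$I$ constraints relax because $c_j>c_I$; the other types' budget constraints relax because $\bar t_j$ is increasing in $m_I$; type $I$'s own budget constraint tolerates $\delta \leq (\bar t_I - t_I^*)/c_I$ by the assumed strict slack and monotonicity of $\bar t_I$ in $m_I$), and conclude the objective strictly increases since $a'>0$ and $n_I\geq 1$ with positive probability. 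Your checks are all sound, and in fact your route is more elementary and self-contained: it needs no Lagrangian and sidesteps the question of whether a constraint qualification holds (which the paper's reliance on KKT necessity implicitly requires but never verifies); it is also stylistically consonant with the improvement-construction arguments the paper itself uses for the binding downward-adjacent-comparison lemma and the break-even lemmas. What the paper's heavier approach buys is reuse: the immediately following corollary (that some IR constraint must bind) is proved directly from the same stationarity equations, so the Lagrangian infrastructure does double duty, whereas your perturbation establishes only this one statement. One bookkeeping note: your strict-improvement step tacitly uses $p_I>0$ so that $n_I\geq 1$ occurs with positive probability, a standing (if implicit) assumption of the paper.
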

This result aligns with the concept of weak efficiency proposed by mechanism design studies using CGT \citep{sim_collaborative_2020}. Namely, when employing only model rewards to incentivize collaborative machine learning, the best model must be awarded to one of the participants. Like Proposition \ref{proportional fairness}, weak efficiency in this context is attained as a necessary condition rather than being treated as a desideratum.

\subsubsection*{Highest-Cost Type Break Even Condition.}
Finally, we show that the IR condition must bind for participants with the highest per-unit cost, leaving them no rent to be gained.
\begin{proposition}
\label{break even condtion}
    With incomplete information, a type-$1$ party would obtain utilities no greater than their reservation level if choosing options designed for the other types. In addition, the individual rationality constraint must bind for parties of type $1$. Namely,
    \begin{align*}
        t_j - c_1 m_j \leq & f_1 , \forall j \in \{2, \dots, I\}\ , \\
        t_1 - c_1 m_1 = & f_1\ .
    \end{align*}
\end{proposition}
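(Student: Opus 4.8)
The plan is to take the binding of type-$1$'s individual rationality (IR) constraint, $t_1 - c_1 m_1 = f_1$, as the heart of the proposition and to read off the family of inequalities $t_j - c_1 m_j \leq f_1$ as a corollary. The corollary is immediate: the IC constraint guarding against a type-$1$ party mimicking type $j$ states $t_1 - c_1 m_1 \geq t_j - c_1 m_j$, so once the IR binds I may substitute to obtain $t_j - c_1 m_j \leq t_1 - c_1 m_1 = f_1$ for every $j \in \{2,\dots,I\}$. If a self-contained derivation of that IC inequality is wanted, I would instead recover it from Theorem \ref{equivalent constraints theorem}: telescoping the adjacent equalities $t_k - c_k m_k = t_{k-1} - c_k m_{k-1}$ yields $t_j - c_1 m_j - (t_1 - c_1 m_1) = \sum_{k=2}^{j}(c_k - c_1)(m_k - m_{k-1})$, and since $c_1 > c_k$ for $k \geq 2$ while $m_k \geq m_{k-1}$ by monotonicity, every summand is nonpositive.

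It thus remains to establish $t_1 - c_1 m_1 = f_1$. As the IR constraint already provides $t_1 - c_1 m_1 \geq f_1$, I would proceed by contradiction: assume the optimal contract grants the least efficient type strictly positive rent, $t_1 - c_1 m_1 > f_1$, and produce a feasible perturbation that strictly increases the objective $\mathbb{E}[a(\sum_i n_i m_i)]$. The economic intuition is that slack in the worst type's participation constraint is never optimal, because it can always be traded for more data, and more data strictly raises expected accuracy since $a$ is strictly increasing.

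To turn this into a concrete construction I would exploit the structure already fixed by the preceding results. By Theorem \ref{equivalent constraints theorem} the downward-local IC constraints bind, $t_i - c_i m_i = t_{i-1} - c_i m_{i-1}$, so the reward vector is pinned down by the contribution profile up to one free level; and by the Weak Efficiency proposition the top budget binds, $t_I = \bar t_I$, where $\bar t_I = \mathbb{E}_{n_I \geq 1}[v(a(\sum_i n_i m_i))]$ depends only on the profile $m$. Combining the two, $t_1 = \bar t_I - \sum_{k=2}^{I} c_k(m_k - m_{k-1})$, so that type-$1$'s rent is a function of $m$ alone. I would then move $m$ in a monotonicity-preserving direction that raises the total expected contribution and hence the objective, the simplest candidate being a uniform shift $m_i \mapsto m_i + \delta$ (which leaves all differences $m_k - m_{k-1}$, and therefore the whole IC chain, untouched while enlarging the envelope $\bar t_I$ through the increasing map $a$), re-derive the induced rewards, and use the assumed strict slack in type-$1$'s IR to absorb the first-order change for sufficiently small $\delta$.

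The step I expect to be the real obstacle is verifying that such a perturbation stays feasible. Because the budget binds at the top and the IC equalities couple every reward, no single type's contribution can be adjusted in isolation: moving $m$ shifts all of the $t_i$ at once, and I must check that none of the remaining IR constraints $t_i - c_i m_i \geq f_i$ nor the lower budget constraints $t_i \leq \bar t_i$ are violated, which in turn dictates how the feasible improving direction must be chosen (a naive uniform shift can fail, for instance, if a higher type's own IR also binds). Controlling these movements will draw on the rent and contribution ordering of Proposition \ref{proportional fairness} to locate where slack resides, on the concavity of $a$ and $v$ to bound the first-order motion of the budget envelopes, and on the ordering of reservation levels from the earlier individual-training proposition. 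A final wrinkle is pooling at the bottom ($m_1 = m_2$), where type-$1$'s upward comparison binds and reducing its reward alone would break IC; there I would perturb the pooled block jointly, lowering its common reward level so as to preserve the mutual IC equalities while driving type-$1$'s IR to equality.
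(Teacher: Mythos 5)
Your reduction of the first family of inequalities to the binding IR is fine: once $t_1 - c_1 m_1 = f_1$ is established, the type-$1$ IC constraints $t_1 - c_1 m_1 \geq t_j - c_1 m_j$ (or equivalently your telescoping of the binding adjacent equalities) give $t_j - c_1 m_j \leq f_1$ for all $j\geq 2$ immediately. The genuine gap is in the step you yourself flag as the obstacle: proving $t_1 - c_1 m_1 = f_1$. Your perturbation plan fails in exactly the configurations that make this proposition hard. If a higher type's IR binds while type $1$ retains slack (a configuration that is feasible --- e.g.\ a pooling contract in which $t_I - c_I m_I = f_I$ but $t_1 - c_1 m_1 > f_1$ --- and which your argument must rule out rather than assume away), then the uniform shift $m_i \mapsto m_i + \delta$ with rewards re-anchored at $t_I = \bar{t}_I(m+\delta)$ lowers that type's utility by $c_i\delta$ minus the envelope growth, and nothing guarantees the envelope grows fast enough; you acknowledge this but supply no replacement direction. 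Your fallback for the pooled-at-the-bottom case --- lowering the block's common reward to drive type-$1$'s IR to equality --- cannot produce the required contradiction at all: the objective $\mathbb{E}\left[a\left(\sum_i n_i m_i\right)\right]$ depends only on the contributions, so a reward-only perturbation leaves the objective unchanged and therefore never shows that the original slack contract is suboptimal (and it violates the IR of the highest type in the block whenever that constraint binds).

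The paper closes this hole with global constructions rather than local perturbations, and with a case analysis you do not have. It studies the sign pattern of $t_i - c_1 m_i - f_1$ across \emph{all} types, shows the set where it is strictly positive must be downward-closed (the ``qualifying contract solutions'' lemma), and then eliminates both remaining patterns by exhibiting explicit pooling contracts with strictly larger contributions: one anchored to type-$(K+1)$'s unchanged option when only types $1,\dots,K$ have positive sign, and one anchored at the unchanged top reward $t_I$ when every type does. Verifying IR feasibility of these pooled contracts is where the substantive work lies and requires two ingredients absent from your proposal: convexity of the reservation utility $f(\cdot)$ as a function of cost (a Jensen-type lemma), and the lower bound $t_I \geq \frac{c_1}{c_1 - c_I} f_I - \frac{c_I}{c_1 - c_I} f_1$ on the top reward (the ``best model condition,'' itself proven by a separate pooling argument), which is precisely what excludes the binding-top-IR scenario that defeats your perturbation. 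So your economic intuition --- slack at the bottom can always be traded for more data --- is also the paper's, but the proposal is missing the threshold case analysis and the feasibility machinery that turn that intuition into a proof.
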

In other words, the optimal contract that maximizes the expected accuracy of the collectively trained model should make participants with the highest per-unit cost of contribution indifferent between participating and opting out.
\subsubsection*{Simplified First-Moment Problem.}
With the above simplifications, the first-moment problem becomes
\begin{align}
    & \max_{\{(t_i, m_i)_{i=1}^{I}\}} \; \mathbb{E}_{n\sim \mathrm{Mul}(N,p)} \left[ a\left(\sum_{i=1}^I n_i m_i\right)\right] \label{the simplified constrained optimization problem objective} \\
    & \text{s.t.}  \left\{\begin{array}{l}
       t_1 - c_1 m_1 - f_1 = 0\ ; \\
       t_i  \leq \mathbb{E}_{n_i \geq 1} \left[ v\left(a\left(\sum_{i=1}^I n_i m_i\right)\right)\right], \forall i \in \mathcal{I}\ ;\\
       t_i - c_i m_i = t_{i-1} - c_i m_{i-1},  \forall i \in \{2, \dots, I\} ; \\
       m_i \geq m_{i-1}, \forall  i \in \{2, \dots, I\}\ ;\\
       t_i - c_i m_i - f_i \geq 0,   \forall i \in \{2, \dots, I\}\ . 
    \end{array}\right. \label{the simplified constrained optimization problem}
\end{align}
We keep the inequality of the budget constraint for type-$I$ parties so that the resulting problem is still convex (cf. Appendix B). The set of constraints in (\ref{the simplified constrained optimization problem}) fully specifies model rewards as a function of contributions:
\begin{equation*}
    t_i = \left\{\begin{array}{ll}
         f_1 + c_1 m_1 & \text{if}\ i = 1\ ; \\
         f_1 + c_1 m_1 + \sum_{k=2}^i c_k (m_{k} - m_{k-1}) & \text{if}\ i > 1\ . 
    \end{array}\right.
\end{equation*}

The problem defined by (\ref{the simplified constrained optimization problem objective}) and (\ref{the simplified constrained optimization problem}) is convex and can be solved by numerical optimization methods, such as the trust-region interior-point algorithm, which works by staying away from the boundary of the feasible region defined by the inequality constraints and weakening the barrier effects as the estimate of the solution gets increasingly accurate \citep{nocedal_numerical_2006}.

\section{Experiments}
To gain numerical insights into optimal contract design, we conduct a series of experiments with specified forms of the accuracy function and the valuation function. Following \citet{karimireddy_mechanisms_2022}, we adopt the standard generalization bound \citep{Mohri_2018} as the accuracy function, expressed as follows:
\begin{equation*}
    a(m) := \max \left\{0, a_{opt} - \frac{\sqrt{2k(2+\log(m/k))+4}}{\sqrt{m}}\right\},
\end{equation*}
where $m$ measures the quantity of data used for model training; $a_{opt}$ is the optimal accuracy achievable by the model, and $k$ captures the difficulty of the learning task. Arguably, this choice of $a(m)$ itself is not concave due to its piece-wise nature, but this non-concavity can be addressed via the concavity of each piece of the function. We set $k = 1$ and $a_{opt} = 1$ for the experiments. We assume a constant return to the model accuracy, $v(x) := 100 x$, so a model with perfect accuracy is worth $100$ in monetary terms.

\subsection{Two-type Case}
\begin{figure}[t]
    \centering
    \includegraphics[width=\linewidth]{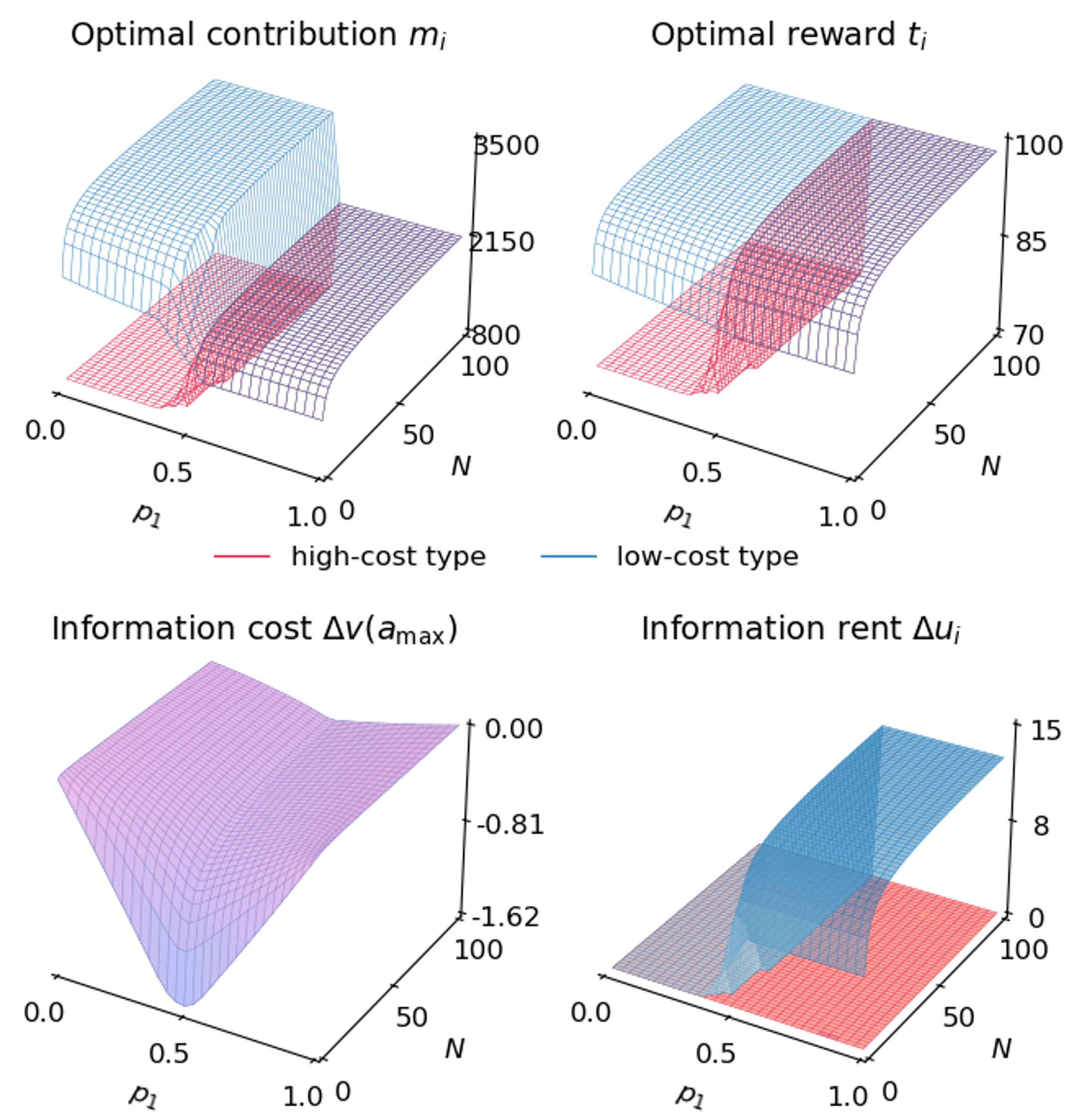}
    \caption{\textbf{Top}: Optimal contracts under incomplete information for varied probability of high-cost type $p_1 \in (0,1)$ and total number of participants $N \in [2, 100]$, with $c=\{0.02, 0.01\}$. \textbf{Bottom}: Information costs for the coordinator and information rents for the parties under incomplete information vis-à-vis complete information.}
    \label{fig:Two-type}
\end{figure}
We first focus on the case where there are two private types of parties, $\mathcal{I} = \{1,2\}$, to analyze how contextual factors such as the total number of participants $N$ and the type distribution $p$ affect the design of optimal contracts. We specify the per-unit data cost for the high-cost type as $c_1 = 0.02$ and that for the low-cost type as $c_2 = 0.01$. In this setting, both types would have initial incentives to train a model on their own without the CML scheme—the high-cost type would use $715.6$ units of data and obtain a model valued at $69.6$ in monetary terms, and the low-cost type would use $1148.5$ units of data to achieve a model reward worth $75.6$. The top panel of Figure \ref{fig:Two-type} depicts the optimal contract for the CML scheme under incomplete information for varied probability of high-cost type $p_1 \in [0,1]$ and total number of participants $N \in [2, 100]$, the solutions of which are obtained by solving the corresponding first-moment problems. Under the incentivized CML scheme, both types contribute more data and obtain better models than their reservation levels. For contract design, ceteris paribus, a higher probability of the high-cost type in the population makes it less favorable to create distinct contract options for the types—making a pooling contract more likely to be optimal from the coordinator's perspective. In contrast, the total number of participants has a relatively marginal effect. When all other factors are held constant, a larger participant pool leads to greater differentiation between the options in a separating contract.

To gauge the welfare implications of the information asymmetry, we calculate the information cost and information rent under incomplete information. The \textit{information cost}, $\Delta v(a_{\max})$, is defined as the expected difference between the value of the collectively trained model under incomplete information and that under complete information:
\begin{equation*}
    \Delta v(a_{\max}) = \mathbb{E}\left[t_C - v\left(a\left(\sum_{i=1}^I  n_i m_i^\mathrm{complete}\right)\right)\right],
\end{equation*}
where $t_C = v(a(\sum_{i=1}^I  n_i m_i))$ is the value of the collectively trained model under incomplete information, and $m_i^\mathrm{complete}$ is the required data contribution from a type-$i$ party under complete information, which varies with different realizations of $n$. Similarly, the \textit{information rent}, $\Delta u_i$, is defined as the expected utility surplus of the party under incomplete information vis-à-vis complete information:
\begin{equation*}
    \Delta u_i = \mathbb{E}\left[u_i(t_i, m_i) - u_i\left(v(r_i^\mathrm{complete}), m_i^\mathrm{complete}\right)\right],
\end{equation*}
where $(t_i, m_i)$ is the contract option designed for a type-$i$ party under incomplete information and $v(r_i^\mathrm{complete})$ is the model reward given to type-$i$ under complete information, which depends on the realization of $n$. The bottom panel of Figure \ref{fig:Two-type} shows the information costs and information rents for the two-type setting for varied $p_1$ and $N$. Aligned with general intuition, information asymmetry affects the collaboratively trained model most conspicuously when the number of participants is low and the probabilities of different types are similar. When one private type becomes dominant in the population, the optimal contract design accommodates by asking the corresponding type to contribute more at the sacrifice of the contribution from the other type, narrowing the gap with the solution under complete information. The low-cost type earns information rent under pooling contracts due to its ability to satisfy the stipulated contribution requirements at a lower cost than its high-cost counterparts.

\subsection{Multi-type Case}
\begin{figure}[!htb]
    \centering
    \includegraphics[width=\linewidth]{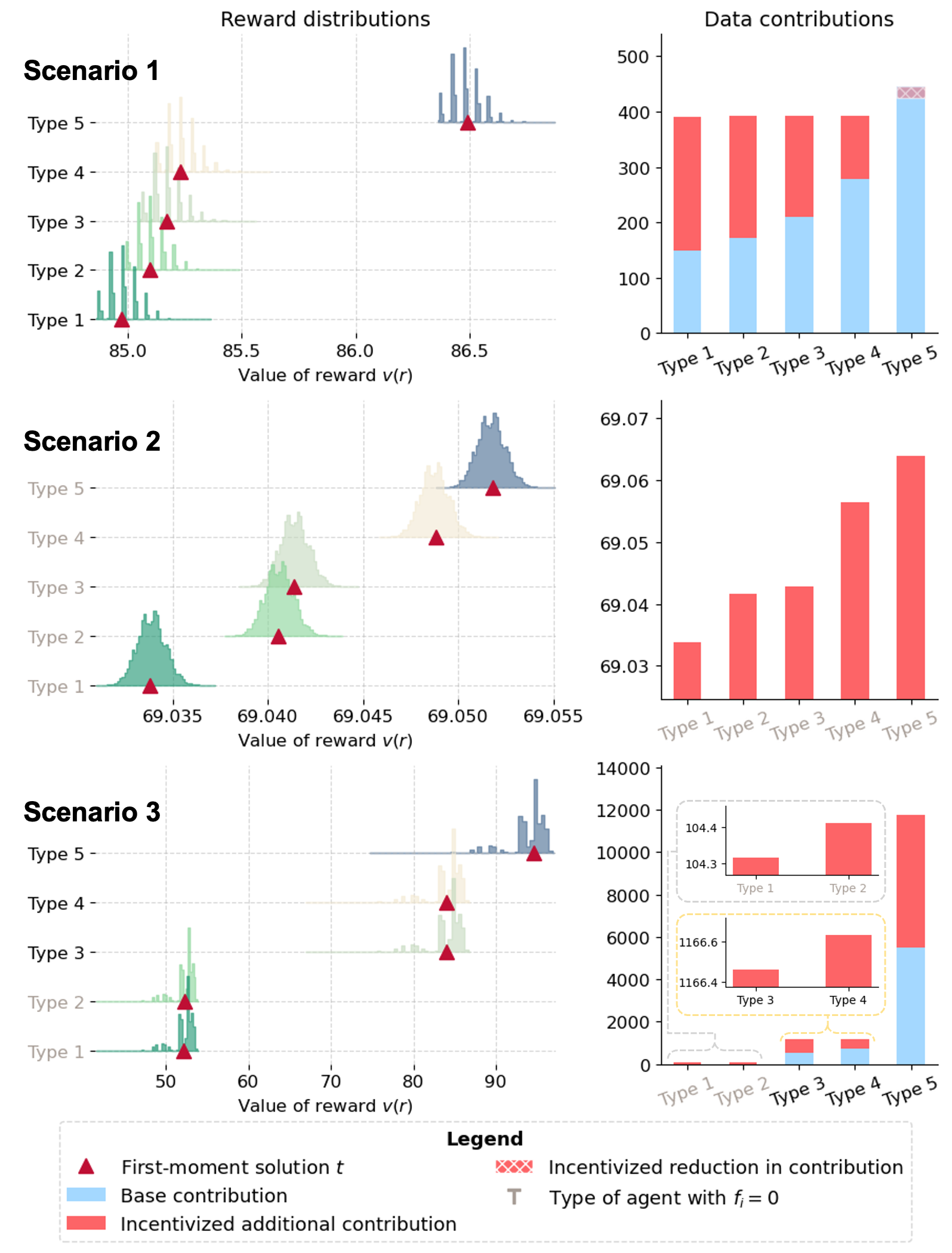}
    \caption{Optimal contract designs for multi-type scenarios. \textbf{Scenario 1}: All types would train a model on their own. \textbf{Scenario 2}: All types would not train a model on their own due to prohibitive costs. \textbf{Scenario 3}: Some types  would train the model on their own and others would not.}
    \label{fig:Multitype case}
\end{figure}
Next, we consider more general cases where there are more than two private types in the population. For the ease of comparisons, we set $N=10$ and $I=5$, $p_i = 0.2, \forall i \in \mathcal{I}$ but vary the private costs $c$. We consider three different scenarios: 1) all types find it in their interest to train a model on their own, with $c=\{0.2, 0.16, 0.12, 0.08, 0.04\}$; 2) all types would not train a model on their own due to high per-unit costs, with $c = \{1, 0.85, 0.7, 0.55, 0.4\}$; 3) some types would train the model on their own and others would not, with $c = \{0.5, 0.4, 0.03, 0.02, 0.001\}$. Figure \ref{fig:Multitype case} shows the simulation results for the three scenarios. We highlight some of the key observations below.

\paragraph{A party may be incentivized to contribute less than their reservation level.} This happens with the lowest-cost type (Type 5) in the first simulated scenario. This intriguing result is partially driven by the incentive compatibility constraint—as a higher contribution requirement would cause the type to deviate to the contract option designed for the adjacent type (cf. Appendix B for a graphical illustration). 

\paragraph{Incentivized collaborative scheme can democratize machine learning.} Row 2 of Figure \ref{fig:Multitype case} considers the case when the difficulty of the learning task prohibits all types from training a model on their own. With an effectively designed contract, this hurtle is overcome, with the cost of data collection shared among the participants and each of them receiving a model with decent accuracy as the reward.

\paragraph{In the presence of dominant players, small players can still gain from collaboration.} The last scenario illustrated in Figure \ref{fig:Multitype case} conveys the idea that incentivized collaboration is still possible despite significant differences in cost structures among the participants. In the simulated scenario, the per-unit data contribution cost of a type-1 party is 500 times the cost of a type-5 party. Yet, by contributing $104.3$ units of data, a type-1 party can receive a model reward valued at $52.2$, showing the trickle-down effect of the collaboration.

\section{Conclusion}
In this work, we consider optimal contract design for CML with models as the rewards. We convert the original non-convex problem of optimizing with reward distributions into one solvable through convex constrained optimization algorithms. Our constraint analysis establishes the necessary conditions for an optimal contract. We further demonstrate the framework through numerical experiments, showing its ability to overcome high cost of model training and improve participant welfare. Our findings highlight that optimal contract design is a viable tool for democratizing future technology in an incentive-driven economy. Future research could explore relaxing the distribution assumption to improve scalability. For a detailed discussion on future research directions, we refer interested readers to Appendix B. 

\section*{Acknowledgments}
This research is supported by the National Research Foundation Singapore and DSO National Laboratories under the AI Singapore Programme (AISG Award No: AISG2-RP-2020-018). The authors extend their gratitude to Dr. Wenjie Feng, Dr. Wenbo Zhao, Mingzhe Du, and four anonymous reviewers for their invaluable feedback on this work.

\bibliography{aaai25}
\appendix

\onecolumn
\section{Theoretical Proofs}
\subsection{Related Work}
\setcounter{proposition}{-1}
In the related work section, we posit that maximizing the accuracy of the collectively trained model and maximizing the total amount of collected data in general lead to different optimal contracts under incomplete information. We show it through the following proposition.
\begin{proposition}
\label{non-equivalence}
    Given that $x$ is a vector of choice variables and $g(x;\theta)$ is a linear function of $x$, where $\theta$ is a vector of random variables. Suppose that $f(\cdot)$ is an increasing function. Then,
    \begin{equation}
        \arg\max_x \mathbb{E}_\theta[g(x;\theta)] \not
        \Rightarrow \arg\max_x \mathbb{E}_\theta[f(g(x;\theta))].
    \end{equation}
\end{proposition}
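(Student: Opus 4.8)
The plan is to prove the non-implication by constructing an explicit counterexample, since the statement asserts only that a maximizer of $\mathbb{E}_\theta[g(x;\theta)]$ \emph{need not} maximize $\mathbb{E}_\theta[f(g(x;\theta))]$. The conceptual engine is Jensen's inequality: because $f$ is nonlinear, $\mathbb{E}_\theta[f(g)]$ differs from $f(\mathbb{E}_\theta[g])$ by a term reflecting the \emph{dispersion} of the random variable $g(x;\theta)$, and this dispersion generally varies with $x$. The linear objective $\mathbb{E}_\theta[g(x;\theta)]$ sees only the mean of $g$ and is blind to how $x$ reshapes its distribution, whereas the transformed objective trades mean against spread. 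This is precisely what lets the two $\arg\max$ sets come apart.

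Concretely, I would take the simplest nontrivial randomness: let $\theta$ take two values with equal probability and set $g(x;\theta)=x_1+\theta x_2$ over a compact feasible box $x_1,x_2\in[0,1]$ with $\theta\in\{-1,+1\}$. Then $\mathbb{E}_\theta[g(x;\theta)]=x_1$, so every point with $x_1=1$ (any $x_2$) maximizes the linear objective; in particular $x^\star=(1,1)$ is a linear maximizer. For a strictly concave increasing $f$ (e.g.\ $f(y)=\sqrt{y}$), one has $\mathbb{E}_\theta[f(g(x;\theta))]=\tfrac12 f(x_1+x_2)+\tfrac12 f(x_1-x_2)$; fixing $x_1=1$ and differentiating in $x_2$ gives $\tfrac12 f'(1+x_2)-\tfrac12 f'(1-x_2)<0$ for $x_2\in(0,1]$ by strict concavity, so the transformed objective is uniquely maximized at $x_2=0$. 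Hence $x^\star=(1,1)$ maximizes $\mathbb{E}_\theta[g]$ but is strictly suboptimal for $\mathbb{E}_\theta[f(g)]$ (its value $\tfrac{\sqrt2}{2}$ falls short of the value $1$ attained at $(1,0)$), establishing the non-implication.

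To make the counterexample faithful to the contract setting, I would then note that the same phenomenon arises with $\theta=n\sim\mathrm{Mul}(N,p)$ and $g(x;n)=\sum_i n_i m_i$: here $\mathbb{E}[g]=N\sum_i p_i m_i$ depends only on the $p$-weighted sum of the contributions, so it is indifferent to any reallocation of required contribution across types that preserves this weighted sum, whereas $\mathbb{E}[f(g)]$ with $f=v\circ a$ is not, because such reallocations change the variance of the collected-data total across realizations of $n$. Choosing two feasible contracts with equal $p$-weighted contribution but different induced dispersion separates the two objectives exactly as above, which is the economically relevant message that maximizing total data and maximizing expected accuracy yield different optimal contracts.

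The step I expect to require the most care is not the algebra but the bookkeeping around well-posedness: I must ensure both $\arg\max$ sets are nonempty (guaranteed by compactness of the feasible box) and that $f$ is defined and strictly concave over the whole range of $g$ on the feasible set (for $f(y)=\sqrt{y}$ this needs $g\ge 0$, which $x_1=1,\,x_2\in[0,1]$ ensures). The main conceptual obstacle is articulating the mean-versus-dispersion mechanism so the example reads as \emph{generic} rather than pathological; I would address this by stating explicitly that strict concavity of $f$, combined with any feasible set on which $x$ controls the spread of $g$ at fixed mean, already forces the two optimizers to diverge.
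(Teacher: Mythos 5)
Your proposal is correct and takes essentially the same approach as the paper: the paper likewise proves the non-implication via an explicit counterexample with $f(z)=\sqrt{z}$ and a $g$ linear in the choice variable (there, $g(m_1;n_1,n_2)=n_1m_1+n_2(5-m_1)$ with $n$ binomial, so that $\mathbb{E}[g]=0.4m_1+4$ is maximized at the corner $m_1=5$ while $\mathbb{E}[\sqrt{g}]$ peaks at the interior point $m_1=405/97$), exploiting exactly the Jensen-type mean-versus-dispersion mechanism you describe. One small repair to your version: since the maximization of $\mathbb{E}_\theta[f(g(x;\theta))]$ ranges over the whole feasible set, $f$ must be defined at every feasible point, and on the full box $[0,1]^2$ the value $g=x_1-x_2$ can be negative, so restrict the feasible set to $\left\{x\in[0,1]^2 : x_1\geq x_2\right\}$ (or shift $f$); your argument on the slice $x_1=1$ then goes through unchanged.
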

\begin{proof}
    We prove the above result by providing a counterexample. Consider the binomial distribution, with $p_1 = 0.6, p_2 = 0.4$ and $N = 2$. Here $\theta = (n_1, n_2)$. The choice variable is $m_1 \in [0,5]$. Define $g$ and $f$ as follows:
    \begin{align}
        g(m_1; n_1, n_2) & = n_1 m_1 + n_2 (5 -m_1) \\
        f(z) & = \sqrt{z}
    \end{align}
    \begin{table}[!htp]
        \centering
        \begin{tabular}{c c| c || c | c}
        \hline
        $n_1$  &  $n_2$ & p & $g(m_1; n_1, n_2)$ & $ f(g(m_1; n_1, n_2)) $\\
        \hline
        2 & 0 & 0.36 & $2 m_1$ & $\sqrt{2 m_1}$\\
        1 & 1 & 0.48 & $5$ & $\sqrt{5}$\\
        0 & 2 & 0.16 & $10 - 2 m_1$  & $\sqrt{10 - 2 m_1}$ \\
        \hline
        \end{tabular}
        \caption{Counterexample showing Proposition 9.}
        \label{tab:p9-counterexample}
    \end{table}
    
    \noindent First, we find $m_1$ that gives the maximum of $\mathbb{E}_\theta[ g(m_1; n_1, n_2)]$:
    \begin{equation}
        \mathbb{E}_\theta[ g(m_1; n_1, n_2)] = 0.4 m_1 + 4
    \end{equation}
    Thus, the maximum is given by $\bar{m}_1 = 5$.
    Then, consider $\mathbb{E}_\theta[f(g(m_1; n_1, n_2))]$:
    \begin{equation}
        \mathbb{E}_\theta[f(g(m_1; n_1, n_2))] = 0.36 \sqrt{2m_1} + 0.48 \sqrt{5} + 0.16 \sqrt{10-2m_1}
    \end{equation}
    Taking the derivative of function and setting it to $0$, we derive
    $m_1^* = \frac{405}{97}$.
    \begin{equation}
        \bar{m}_1 > m_1^*.
    \end{equation}
    \begin{figure}[!htb]
        \centering
        \includegraphics[width = 0.5 \textwidth]{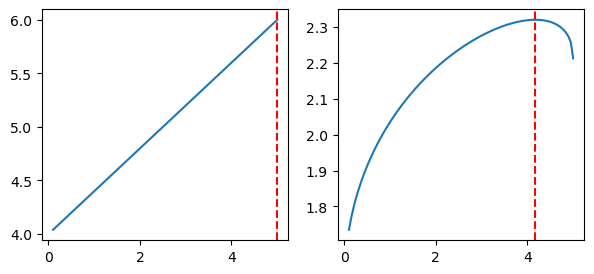}
        \caption{Functions $f$ and $g$ and the $m_1$'s that give the maximum values.}
        \label{fig:accu vs data volume}
    \end{figure}
\end{proof}
The implication of Proposition \ref{non-equivalence} is more significant than it may seem. At face value, it implies that maximising the expected data volume is not the same as maximising the expected model accuracy, unlike the case when expectation is not used (i.e., the complete information scenario). In the case of two types of parties, maximising the expected data volume encourages corner solutions (depending on the probabilities of the two types), whereas maximising the expected curvature may not. The greater the curvature, the greater the subduing effect.

It is worth noting that maximizing the total amount of collected data is a special case of our framework, where $a(\cdot)$ is chosen to be the identity mapping, i.e. $a(x) = x$, in the objective function.
\subsection{Problem Setup}

\begin{proposition}[Reservation utilities]
\label{reservation utility}
Let $\bar{m}_i$ denote the data contribution a type-$i$ party is willing to commit to when training a model on their own. If $c_i \leq c_j$, then $\bar{m}_i \geq \bar{m}_j$, and $f_i \geq f_j$.
\end{proposition}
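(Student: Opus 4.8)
The plan is to exploit the concavity of the composite objective together with a revealed-preference argument, since the latter keeps the comparative statics robust to corner solutions. First I would set $h(m) \triangleq v(a(m))$ and record that $h$ inherits monotonicity and concavity from its ingredients: by the chain rule $h'(m) = v'(a(m))\,a'(m) > 0$ and $h''(m) = v''(a(m))\,\big(a'(m)\big)^2 + v'(a(m))\,a''(m) < 0$, where strict negativity follows from $v' > 0$, $a'' < 0$, and $v'' \leq 0$. Hence $\tilde{u}_i(m) = h(m) - c_i m$ is strictly concave; together with the fact that the strictly positive linear cost eventually dominates the concave gain, this guarantees a unique maximizer $\bar{m}_i$, so each $f_i$ is well-defined.

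For the monotonicity of contributions I would avoid differentiating the first-order condition (which would force a separate treatment of interior versus corner solutions) and instead sum the two optimality inequalities. Since $\bar{m}_i$ maximizes $\tilde{u}_i$ and $\bar{m}_j$ maximizes $\tilde{u}_j$, evaluating each objective at the rival's choice gives
\begin{align*}
    v(a(\bar{m}_i)) - c_i \bar{m}_i &\geq v(a(\bar{m}_j)) - c_i \bar{m}_j\ ,\\
    v(a(\bar{m}_j)) - c_j \bar{m}_j &\geq v(a(\bar{m}_i)) - c_j \bar{m}_i\ .
\end{align*}
Adding these and cancelling the $v(a(\cdot))$ terms collapses everything to $(\bar{m}_j - \bar{m}_i)(c_i - c_j) \geq 0$. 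Because $c_i \leq c_j$ makes $c_i - c_j \leq 0$, this forces $\bar{m}_i \geq \bar{m}_j$, the first claim.

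The monotonicity of the reservation utilities then follows from a short chain that reuses optimality. Since $\bar{m}_i$ is optimal for type $i$, evaluating at the suboptimal point $\bar{m}_j$ can only lower the value, and replacing $c_i$ by the larger $c_j$ lowers it further because $\bar{m}_j \geq 0$:
\begin{equation*}
    f_i = \max_{m\geq 0} \big(h(m) - c_i m\big) \geq h(\bar{m}_j) - c_i \bar{m}_j \geq h(\bar{m}_j) - c_j \bar{m}_j = f_j\ .
\end{equation*}
This establishes $f_i \geq f_j$ and completes the argument.

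I expect the only real obstacle to be bookkeeping around existence and corner solutions, and the revealed-preference step is exactly what sidesteps it: it never invokes a stationarity condition, so it holds verbatim whether or not $\bar{m}_i$ or $\bar{m}_j$ lands at the boundary $m=0$. An alternative route would use the envelope theorem to obtain $\mathrm{d}f_i/\mathrm{d}c_i = -\bar{m}_i \leq 0$ and the single-crossing property of the decreasing map $h'$, but that version is harder to make rigorous at corners, so I would prefer the inequality-summing argument above.
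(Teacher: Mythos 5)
Your proof is correct, but it takes a genuinely different route from the paper's. The paper works through the first-order condition $\tilde v^\prime(\bar m_k)=c_k$ and implicit differentiation: it uses the strict concavity of $\tilde v(m)\triangleq v(a(m))$ to conclude $\bar m_i\geq\bar m_j$ from the decreasing map $\tilde v^\prime$, then differentiates the value function to get $\partial f_k/\partial c_k=-\bar m_k\leq 0$ (an envelope-theorem computation), and must separately patch the corner case $c_h>\tilde v^\prime(0)$, where $\bar m_h=0$ and $f_h=0$. Your revealed-preference argument—summing the two optimality inequalities to get $(c_i-c_j)(\bar m_j-\bar m_i)\geq 0$, and chaining $f_i\geq h(\bar m_j)-c_i\bar m_j\geq h(\bar m_j)-c_j\bar m_j=f_j$—avoids stationarity entirely, so it handles interior and corner maximizers uniformly and needs differentiability nowhere; your only use of strict concavity is to pin down uniqueness (which also settles the $c_i=c_j$ edge case, where the summation alone is vacuous). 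What your route gives up is the quantitative by-products of the paper's calculation: the identities $\bar m^\prime(c)=1/\tilde v^{\prime\prime}(\bar m)$ and $df/dc=-\bar m$ are not throwaways in this paper—they are reused later (e.g., in the appendix lemma establishing convexity of $f$ as a function of $c$ via Jensen's inequality, and in the type-selection proposition), so the differential machinery earns its keep downstream even though your argument is the cleaner and more robust proof of this proposition in isolation.
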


\begin{proof}
The optimization problem when a party of type $i$ trains a model on their own is:
\begin{equation}
    \max_{m_k} \tilde{u}_k = v(a(m_k)) - c_k m_k
\end{equation}
For simplicity, we drop the $m_k\geq 0$ constraint, as we can threshold the solution at $0$ when $m_k <0$. Denote $\tilde{v}(m_k)\triangleq v(a(m_k))$. For a non-trivial solution $\bar{m}_k > 0$, the optimal solution is given by the first-order condition (FOC):
\begin{equation}
    \tilde{v}^\prime(\bar{m}_k)= c_k. \label{reserv contribution}
\end{equation}
As $a^{\prime\prime}(\cdot)<0$ and $v^{\prime\prime}(\cdot)\leq0$, we have $\tilde{v}^{\prime\prime}(\cdot) <0$. In other words, $\tilde{v}^\prime(\cdot)$ is a decreasing function of $m_k$. Since $c_i \leq c_j$, it follows that $\bar{m}_i \geq \bar{m}_j$. For $c_h> \tilde{v}^\prime(0) \geq \tilde{v}^\prime(\bar{m}_k) = c_k, \forall \bar{m}_k >0$, we have $\bar{m}_h = 0 \leq  \bar{m}_k$. This completes the first half of the proof.

\medskip

Now we show that $f_i \geq f_j$ by utilizing implicit differentiation. First, we write the FOC as
\begin{equation}
    \tilde{v}^\prime(\bar{m}_k(c_k)) = c_k,
\end{equation}
noting that it defines an implicit function that maps $c_k$ to $\bar{m}_k$.

Differentiate both sides with respect to $c_k$ and we get
\begin{align}
    \tilde{v}^{\prime\prime}(\bar{m}_k) \bar{m}^{\prime}_k(c_k) = 1 \\
    \bar{m}^{\prime}_k(c_k) = \frac{1}{\tilde{v}^{\prime\prime}(\bar{m}_k)}.
\end{align}

The reservation utility is defined as the utility level achieved with $\bar{m}_k$, i.e.,
\begin{equation}
    f_k = \tilde{u}_k(\bar{m}_k) = \tilde{v}(\bar{m}_k) - c_k \bar{m}_k
\end{equation}
Now, differentiate the reservation utility with respect to the idiosyncratic cost parameter and we get
\begin{align}
    \frac{\partial f_k}{\partial c_k} = \frac{\tilde{v}^{\prime}(\bar{m}_k)}{\tilde{v}^{\prime\prime}(\bar{m}_k)} - \bar{m}_k - \frac{c_k}{\tilde{v}^{\prime\prime}(\bar{m}_k)} = - \bar{m}_k \leq 0.
\end{align}
The second equality is obtained by applying the FOC. Therefore, $c_i \leq c_j$ implies that $f_i \geq f_j$. Note that for $c_h > \tilde{v}^\prime(0)$, $f_h = 0$. The result continues to hold.
\end{proof}

\subsection{Screening with Observable Costs}

\begin{proposition}[Optimal contract under complete information]
    Under the complete information scenario, the optimal strategy for the principal is to offer the best model to all participating parties and require utilizing the amount of data such that a party’s IR constraint binds. The parties will in general utilize more data than they would when training a model on their own.
\end{proposition}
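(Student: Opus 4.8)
The plan is to exploit the fact that the coordinator's objective $a\left(\sum_i n_i m_i\right)$ depends on the decision variables only through the aggregate $S \triangleq \sum_i n_i m_i$ and is strictly increasing in $S$ (since $a'(\cdot)>0$). The whole problem therefore reduces to maximizing the scalar $S$ subject to the feasibility constraints, with the rewards $r_i$ entering only through those constraints. This reduction lets me decouple the choice of rewards from the choice of contributions. I will work with the present types (those with $n_i\ge 1$); absent types carry no contribution and are handled trivially.

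First I would establish the ``best model to all'' property. Fixing the contributions $m_i$ (hence $S$ and the attainable accuracy $a(S)$), the rewards $r_i$ do not appear in the objective, raising $r_i$ only loosens the IR constraint $v(r_i)-c_i m_i\ge f_i$ (as $v$ is increasing), and the budget constraint caps $r_i\le a(S)$. Hence setting $r_i = a(S)$ for every type is feasible, maximally relaxes IR, and is in fact necessary to reach the largest $S$. Substituting $r_i=a(S)$, the IR constraint becomes $m_i\le \big(v(a(S))-f_i\big)/c_i$, so every feasible aggregate obeys $S\le \Phi(S)\triangleq \sum_i n_i\big(v(a(S))-f_i\big)/c_i$.

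Next I would characterize the maximizer through this self-consistency inequality. Because $v\circ a$ is increasing and concave (using $v'>0,\ v''\le 0,\ a'>0,\ a''<0$), $\Phi$ is increasing and concave, so $\Phi(S)-S$ is concave and its superlevel set $\{S:\Phi(S)\ge S\}$ is an interval, bounded above since $v(a(S))$ grows sublinearly while $S$ grows linearly. The optimal $S^*$ is the right endpoint of this interval, where $\Phi(S^*)=S^*$. Chaining $S^*=\sum_i n_i m_i^*\le \sum_i n_i\big(v(a(S^*))-f_i\big)/c_i=\Phi(S^*)=S^*$ forces equality termwise, so $m_i^*=\big(v(a(S^*))-f_i\big)/c_i$ for every type, which is precisely the assertion that IR binds for all while the best model $a(S^*)$ is handed to everyone. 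The main obstacle is the circular dependence created by the budget constraint $r_i\le a(S)$, since the attainable reward itself depends on the contributions; I resolve it by casting everything as the scalar fixed point $S=\Phi(S)$ and leaning on the concavity of $\Phi(S)-S$ to guarantee a unique, finite right endpoint.

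Finally, to show parties contribute at least their solo level $\bar m_i$, I would use the binding condition and $f_i=v(a(\bar m_i))-c_i\bar m_i$ to write $c_i\big(m_i^*-\bar m_i\big)=v(a(S^*))-v(a(\bar m_i))$, so it suffices to prove $S^*\ge \bar m_i$. I would show that the solo contribution $\bar m_I$ of the lowest-cost present type is itself feasible: in $\Phi(\bar m_I)$ the type-$I$ term equals $n_I\bar m_I\ge \bar m_I$ (because $v(a(\bar m_I))-f_I=c_I\bar m_I$ and $n_I\ge 1$), while Proposition~\ref{reservation utility} gives $\bar m_I\ge \bar m_i$ and hence $v(a(\bar m_I))\ge v(a(\bar m_i))\ge f_i$, making every remaining term nonnegative. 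Thus $\Phi(\bar m_I)\ge \bar m_I$, placing $\bar m_I$ in the feasible interval, so $S^*\ge \bar m_I\ge \bar m_i$ and therefore $m_i^*\ge \bar m_i$ for all $i$, with equality only in the degenerate case $\bar m_i=0$.
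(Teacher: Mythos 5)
Your proposal is correct in its overall structure but takes a genuinely different route from the paper's. The paper writes down the Lagrangian and works through the KKT conditions: stationarity plus complementary slackness force the budget constraint and then the IR constraint to bind (Claims 1 and 2), and existence of a solution to the resulting binding system—with $m_i^* > \bar{m}_i$—is established separately via the intermediate value theorem (Claim 3). You instead collapse the problem to the scalar aggregate $S=\sum_i n_i m_i$, push all rewards to the cap $a(S)$ by a monotonicity argument, and characterize the optimum as the right endpoint of the interval $\{S:\Phi(S)\geq S\}$, where termwise equality in the chained inequality delivers the binding IR constraints; your final step, $\Phi(\bar{m}_I)\geq \bar{m}_I$ via the reservation-utility comparison (Proposition 1 of the paper), simultaneously replaces the paper's IVT argument and yields $m_i^*\geq \bar{m}_i$. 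What your route buys is a unified existence-plus-characterization argument with no Lagrangian machinery and an explicit closed form $m_i^*=\bigl(v(a(S^*))-f_i\bigr)/c_i$; the paper's KKT route is more mechanical but does not rely on the objective being a function of a one-dimensional aggregate.

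One step needs repair. Your claim that the interval is bounded above ``since $v(a(S))$ grows sublinearly'' does not follow from $a^{\prime\prime}<0$ and $v^{\prime\prime}\leq 0$ alone: for example $a(S)=S-e^{-S}$ is strictly concave and increasing yet grows linearly, and if $\sum_i n_i/c_i$ times the asymptotic slope of $v\circ a$ exceeds $1$, then $\Phi(S)-S\to+\infty$ and the superlevel set is unbounded, so the right endpoint $S^*$ you need would not exist. The correct justification—which the paper's own proof invokes explicitly—is that accuracy is bounded, $a(\cdot)\leq 1$, hence $v(a(S))\leq v(1)$, $\Phi$ is bounded, and the superlevel set is bounded; with that substitution your argument goes through unchanged. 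A second, minor imprecision: equality $m_i^*=\bar{m}_i$ does not occur when $\bar{m}_i=0$ (there, $m_i^*>0=\bar{m}_i$ strictly whenever $S^*>0$) but rather in the degenerate single-participant case where $S^*=\bar{m}_I$; this does not affect the proposition's ``in general'' claim.
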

\begin{proof}
    The constrained optimisation problem is 
    \begin{align}
        & \max_{{\{r_i, m_i\}}_{i=1}^I} a\left(\sum_{i=1}^I n_i m_i\right) \\
        \text{s.t.   } & \left\{\begin{array}{l}
        v(r_i)- c_i m_i - f_i \geq 0,  \forall i;  \\
        r_i \leq a\left(\sum_{i=1}^I n_i m_i\right),  \forall i.   
        \end{array}\right.
    \end{align}
    The Lagrangian is
    \begin{equation}
        \mathcal{L} = a\left(\sum_{i=1}^I n_i m_i\right) + \sum_i \lambda_i ( v(r_i)- c_i m_i - f_i) - \sum_i \mu_i \left[r_i - a\left(\sum_{i=1}^I n_i m_i\right)\right]
    \end{equation}
    \begin{align}
        \textbf{Stationarity}  & \; \; \; \frac{\partial \mathcal{L}}{\partial m_i} = a^\prime\left(\sum_{i=1}^I n_i m_i\right)n_i - \lambda_i c_i + \mu_i a^\prime \left(\sum_{i=1}^I n_i m_i\right)n_i
        =0; \label{S1}\\
        & \; \; \; \frac{\partial \mathcal{L}}{\partial r_i} = \lambda_i v^\prime(r_i) - \mu_i = 0; \label{S2}\\
        \textbf{Primal feasibility} & \; \; \; v(r_i) - c_i m_i - f_i \geq 0, \; \forall i; \label{PF1}\\
        & \; \; \; r_i \leq a\left(\sum_{i=1}^I n_i m_i\right), \; \forall i; \label{PF2}\\
        \textbf{Dual feasibility} & \; \; \; \lambda_i \geq 0, \; \forall i; \label{DF1}\\
        & \; \; \; \mu_i \geq 0, \; \forall i; \label{DF2}\\
        \textbf{Complementary slackness} & \;\;\; \lambda_i ( v(r_i) - c_i m_i - f_i) = 0, \; \forall i; \label{CS1}\\
         & \;\;\;   \mu_i \left(r_i - a\left(\sum_{i=1}^I n_i m_i\right)\right) = 0, \; \forall i. \label{CS2}
    \end{align}
    \underline{Claim 1}: \textit{(\ref{PF2}) must bind.} \smallskip \\ Suppose not. Then (\ref{DF2}) and (\ref{CS2}) imply $\mu_i = 0$. As $v^\prime(\cdot) > 0$ by assumption, (\ref{S2}) and (\ref{DF1}) imply $\lambda_i = 0$. This creates a contradiction between (\ref{S1}) and the assumption that $a^\prime(\cdot)>0$. Thus, (\ref{PF2}) must bind, i.e.,
    \begin{equation}
        r_i = a\left(\sum_{i=1}^I n_i m_i\right), \; \forall i. \label{PF1-bind}
    \end{equation}
    \underline{Claim 2}: \textit{(\ref{PF1}) must bind.} \smallskip \\ Suppose not. We get $\lambda_i = 0$, which combined with (\ref{S2}) gives $\mu_i = 0$. Again, a contradiction arises between (\ref{S1}) and the assumption that $a^\prime(\cdot)>0$. Therefore, (\ref{PF1}) must bind, i.e., 
    \begin{equation}
        v(r_i) - c_i m_i - f_i = 0, \; \forall i. \label{PF2-bind}
    \end{equation}
    Solving (\ref{PF1-bind}) and (\ref{PF2-bind}), we can derive the optimal strategy ${\{r_i^*, m_i^*\}}_{i=1}^I$ for the principal under complete information. Note that it can be shown that the second-order conditions are satisfied, the details of which are left to the reader. \medskip \\ 
    \underline{Claim 3}: \textit{An optimal strategy exists for the principal under complete information. In other words, there is a solution to the system formed by (\ref{PF1-bind}) and (\ref{PF2-bind})}. \smallskip \\
    Finding a solution to the system is equivalent to finding a solution to the following equations
    \begin{equation}
        v\left(a\left(\sum_{i=1}^I n_i m_i\right)\right) - c_i m_i = f_i, \forall i.
    \end{equation}
    On the one hand, note that $a(\cdot)$ is bounded above by 1, but $c_i m_i$ is unbounded. Thus, there exists $M_i$ large enough such that
    \begin{equation}
        v\left(a\left(\sum_{j\neq i}^I n_j m_j + n_i M_i\right)\right) - c_i M_i < 0.
    \end{equation}
    On the other hand, by definition (from Proposition 1) we have
    \begin{equation}
        v\left(a\left(\bar{m}_i\right)\right) - c_i \bar{m}_i = f_i.
    \end{equation}
    As both $v(\cdot)$ and $a(\cdot)$ are increasing functions, it follows that
    \begin{equation}
        v\left(a\left(\sum_{j\neq i}^I n_j m_j + n_i \bar{m}_i\right)\right) - c_i \bar{m}_i > f_i.
    \end{equation}
    By the intermediate value theorem, $\exists \, m_i^* \in (\bar{m}_i, M_i) $ such that
    \begin{equation}
        v\left(a\left(\sum_{j\neq i}^I n_j m_j + n_i m_i^*\right)\right) - c_i m_i^* = f_i. \label{auxiliary1}
    \end{equation}
    Note that in the above proof, the result holds for any $m_j$ where $j\neq i$. In particular, setting $m_j =m_j^*, \; \forall j$ in (\ref{auxiliary1}), it follows that $\exists \, \{m_i^*\}_{i=1}^I$ with $m_i^* > \bar{m}_i$ such that
    \begin{equation}
        v\left(a\left(\sum_{i=1}^I n_i m_i^*\right)\right) - c_i m_i^* = f_i. \label{optimal solution}
    \end{equation}
\end{proof}

\begin{proposition}[Welfare implications under complete information]
    Under the complete information scenario, a party with lower cost makes more contribution and obtains higher profits, thereby obeying the principle that a bigger contributor ends up with a higher profit.
\end{proposition}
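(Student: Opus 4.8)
The plan is to build directly on the preceding proposition, which establishes that under complete information both the budget constraint and the IR constraint bind for every type. Binding of the budget constraint means every party receives the collectively trained model, so each type-$i$ party's reward accuracy equals the common value $A \triangleq a\left(\sum_{i=1}^I n_i m_i^*\right)$. Binding of the IR constraint then pins down each party's realized profit through the single equation $v(A) - c_i m_i^* = f_i$. From this one identity I would extract two consequences in parallel, one for profits and one for contributions.

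For the profit claim, I would observe that the realized utility of a type-$i$ party under the contract is exactly $v(A) - c_i m_i^* = f_i$, its reservation utility. Hence the profit ordering across types is inherited verbatim from the ordering of reservation utilities, and Proposition 1 already supplies $c_i \le c_j \Rightarrow f_i \ge f_j$. This would settle the ``higher profits'' half with essentially no extra work.

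For the contribution claim I would exploit the same binding equation, rewritten as $c\,m^*(c) = v(A) - f(c)$, where I treat $A$ as a fixed scalar (legitimate, since all types share the single collectively trained model) and let $f(c)$ denote the reservation utility viewed as a function of the per-unit cost. Differentiating with respect to $c$ and substituting the envelope identity $f'(c) = -\bar{m}(c)$ established in Proposition 1 yields $(m^*)'(c) = \bigl(\bar{m}(c) - m^*(c)\bigr)/c$. Because the previous proposition shows the contracted contribution strictly exceeds the stand-alone contribution, $m^*(c) > \bar{m}(c)$, the right-hand side is negative, so $m^*(\cdot)$ is strictly decreasing; evaluating at the discrete cost levels then gives $c_i < c_j \Rightarrow m_i^* > m_j^*$, i.e., the lower-cost party contributes more.

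The main obstacle, and the step I would treat most carefully, is justifying the differentiation of a relation that is a priori defined only at finitely many cost levels and confirming the sign of the derivative everywhere. For the former, I would stress that $A$ is the accuracy of the single model trained on the pooled data and is therefore identical across all IR equations, so $m^*(c) = (v(A) - f(c))/c$ is a genuine function of $c$ whose values at $c_1, \dots, c_I$ are precisely $m_1^*, \dots, m_I^*$. For the latter, I would verify $m^*(c) > \bar{m}(c)$ throughout the cost range rather than only at the solution points: using boundedness of $a(\cdot)$ together with $\sum_i n_i m_i^* \ge m_I^* > \bar{m}_I \ge \bar{m}(c)$, one gets $v(A) > v(a(\bar{m}(c)))$, and subtracting the two break-even relations yields $c(m^*(c) - \bar{m}(c)) = v(A) - v(a(\bar{m}(c))) > 0$ for every $c$ in the range, which validates the negative sign of $(m^*)'(c)$ everywhere and hence the global monotonicity.
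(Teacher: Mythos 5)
Your proposal is correct, and the profit half is exactly the paper's argument (binding IR from Proposition 2 plus the ordering of reservation utilities from Proposition 1). For the contribution half, however, you take a genuinely different and arguably cleaner route. The paper differentiates the binding condition $v\left(a\left(\sum_k n_k m_k^*\right)\right) - c_i m_i^* = f_i$ implicitly in $c_i$ \emph{keeping} the dependence of the pooled accuracy on $m_i^*$, which produces
\begin{equation*}
\left[v^\prime\left(a\left(\textstyle\sum_k n_k m_k^*\right)\right) a^\prime\left(\textstyle\sum_k n_k m_k^*\right) n_i - c_i\right]\frac{d m_i^*}{d c_i} = m_i^* - \bar{m}_i \geq 0,
\end{equation*}
and then spends most of the proof on two contradiction arguments (exploiting optimality of the contract) to show that the bracketed coefficient is strictly negative, whence $dm_i^*/dc_i \leq 0$. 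You instead freeze $A$ at its solution value — legitimate, as you note, because all types' binding IR equations share the single pooled accuracy — so that every $m_i^*$ lies on the graph of the one-variable function $\phi(c) = \bigl(v(A) - f(c)\bigr)/c$, and monotonicity of $\phi$ follows from the envelope identity $f^\prime(c) = -\bar{m}(c)$ together with your verification that $\phi(c) > \bar{m}(c)$ on the whole interval $[c_I, c_1]$ (via $\sum_k n_k m_k^* \geq m_I^* > \bar{m}_I \geq \bar{m}(c)$, assuming as the paper implicitly does that $n_I \geq 1$ and contributions are non-negative). What each approach buys: yours is more elementary, requires no KKT or contradiction machinery beyond what Propositions 1 and 2 already supply, and avoids the delicate point in the paper's comparative statics that other types' contributions are held fixed while $c_i$ is perturbed — your argument perturbs nothing, it only interpolates the fixed solution. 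The paper's route, in exchange, delivers the additional structural fact that $v^\prime(A)a^\prime(\cdot)n_i < c_i$ at the optimum, i.e., every party is pushed strictly past the contribution level that would privately benefit them given the pooled technology, which is of independent economic interest but not needed for the proposition itself.
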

\begin{proof}
    The second part of the proposition is readily implied from Propositions 1 and 2. From Proposition 2, we know that IR constraints are binding under complete information. Therefore, for type-$i$ party, they end up getting the reservation utility $f_i$. From Proposition 1, we know that $f_i \geq f_j$ for $c_i \leq c_j$. We concentrate on proving the first part of the proposition, i.e., a party with lower cost makes more contribution in the complete information scenario.

    From Proposition 2, we know that the optimal $m_i^*$ for type-$i$ party satisfies
    \begin{align}
        v\left(a\left(\sum_{i=1}^I n_i m_i^*\right)\right) - c_i m_i^* = & \; f_i, \\
        v\left(a\left(\sum_{i=1}^I n_i m_i^*\right)\right) - c_i m_i^* = & \; v(a(\bar{m}_i)) - c_i \bar{m}_i.
    \end{align}
    Treat $m_i^*$ and $\bar{m}_i$ as implicit functions of $c_i$, differentiate both sides of the equation by $c_i$, and we get
    \begin{align}
        v^\prime\left(a\left(\sum_{i=1}^I n_i m_i^*\right)\right) a^\prime \left(\sum_{i=1}^I n_i m_i^*\right) n_i \frac{d m_i^*}{d c_i} - m_i^* - c_i \frac{d m_i^*}{d c_i}  = v^\prime(a(\bar{m}_i)) a^\prime(\bar{m}_i)\frac{d \bar{m}_i}{d c_i} - \bar{m}_i - c_i \frac{d \bar{m}_i}{d c_i}.
    \end{align}
    Collecting the terms and using auxiliary results from Proposition 1, we get
    \begin{align}
        \left[v^\prime\left(a\left(\sum_{i=1}^I n_i m_i^*\right)\right) a^\prime \left(\sum_{i=1}^I n_i m_i^*\right) n_i - c_i\right]\frac{d m_i^*}{d c_i}  = m_i^* - \bar{m}_i \geq 0.
    \end{align}    
    The last inequality follows from Proposition 2. Therefore, what is left is to determine the sign of
    \begin{equation}
        v^\prime\left(a\left(\sum_{i=1}^I n_i m_i^*\right)\right) a^\prime \left(\sum_{i=1}^I n_i m_i^*\right) n_i - c_i.
    \end{equation}

    \noindent \underline{Claim 1}: \textit{$v^\prime\left(a\left(\sum_{i=1}^I n_i m_i^*\right)\right) a^\prime \left(\sum_{i=1}^I n_i m_i^*\right) n_i - c_i \leq 0.$ } \\
    We prove the claim by contradiction. Assume that $v^\prime\left(a\left(\sum_{i=1}^I n_i m_i^*\right)\right) a^\prime \left(\sum_{i=1}^I n_i m_i^*\right) n_i - c_i > 0.$ Recall that the type-$i$ participant's utility function is
    \begin{equation}
        u_i(r_i, m_i) = v(r_i) - c_i m_i.
    \end{equation}
    Now that under complete information $r_i = a\left(\sum_{i=1}^I n_i m_i\right)$, we could write the utility function solely in terms of $m_i$:
    \begin{equation}
        \breve{u}_i(m_i) = v\left(a\left(\sum_{i=1}^I n_i m_i\right)\right) - c_i m_i.
    \end{equation}
    The assumption is equivalent to
    \begin{equation}
        \frac{d \breve{u}_i}{d m_i}(m_i^*) > 0.
    \end{equation}
    This implies that $\exists \; \hat{m}_i > m_i^*$ such that $\breve{u}(\hat{m}_i) > \breve{u}(m_i^*) = f_i.$
    As $a(\cdot) \leq 1$, we have
    \begin{equation}
        \breve{u}_i(m_i) \leq v(1) - c_i m_i,
    \end{equation}
    the right side of which goes to $-\infty$ as $m_i$ approaches $+ \infty$. In particular, let $M_i \triangleq \frac{v(1)}{c_i}$, then
    \begin{equation}
        \breve{u}_i(M_i) \leq 0.
    \end{equation}
    By the intermediate value theorem, $\exists \; \breve{m}_i \in (\hat{m_i}, M_i)$ such that $ \breve{u}_i(\breve{m}_i) = f_i$. But as $ \breve{m}_i > m_i^*$ and $a^\prime(\cdot) > 0$,
    \begin{equation}
        a\left(\sum_{j \neq i} n_j m_j^* + n_i \breve{m}_i\right) >  a\left(\sum_{k=1}^I n_k m_k^*\right).
    \end{equation}
    Therefore, ${\{(r_i^*, m_i^*)\}}_{i=1}^I$ cannot be a solution to the optimization problem—a contribution.
    
    \medskip
    
    \noindent \underline{Claim 2}: \textit{$v^\prime\left(a\left(\sum_{i=1}^I n_i m_i^*\right)\right) a^\prime \left(\sum_{i=1}^I n_i m_i^*\right) n_i - c_i < 0.$ } \\
    Again, we prove by contradiction. Assume that $v^\prime\left(a\left(\sum_{i=1}^I n_i m_i^*\right)\right) a^\prime \left(\sum_{i=1}^I n_i m_i^*\right) n_i - c_i = 0.$ This is equivalent to assuming that
    \begin{equation}
        \frac{d \breve{u}_i}{d m_i}(m_i^*) = 0.
    \end{equation}
    Since $\frac{d^2 \breve{u}_i}{{(d m_i)}^2} < 0$, $m_i = m_i^*$ achieves the global maximum of $\breve{u}_i(m_i)$. As $\breve{u}_i(m_i^*) = f_i$ and $m_i^* > \bar{m}_i$ (from the previous proposition), we have
    \begin{equation}
        f_i = \breve{u}_i(m_i^*) > \breve{u}_i(\bar{m}_i) = v\left(a\left(\sum_{i=1}^I n_i \bar{m}_i\right)\right) - c_i \bar{m}_i \geq v(a(\bar{m}_i)) - c_i \bar{m}_i = f_i.
    \end{equation}
    Therefore, it creates a contradiction.

    \medskip
    
    In conclusion, $v^\prime\left(a\left(\sum_{i=1}^I n_i m_i^*\right)\right) a^\prime \left(\sum_{i=1}^I n_i m_i^*\right) n_i - c_i < 0.$ This indicates that
    \begin{equation}
        \frac{d m_i^*}{d c_i} \leq 0.
    \end{equation}
    Thus, if $c_i \leq c_j$, $m_i^* \geq m_j^*$.
\end{proof}

\subsection{Screening with Private Costs}
\begin{proposition}[Relaxation of the budget constraint]
    The budget constraint in the original problem implies the budget constraint in first moments. Namely,
    \begin{equation}
        \Vert r(n)\Vert_\infty \leq  a\left(\sum_{i=1}^I n_i m_i\right), \forall n \in \mathrm{Multi}(N,p)
        \implies\mathbb{E}_{n_i \geq 1}[v(r_i)] \leq \mathbb{E}_{n_i \geq 1}\left[v\left(a\left(\sum_{i=1}^I n_i m_i\right)\right)\right], \forall i.
    \end{equation}
\end{proposition}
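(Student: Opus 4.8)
The plan is to derive the first-moment inequality from a \emph{pointwise} inequality that holds for each realization of $n$ separately, and then lift it to the conditional expectation by monotonicity. The essential observation is that the infinity norm dominates every individual coordinate, so the budget constraint constrains not merely the largest reward in each realization but each $r_i(n)$ on its own.

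First I would unpack the definition of the infinity norm. For any realization $n$ in the support and any fixed type $i$, we have $r_i(n) \leq \max_{k} r_k(n) = \Vert r(n)\Vert_\infty$. Chaining this with the hypothesis $\Vert r(n)\Vert_\infty \leq a\left(\sum_{j=1}^I n_j m_j\right)$ yields the coordinatewise bound $r_i(n) \leq a\left(\sum_{j=1}^I n_j m_j\right)$, valid for every $n$ and every $i$ simultaneously.

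Next I would apply the valuation function $v$ to both sides. Since the problem setup assumes $v^\prime(\cdot) > 0$, the map $v$ is strictly increasing and hence preserves the inequality, giving $v(r_i(n)) \leq v\left(a\left(\sum_{j=1}^I n_j m_j\right)\right)$ for every realization $n$---in particular for every $n$ with $n_i \geq 1$. This is now a pointwise inequality between two random quantities indexed by $n$. Taking the conditional expectation $\mathbb{E}_{n_i \geq 1}[\cdot]$ of both sides and using that expectation is a monotone operator (it preserves almost-sure inequalities) lifts the bound directly to $\mathbb{E}_{n_i \geq 1}[v(r_i)] \leq \mathbb{E}_{n_i \geq 1}\left[v\left(a\left(\sum_{j=1}^I n_j m_j\right)\right)\right]$, which is exactly the claimed first-moment budget constraint.

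I do not expect a genuine obstacle in this argument, as it is a short chain of monotone steps. The only point requiring a moment of care is recognizing that $\Vert r(n)\Vert_\infty$ bounds \emph{each} component $r_i(n)$, not merely the maximal one; this is precisely what allows the derivation to hold for all $i$ at once rather than only for the type receiving the best model in a given realization. It is also worth noting that the implication is strictly one-directional---passing to first moments is a genuine relaxation, since many reward distributions can share the same first moment $t_i$ while only some satisfy the stronger almost-sure constraint---which is why a separate construction (the proportional assignment of the subsequent proposition) is needed to map a first-moment solution back to a feasible point of the original problem.
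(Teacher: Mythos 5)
Your proof is correct and is essentially the paper's argument: the paper formally phrases it by contraposition and then by contradiction, but inside that wrapper it performs exactly your chain---the infinity norm bounds each coordinate, giving $v(r_k(n)) \leq \Vert v(r(n))\Vert_\infty \leq v\left(a\left(\sum_{i=1}^I n_i m_i\right)\right)$ pointwise in $n$ by monotonicity of $v$, after which monotonicity of the conditional expectation yields the first-moment bound. Your direct presentation is, if anything, cleaner, since the contrapositive framing in the paper adds nothing mathematically.
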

\begin{proof}
    By contraposition. To prove the above proposition, we only need to show that 
    \begin{equation}
        \mathbb{E}_{n_k \geq 1}[v(r_k)] >\mathbb{E}_{n_k \geq 1}\left[v\left(a\left(\sum_{i=1}^I n_i m_i\right)\right)\right], \exists k \in \mathcal{I}.
        \implies \Vert r(\bar{n})\Vert_\infty >  a\left(\sum_{i=1}^I \bar{n}_i m_i\right), \exists \bar{n} \in \mathrm{Multi}(N,p).
    \end{equation}
    Assume not, i.e., $\Vert r(n)\Vert_\infty \leq  a\left(\sum_{i=1}^I n_i m_i\right), \forall n \in \mathrm{Multi}(N,p)$. It follows that
    \begin{equation}
        v(r_k(n)) \leq \Vert v(r(n))\Vert_\infty \leq  v\left(a\left(\sum_{i=1}^I n_i m_i\right)\right), \forall n \in \mathrm{Multi}(N,p).
    \end{equation}
    Consequently,
    \begin{equation}
        \mathbb{E}_{n_i \geq 1}[v(r_i)] \leq \mathbb{E}_{n_i \geq 1}\left[v\left(a\left(\sum_{i=1}^I n_i m_i\right)\right)\right], \forall i.
    \end{equation}
    This contradicts the condition that $\mathbb{E}_{n_k \geq 1}v(r_k)] >\mathbb{E}_{n_k \geq 1}\left[v\left(a\left(\sum_{i=1}^I n_i m_i\right)\right)\right], \exists k \in \mathcal{I}$.
\end{proof}
Using the above result, we can convert the original problem into a much simpler problem in first moments. For notational simplicity, let $t_i \triangleq \mathbb{E}_{n_i \geq 1}[v(r_i)]$. Then, the simplified problem is:
\begin{align}
        \max_{\{(t_i, m_i)_{i=1}^{I}\}} & \; \mathbb{E}_{n\sim \mathrm{Multi}(N,p)}\left[a\left(\sum_{i=1}^I n_i m_i\right)\right] \\
    \text{s.t.} & \; \left\{\begin{array}{ll}
       t_i - c_i m_i \geq f_i  &  \forall i; \\
       t_i - c_i m_i \geq t_j - c_i m_j & \forall i,j; \\
       t_i \leq  \mathbb{E}_{n_i \geq 1}\left[v\left(a\left(\sum_{i=1}^I n_i m_i\right)\right)\right], & \forall i.
    \end{array}\right.
\end{align}
Note that the feasible set of this problem is much larger than that of the original problem. If we find an optimal solution to this problem and then show that it additionally satisfies the original budget constraint. Then we are done.

As the solution to this simplified problem only puts constraints on the first moments, the key to ensure the validity of the original budget constraint is to find a series of mappings from the realised accuracy to the reward (Figure \ref{fig:Venn diagram of problem conversion}). Denote the solution to the simplified problem as $(t^*_i, m^*_i)_{i=1}^n$, and $\bar{t}_i \triangleq \mathbb{E}_{n_i \geq 1}\left[v\left(a\left(\sum_{i=1}^I n_i m_i\right)\right)\right]$. Then, one mapping (we call \textbf{proportional assignment}) that meets the requirement is
\begin{equation}
        r_i(n) = v^{-1}\left(\frac{t_i^*}{\bar{t}_i}v\left(a\left(\sum_{i=1}^I n_i m_i\right)\right)\right). \label{appendix: proportional assignment}
    \end{equation}
\begin{figure}[!htb]
    \centering
    \includegraphics[width=0.5\linewidth]{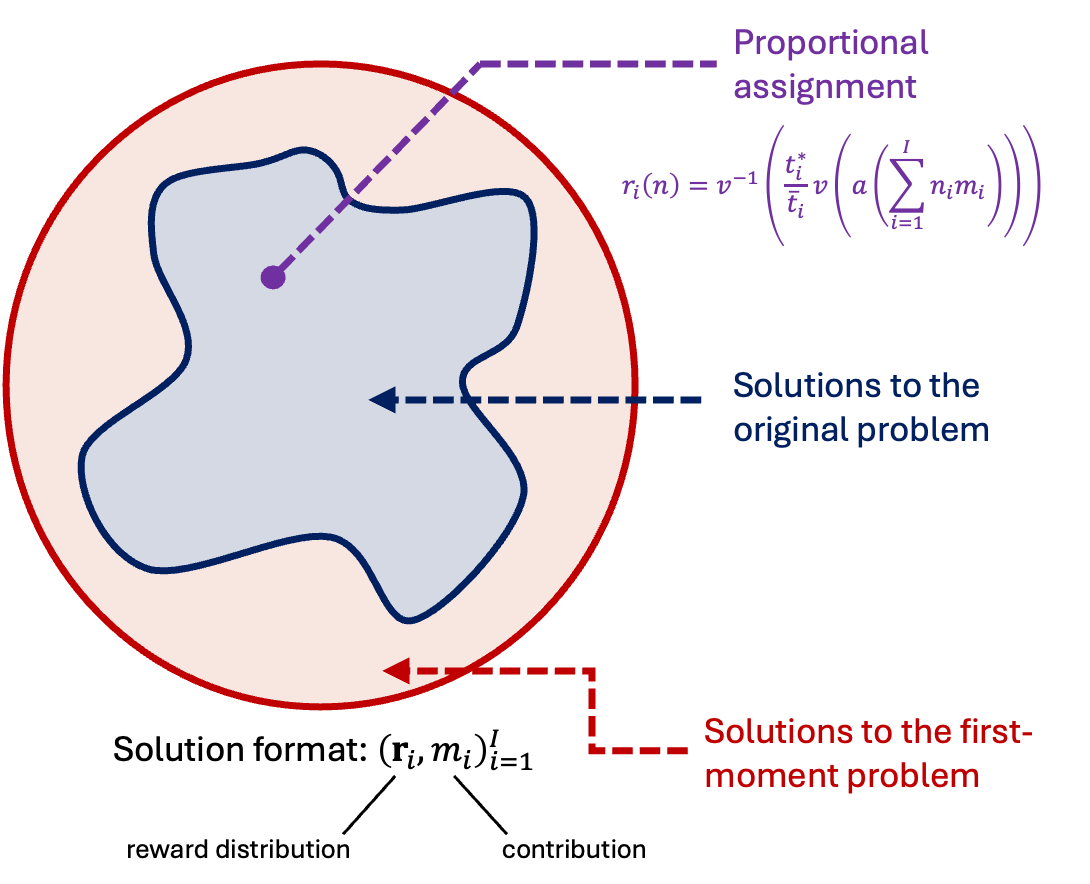}
    \caption{The solution strategy: delineate solutions to the first-moment problem and then use the proportional assignment to solve the original problem.}
    \label{fig:Venn diagram of problem conversion}
\end{figure}

The proof is our focus next. But first, we establish the original budget constraint in its equivalent pecuniary terms.

\setcounter{lemma}{0}

\begin{lemma}[Equivalent transformation of the budget constraint]
\label{Equivalent transformation of the budget constraint}
    The budget constraint in the revamped problem is equivalent to the budget constraint in pecuniary terms. Namely,
    \begin{equation}
        \Vert r(n)\Vert_\infty \leq  a\left(\sum_{i=1}^I n_i m_i\right) \iff 
        \Vert v(r(n))\Vert_\infty \leq  v\left(a\left(\sum_{i=1}^I n_i m_i\right)\right).
    \end{equation}
\end{lemma}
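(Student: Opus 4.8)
The plan is to exploit the strict monotonicity of the valuation function $v$, which follows from the standing assumption $v'(\cdot) > 0$. A strictly increasing function is injective and admits a strictly increasing inverse $v^{-1}$, and---crucially---it commutes with the maximum operation. Since model rewards are measured in accuracy and are therefore nonnegative, the $\ell_\infty$ norm reduces to a plain maximum, $\Vert r(n)\Vert_\infty = \max_i r_i(n)$, and likewise $\Vert v(r(n))\Vert_\infty = \max_i v(r_i(n))$, the valuations being nonnegative on the relevant domain. I would state these two modelling facts explicitly at the outset, since they are what let us drop absolute values from the norms.

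First I would establish the pointwise identity
\begin{equation*}
    \Vert v(r(n))\Vert_\infty = \max_i v(r_i(n)) = v\Big(\max_i r_i(n)\Big) = v\big(\Vert r(n)\Vert_\infty\big),
\end{equation*}
where the middle equality is the statement that a monotone increasing map commutes with the maximum: the index attaining the largest $r_i(n)$ also attains the largest $v(r_i(n))$, and conversely. This is where the bulk of the (light) work lives, though it is entirely routine given monotonicity.

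With this identity in hand, the claimed equivalence collapses to a scalar statement about two real numbers. Writing $A \triangleq \Vert r(n)\Vert_\infty$ and $B \triangleq a(\sum_{i=1}^I n_i m_i)$, the left inequality reads $A \leq B$ and the right inequality reads $v(A) \leq v(B)$. Strict monotonicity of $v$ yields $A \leq B \iff v(A) \leq v(B)$ directly: the forward implication uses that $v$ is increasing, while the reverse follows because $v^{-1}$ is increasing (equivalently, $A > B$ would force $v(A) > v(B)$, a contradiction). This settles both directions simultaneously.

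I do not anticipate a substantive obstacle. The only points requiring care are the two nonnegativity observations above---ensuring the norms are genuine maxima rather than maxima of absolute values---and the commutation of $v$ with the max. Everything else is the elementary fact that applying a strictly increasing map to both sides of an inequality, and to a finite collection of quantities inside a maximum, preserves all the relevant orderings.
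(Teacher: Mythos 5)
Your proof is correct and takes essentially the same approach as the paper: both arguments rest on the strict monotonicity of $v$ applied to the component attaining the $\ell_\infty$ norm, with the norm read as a plain maximum over (nonnegative) coordinates. The paper phrases each direction as a proof by contradiction on the maximizing index rather than factoring out the identity $\Vert v(r(n))\Vert_\infty = v\left(\Vert r(n)\Vert_\infty\right)$ as you do, but the mathematical content is identical, and your explicit nonnegativity remark only makes precise what the paper leaves implicit.
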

\begin{proof}
    $\implies$ Suppose not, i.e., $\Vert v(r(n))\Vert_\infty > v\left(a\left(\sum_{i=1}^I n_i m_i\right)\right)$. Let $v(r_k(n)) \triangleq \Vert v(r(n))\Vert_\infty$.
    \begin{equation}
        v(r_k(n)) > v\left(a\left(\sum_{i=1}^I n_i m_i\right)\right)
    \end{equation}
    and $v^\prime(\cdot)$ imply that
        \begin{equation}
        r_k(n) > a\left(\sum_{i=1}^I n_i m_i\right),
    \end{equation}
    which contradicts the assumption that $\Vert r(n)\Vert_\infty \leq  a\left(\sum_{i=1}^I n_i m_i\right)$.

    $\impliedby$ Suppose not, i.e., $\Vert r(n)\Vert_\infty >  a\left(\sum_{i=1}^I n_i m_i\right)$. Let $r_k(n) \triangleq \Vert r(n)\Vert_\infty$.

    \begin{equation}
        r_k(n) > a\left(\sum_{i=1}^I n_i m_i\right)
    \end{equation}
    and $v^\prime(\cdot)$ imply that
    \begin{equation}
        v(r_k(n)) > v\left(a\left(\sum_{i=1}^I n_i m_i\right)\right),
    \end{equation}
    which contradicts the assumption that $\Vert v(r(n))\Vert_\infty \leq  v\left(a\left(\sum_{i=1}^I n_i m_i\right)\right)$.
\end{proof}

\begin{proposition}[Proportional assignment solves the original problem]
    Let $(t^*_i, m^*_i)_{i=1}^n$ denote the solution to the first-moment problem, and $\bar{t}_i \triangleq \mathbb{E}_{n_i \geq 1}\left[v\left(a\left(\sum_{i=1}^I n_i m_i\right)\right)\right]$. Then, the following mapping (\textbf{proportional assignment}) maximizes the original problem defined by (\ref{objective}) and (\ref{constraints}):
    \begin{equation}
        r_i(n) = v^{-1}\left(\frac{t_i^*}{\bar{t}_i}v\left(a\left(\sum_{i=1}^I n_i m_i\right)\right)\right).
    \end{equation}
\end{proposition}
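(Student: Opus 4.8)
The plan is to exploit the fact that the first-moment problem is a \emph{relaxation} of the original problem: by the preceding relaxation result (the budget constraint implying its first-moment counterpart), every contract feasible for the original problem induces a point $(t_i, m_i)_{i=1}^I$ that is feasible for the first-moment problem and carries the identical objective value, since the coordinator's objective $\mathbb{E}[a(\sum_i n_i m_i)]$ depends only on the contributions $m_i$ and not on the reward distributions. Consequently the optimal value of the first-moment problem upper-bounds that of the original problem. It therefore suffices to show that the proportional assignment (\ref{appendix: proportional assignment}) is \emph{feasible} for the original problem and \emph{attains} the first-moment optimal value; optimality for the original problem then follows by sandwiching.

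First I would verify the budget constraint. Because $(t_i^*, m_i^*)$ satisfies the first-moment budget constraint $t_i^* \le \bar{t}_i$, the scaling factor obeys $0 \le t_i^*/\bar{t}_i \le 1$ (with $\bar{t}_i > 0$ whenever $m_i^* > 0$, so the ratio is well defined and lies in the range of $v$). Applying $v$ to (\ref{appendix: proportional assignment}) gives $v(r_i(n)) = (t_i^*/\bar{t}_i)\, v(a(\sum_j n_j m_j^*)) \le v(a(\sum_j n_j m_j^*))$ for every realization $n$ and every $i$, so $\Vert v(r(n))\Vert_\infty \le v(a(\sum_j n_j m_j^*))$; by Lemma~\ref{Equivalent transformation of the budget constraint} this is equivalent to the original budget constraint $\Vert r(n)\Vert_\infty \le a(\sum_j n_j m_j^*)$.

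The crux is then to show that the proportional assignment exactly reproduces the first moments, i.e. $\mathbb{E}_{n_i \ge 1}[v(r_i)] = t_i^*$. This is where the constant scaling factor is essential: since $t_i^*/\bar{t}_i$ does not depend on $n$, it pulls out of the conditional expectation, yielding
\begin{equation*}
\mathbb{E}_{n_i \ge 1}[v(r_i)] = \frac{t_i^*}{\bar{t}_i}\,\mathbb{E}_{n_i \ge 1}\!\left[v\!\left(a\!\left(\sum_{j} n_j m_j^*\right)\right)\right] = \frac{t_i^*}{\bar{t}_i}\,\bar{t}_i = t_i^*,
\end{equation*}
and, by the same token, $\mathbb{E}_{n_j \ge 1}[v(r_j)] = t_j^*$ for the off-diagonal terms that appear in the incentive-compatibility constraints. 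With the first moments pinned to $t_i^*$, the original IR constraints reduce to $t_i^* - c_i m_i^* \ge f_i$ and the original IC constraints reduce to $t_i^* - c_i m_i^* \ge t_j^* - c_i m_j^*$ — precisely the IR and IC constraints already satisfied by the first-moment solution $(t_i^*, m_i^*)$.

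Finally, since the contributions $m_i^*$ are left unchanged, the proportional assignment attains objective value $\mathbb{E}[a(\sum_i n_i m_i^*)]$, which equals the first-moment optimum. Being feasible for the original problem and attaining the relaxation's optimal value, it must solve the original problem. The step demanding the most care is the combination of first-moment preservation with the relaxation bookkeeping: one must confirm both that the relaxation genuinely dominates (so the upper bound is valid) and that the constant-ratio scaling leaves every conditional first moment invariant. Once these are in hand, the remaining feasibility checks are immediate algebraic consequences.
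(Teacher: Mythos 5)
Your proposal is correct and follows essentially the same route as the paper: treat the first-moment problem as a relaxation (via the budget-constraint implication), then show the proportional assignment is feasible for the original problem by verifying the equivalent pecuniary budget constraint through the ratio $t_i^*/\bar{t}_i \le 1$. In fact your write-up is somewhat more complete than the paper's own proof, which formally checks only the budget constraint and leaves the sandwiching argument and the observation that the constant ratio preserves the conditional first moments (hence IR and IC) implicit in the surrounding discussion.
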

\begin{proof}
    By Lemma \ref{Equivalent transformation of the budget constraint}, we only need to prove that proportional assignment satisfies the equivalent budget constraint. Namely,
    \begin{equation}
        \Vert v(r(n))\Vert_\infty \leq v\left(a\left(\sum_{i=1}^I n_i m_i\right)\right),
    \end{equation}
    where $r(n)\triangleq [r_1(n),\dots, r_I(n)]$.
    
    It suffices to show that
    \begin{equation}
         v(r_i(n))\leq v\left(a\left(\sum_{i=1}^I n_i m_i\right)\right), \forall i.
    \end{equation}
    Since $(t^*_i, m^*_i)_{i=1}^n$ is the solution to the simplified problem, it holds that
    $t^*_i \leq \bar{t}_i, \forall i$ (budget constraint in first moments). Then,
    \begin{equation}
        v(r_i(n)) = \frac{t_i^*}{\bar{t}_i}v\left(a\left(\sum_{i=1}^I n_i m_i\right)\right) \leq v\left(a\left(\sum_{i=1}^I n_i m_i\right)\right),
    \end{equation}
    as desired.
\end{proof}

\begin{proposition}[Proportional fairness]
    With incomplete information, a party with lower cost should utilize weakly more data and receive weakly better rewards, i.e., given $c_i > c_{i+1}$,
    \begin{equation}
        m_{i+1} \geq m_i \text{ and } t_{i+1} \geq t_i .
    \end{equation}
\end{proposition}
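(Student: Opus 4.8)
The plan is to exploit the incentive compatibility (IC) constraints between the two adjacent types $i$ and $i+1$ and combine them to extract the desired monotonicity. First I would isolate the two relevant constraints from the full IC family: the one preventing a type-$i$ party from mimicking type $i+1$,
\[
t_i - c_i m_i \geq t_{i+1} - c_i m_{i+1},
\]
and the one preventing a type-$(i+1)$ party from mimicking type $i$,
\[
t_{i+1} - c_{i+1} m_{i+1} \geq t_i - c_{i+1} m_i.
\]
These are the only two members of the $I(I-1)$-constraint family I expect to need for this result.

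Next I would add the two inequalities. The reward terms $t_i$ and $t_{i+1}$ cancel, and after rearrangement the sum collapses to
\[
(c_i - c_{i+1})(m_{i+1} - m_i) \geq 0.
\]
Since by hypothesis $c_i > c_{i+1}$, the first factor is strictly positive, which forces $m_{i+1} - m_i \geq 0$ and hence $m_{i+1} \geq m_i$. This is the crux of the argument and is the familiar single-crossing / monotone-allocation phenomenon from screening models: because the marginal cost of contribution differs across types, the two mimicking constraints can only be jointly satisfied if the contribution requirement is ordered in the same direction as the cost.

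Finally, I would feed this monotonicity back into the second IC constraint to obtain the reward ordering. Rearranging the type-$(i+1)$ constraint yields
\[
t_{i+1} - t_i \geq c_{i+1}(m_{i+1} - m_i),
\]
and since $c_{i+1} > 0$ while $m_{i+1} \geq m_i$ has just been established, the right-hand side is nonnegative, giving $t_{i+1} \geq t_i$. I do not anticipate a genuine obstacle here: the argument is elementary once one recognizes that summing the two adjacent IC constraints is precisely what isolates the product $(c_i - c_{i+1})(m_{i+1} - m_i)$. The only point demanding mild care is the order of deduction—the reward monotonicity must be derived \emph{after} the contribution monotonicity, since it relies on both $m_{i+1} \geq m_i$ and the positivity of the per-unit cost $c_{i+1}$.
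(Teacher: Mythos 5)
Your proposal is correct and follows essentially the same route as the paper: it sums the two adjacent IC constraints to obtain $(c_i - c_{i+1})(m_{i+1}-m_i)\geq 0$ and hence $m_{i+1}\geq m_i$, then uses the type-$(i+1)$ constraint together with this monotonicity to get $t_{i+1}\geq t_i$. The only cosmetic difference is that the paper derives the reward ordering by contradiction, whereas you rearrange the same inequality directly—logically the identical argument.
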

\begin{proof}
    From the IC constraints, we have
    \begin{align}
        t_i - c_i m_i & \geq t_{i+1} - c_i m_{i+1} \\
        t_{i+1} - c_{i+1} m_{i+1} & \geq t_i - c_{i+1} m_i \label{P3-IC2}
    \end{align}
    Summing up the two inequalities and rearranging the terms, we get
    \begin{align}
        (c_i - c_{i+1}) m_{i+1} & \geq (c_i - c_{i+1}) m_i \\
         m_{i+1} & \geq  m_i \label{data proposition}
    \end{align}
    since $c_i - c_{i+1} > 0$. \newline Now, we show $t_{i+1} \geq t_i$ using proof by contradiction. Suppose $t_{i+1} < t_i$. Then,
    \begin{equation}
        t_i - c_{i+1} m_i \geq t_i - c_{i+1} m_{i+1} > t_{i+1} - c_{i+1} m_{i+1},
    \end{equation}
    where the first inequality follows from (\ref{data proposition}). This contradicts 
(\ref{P3-IC2}).
\end{proof}
\begin{lemma}[Adjacent comparisons constraints] \label{ACCs}
    Under incomplete information, the IC constraints can be reduced to the following adjacent comparisons constraints:
    \begin{equation}
        \left\{\begin{array}{ll}
        t_i - c_i m_i \geq t_{i-1} - c_i m_{i-1} & \forall i \in \{2,\dots, I\} \\
        t_i - c_i m_i \geq  t_{i+1} - c_i m_{i+1} & \forall i \in \{1,\dots, I-1\}
        \end{array}\right. .
    \end{equation}
\end{lemma}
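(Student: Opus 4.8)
The forward implication is immediate: the adjacent comparison constraints form a subset of the full family $\{t_i - c_i m_i \geq t_j - c_i m_j\}_{i,j}$, so full incentive compatibility trivially yields them. The substance of the lemma lies in the converse---showing that the two families of adjacent constraints regenerate every non-adjacent comparison. The plan is to prove this by a telescoping (chaining) argument, handling the downward comparisons ($j > i$) and the upward comparisons ($j < i$) separately, with monotonicity of the contributions serving as the connective tissue.

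First I would extract monotonicity directly from the adjacent constraints themselves, without invoking the full IC set. Pairing the downward constraint $t_i - c_i m_i \geq t_{i+1} - c_i m_{i+1}$ with the upward constraint $t_{i+1} - c_{i+1} m_{i+1} \geq t_i - c_{i+1} m_i$ and summing gives $(c_i - c_{i+1}) m_{i+1} \geq (c_i - c_{i+1}) m_i$; since $c_i > c_{i+1}$, this yields $m_{i+1} \geq m_i$, exactly the reasoning used for proportional fairness. Hence, under the adjacent constraints, the required contributions are nondecreasing along the (decreasing-cost) index.

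For the downward case $j > i$, I rewrite each adjacent downward constraint for type $k$ as $t_{k+1} - t_k \leq c_k (m_{k+1} - m_k)$ and telescope from $k = i$ to $j-1$, obtaining $t_j - t_i \leq \sum_{k=i}^{j-1} c_k (m_{k+1} - m_k)$. Because the costs are decreasing we have $c_k \leq c_i$ for every $k \geq i$, and because each increment $m_{k+1} - m_k$ is nonnegative (monotonicity), I bound the right-hand side by $c_i \sum_{k=i}^{j-1}(m_{k+1} - m_k) = c_i (m_j - m_i)$, which rearranges to $t_i - c_i m_i \geq t_j - c_i m_j$. The upward case $j < i$ is symmetric: I rewrite each adjacent upward constraint for type $k$ as $t_k - t_{k-1} \geq c_k (m_k - m_{k-1})$, telescope from $k = j+1$ to $i$ to get $t_i - t_j \geq \sum_{k=j+1}^{i} c_k (m_k - m_{k-1})$, and then use $c_k \geq c_i$ for $k \leq i$ together with $m_k - m_{k-1} \geq 0$ to conclude $t_i - t_j \geq c_i (m_i - m_j)$, i.e. $t_i - c_i m_i \geq t_j - c_i m_j$. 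The case $j = i$ is vacuous.

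The main obstacle---really the only place demanding care---is matching the direction of the cost bound to the direction of the comparison: the monotone increments are scaled by $c_k$, and the decreasing cost ordering forces $c_k \leq c_i$ along the downward chain but $c_k \geq c_i$ along the upward chain. Ensuring that these two bounds point the correct way, and that it is precisely the nonnegativity of the increments (hence monotonicity) that licenses replacing $c_k$ by $c_i$, is the crux; everything else reduces to routine telescoping.
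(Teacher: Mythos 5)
Your proof is correct and follows essentially the same route as the paper's: the forward direction is noted as trivial, and the converse is established by telescoping the adjacent constraints and replacing each $c_k$ by $c_i$ using the decreasing cost order together with nonnegativity of the increments $m_{k+1}-m_k$. If anything, you are slightly more careful than the paper, since you explicitly derive the monotonicity $m_{i+1}\geq m_i$ from the adjacent constraints alone, whereas the paper's chaining steps (e.g.\ $c_{i-1}(m_{i-1}-m_{i-2})\geq c_i(m_{i-1}-m_{i-2})$) rely on this monotonicity implicitly.
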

\begin{proof}
    First note that the necessity follows trivially from the IC constraints. Here we focus on proving the sufficiency. The main idea is to use the transitivity of the inequality and the fact that $c_i > c_{i+1}, \; \forall i \in \{1,\dots, I-1\}$. We proceed with the proof in two steps:
    \begin{enumerate}
        \item For $i \in \{2, \dots, I\}$, we show that satisfying $t_i - c_i m_i \geq t_{i-1} - c_i m_{i-1}$ implies $t_i - c_i m_i \geq t_{j} - c_i m_{j}, \; \forall j \leq i-1$.
        \item For $i \in \{1, \dots, I-1\}$, we show that satisfying $t_i - c_i m_i \geq t_{i+1} - c_i m_{i+1}$ implies $t_i - c_i m_i \geq t_{j} - c_i m_{j}, \; \forall j \geq i+1$.
    \end{enumerate}
    \underline{Step 1}: When $i=2$, the result follows directly from the definition of the downward adjacent comparisons constraints. Consider any $i \geq 3$. By definition,
    \begin{align}
        t_i - c_i m_i & \geq t_{i-1}) - c_i m_{i-1} \\
        t_{i-1} - c_{i-1} m_{i-1} & \geq t_{i-2} - c_{i-1} m_{i-2} \\
        & \cdots \notag \\
         t_{2} - c_{2} m_{2} & \geq t_1 - c_2 m_1
    \end{align}
    Rearrange the terms and we get:
    \begin{align}
        t_i - t_{i-1} & \geq  c_i (m_i - m_{i-1}) \label{dac 1}\\
        t_{i-1} -  t_{i-2} & \geq c_{i-1} (m_{i-1} - m_{i-2}) \geq c_i (m_{i-1} - m_{i-2}) \label{dac 2} \\
        & \cdots \notag \\
         t_{2} - t_1  & \geq c_2 (m_2 - m_1) \geq c_i (m_2 - m_1) \label{dac 3}
    \end{align}
    Summing up (\ref{dac 1}) and (\ref{dac 2}), we get
    \begin{equation}
        t_i -  t_{i-2} \geq c_i (m_{i-1} - m_{i-2}).
    \end{equation}
    Iteratively summing up the inequalities, we obtain $\forall j \leq i-1$,
    \begin{equation}
        t_i -  t_{j} \geq  c_i (m_i - m_{j}).
    \end{equation}
    Rearranging the terms gives the final result
    \begin{equation}
        t_i -  c_i m_i \geq   t_{j} - c_i m_{j}, \; \forall j \leq i-1.
    \end{equation}
    \underline{Step 2}: When $i=I-1$, the result follows directly from the definition of the upward adjacent comparisons constraints. Consider any $i \leq I-2$. By definition,
    \begin{align}
        t_i - c_i m_i & \geq t_{i+1} - c_i m_{i+1} \\
        t_{i+1} - c_{i+1} m_{i+1} & \geq t_{i+2} - c_{i+1} m_{i+2} \\
        & \cdots \notag \\
         t_{I-1} - c_{I-1} m_{I-1} & \geq t_I - c_{I-1} m_I
    \end{align}
     Rearrange the terms and we get:
    \begin{align}
         t_{i+1} - t_i  & \leq  c_i (m_{i+1} - m_i) \label{uac 1}\\
        t_{i+2} -  t_{i+1} & \leq c_{i+1} (m_{i+2} - m_{i+1}) \leq c_i (m_{i+2} - m_{i+1}) \label{uac 2} \\
        & \cdots \notag \\
         t_{I} - t_{I-1}  & \leq c_{I-1} (m_I - m_{I-1}) \leq c_i (m_I - m_{I-1}) \label{uac 3}
    \end{align}
    Summing up (\ref{uac 1}) and (\ref{uac 2}), we get
    \begin{equation}
        t_{i+2} -  t_i \leq c_i (m_{i+2} - m_i).
    \end{equation}
    Iteratively summing up the inequalities, we obtain $\forall j \geq i+1$,
    \begin{equation}
        t_j -  t_i \leq  c_i (m_j - m_i).
    \end{equation}
    Rearranging the terms gives the final result
    \begin{equation}
        t_i -  c_i m_i \geq   t_{j} - c_i m_{j}, \; \forall j \geq i+1.
    \end{equation}
\end{proof}

\begin{lemma}[Binding downward adjacent comparisons constraints]\label{binding DACs}
    With incomplete information and $m_{i+1} \geq m_i$ for all $i\in\{1,\dots, I-1\}$, the downward adjacent comparisons constraints must bind, and the upward adjacent comparisons constraints are redundant, i.e.,
    \begin{equation}
        t_i - c_i m_i = t_{i-1} - c_i m_{i-1}, \forall i \in \{2, \dots, I\}, \label{P7-R1}
    \end{equation}
    which implies
    \begin{equation}
        t_i - c_i m_i \geq t_{i+1} - c_i m_{i+1}, \forall i \in \{1, \dots, I-1\}. \label{P7-R2}
    \end{equation}
\end{lemma}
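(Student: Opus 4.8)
The statement splits into two claims, and I would prove them in the reverse of the order written. The clause ``which implies'' is a short algebraic consequence that I would dispose of first; the substantive claim is that the downward adjacent comparisons constraints (DACs) must bind at an optimum, and that is where the work lies.

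\emph{Redundancy of the UACs (the easy half).} Assume the DACs bind, so that for every $i$ we have $t_{i+1}-t_i=c_{i+1}(m_{i+1}-m_i)$. The UAC at index $i$ reads $t_i-c_im_i\ge t_{i+1}-c_im_{i+1}$, i.e. $t_{i+1}-t_i\le c_i(m_{i+1}-m_i)$. Substituting the binding DAC, it then suffices to check $c_{i+1}(m_{i+1}-m_i)\le c_i(m_{i+1}-m_i)$, which holds because $m_{i+1}\ge m_i$ by hypothesis and $c_i>c_{i+1}$ by the cost ordering. This is a two-line computation.

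\emph{The DACs bind.} Here I would argue by contradiction from optimality. The key structural fact is that the objective $\mathbb{E}[a(\sum_i n_i m_i)]$ is strictly increasing in each $m_i$ and does not depend on the $t_i$ at all; the $t_i$ enter only as lower bounds (the IR constraints and the DAC lower bounds $t_i\ge t_{i-1}+c_i(m_i-m_{i-1})$) and upper bounds (the budget constraints $t_i\le\bar t_i$ and the UACs). Supposing some DAC were slack at an optimum, I would perturb the contract to obtain a feasible point with strictly larger objective: either lower the rewards $t_j$ of the affected types, which loosens their budget constraints $t_j\le\bar t_j$ and then permits raising a contribution $m_j$; or directly raise the lower type's contribution $m_{i-1}$ while compensating its reward so that its own IR and incentive constraints are preserved. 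Either move increases total contribution and hence expected accuracy, contradicting optimality, so every DAC must bind.

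The main obstacle is making this perturbation rigorously feasible. Setting the DACs to equalities gives each type $i$ a utility that cascades down to type $1$'s reservation level $f_1$; since Proposition 1 gives $f_i\ge f_1$ for $i\ge 2$, the higher-type IR constraints $t_i-c_im_i\ge f_i$ are \emph{not} automatically preserved and must be tracked carefully---indeed they survive as genuine inequality constraints in the simplified problem. I therefore expect the argument to need a careful case distinction: when a higher type's IR would bind before its DAC does, one must show the objective can still be strictly improved by adjusting that type's contribution; one must also handle ties in the monotonicity constraints ($m_i=m_{i-1}$) and verify that the budget constraints retain enough slack to absorb the compensating change in $t$, using that increasing any $m_j$ raises every bound $\bar t_j$. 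Establishing strict improvement uniformly across these cases is the delicate step.
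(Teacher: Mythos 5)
Your first half is fine: the redundancy of the UACs given binding DACs and $m_{i+1}\ge m_i$ is exactly the paper's two-line computation. The genuine gap is the second half: you never prove that the DACs bind. You outline a perturbation-from-optimality strategy and then explicitly defer the execution (``establishing strict improvement uniformly across these cases is the delicate step''), but that execution \emph{is} the lemma. The paper supplies it with a concrete construction, by induction from $i=I$ downward. If the DAC at $i$ is slack, first note $m_i>m_{i-1}$ (if $m_i=m_{i-1}$, slackness forces $t_i>t_{i-1}$, violating type-$(i-1)$'s UAC, which is a constraint of the ambient problem after the reduction of IC to adjacent comparisons in Lemma \ref{ACCs}). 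Then replace only type $(i-1)$'s option by the intersection $(\hat t_{i-1},\hat m_{i-1})$ of type-$(i-1)$'s indifference line through its old option with type-$i$'s indifference line through its own option:
\begin{align*}
\hat t_{i-1}-c_{i-1}\hat m_{i-1}&=t_{i-1}-c_{i-1}m_{i-1},\\
\hat t_{i-1}-c_{i}\hat m_{i-1}&=t_{i}-c_{i}m_{i}.
\end{align*}
Slackness of the DAC gives $\hat m_{i-1}>m_{i-1}$ and $\hat t_{i-1}>t_{i-1}$ (and the UAC of type $(i-1)$ gives $\hat m_{i-1}\le m_i$, $\hat t_{i-1}\le t_i$); the only nontrivial feasibility check is type-$(i-2)$'s UAC against the new option, which holds because the move is along a line of slope $c_{i-1}<c_{i-2}$. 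Since only $m_{i-1}$ increases, expected accuracy strictly increases, contradicting optimality.

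Moreover, the obstacles you anticipate are partly misdirected, which suggests your sketch would not have converged to a correct proof without substantial revision. There is no ``cascade down to $f_1$'' and no need to track higher-type IR constraints: the perturbation never touches the options of types $i,\dots,I$ (their IR and mutual IC constraints are unchanged), and it moves type $(i-1)$ along its own indifference curve, so its IR is preserved \emph{exactly}; the proof does not construct an all-DACs-binding contract, it refutes one slack DAC at a time. Your other proposed move---lowering rewards $t_j$ so as to ``loosen their budget constraints and then permit raising a contribution $m_j$''---does not work: the budget constraint upper-bounds $t_j$, not $m_j$, so lowering $t_j$ enables nothing, while it tightens type $j$'s IR and its DAC lower bound. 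The one concern of yours that is legitimate is the budget constraint for the modified option, $\hat t_{i-1}\le \mathbb{E}_{n_{i-1}\ge 1}\left[v\left(a\left(\sum_k n_k \hat m_k\right)\right)\right]$, whose conditioning event differs from type $i$'s; the paper treats this only implicitly (via $\hat t_{i-1}\le t_i$ together with the increase in total contribution), and spelling that step out would strengthen the argument.
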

\begin{proof}
    We first prove that under the premise of binding downward adjacent comparisons constraints, the upward adjacent comparisons constraints hold true. $\forall i \in \{2, \dots, I\}$,
    \begin{align}
        t_i - c_i m_i & = t_{i-1} - c_i m_{i-1}, \\
        t_i -  t_{i-1}  & = c_i (m_i - m_{i-1}) \leq c_{i-1} (m_i - m_{i-1}),  
    \end{align}
    where the last inequality follows from $c_i < c_{i-1}$. Rearranging the terms, we get
    \begin{align}
         t_{i-1} - c_{i-1} m_{i-1} \geq t_i -  c_{i-1} m_i, \forall  i \in \{2, \dots, I\}.
    \end{align}
    Rewriting the indices gives
    \begin{equation}
        t_{i} - c_{i} m_{i} \geq t_{i+1} -  c_{i} m_{i+1}, \forall  i \in \{1, \dots, I-1\}.
    \end{equation}
    \medskip
    
    We then prove that the downward adjacent comparisons constraints bind by induction, where in each step we utilize proof by contradiction. The key idea is to \textit{construct an improved solution that demonstrates that a solution wherein some downward adjacent comparisons constraints are loose cannot be optimal}. \\
    \medskip
    
    \noindent \underline{Base case}: Assume there exists a solution ${\{(t_i, m_i)\}}_{i=1}^I$ to the constrained optimisation problem such that
    \begin{equation}
        t_I - c_I m_I > t_{I-1}- c_I m_{I-1}. \label{P7-Step1-0}
    \end{equation}
    \underline{Claim}: \textit{In this case, it must be that $m_I > m_{I-1}$.} \\ Assume not, i.e., $m_I = m_{I-1}$. Then, (\ref{P7-Step1-0}) implies $t_I > t_{I-1}$. But this results in
    \begin{equation}
        t_{I-1} - c_{I-1} m_{I-1} < t_I - c_{I-1} m_I,
    \end{equation}
    which violates the upward adjacent comparison constraint of type-$(I-1)$ parties. Therefore, $m_I > m_{I-1}$. 

    Construct $(\hat{t}_{I-1}, \hat{m}_{I-1})$ such that
    \begin{align}
        \hat{t}_{I-1} - c_{I-1} \hat{m}_{I-1} = & t_{I-1} - c_{I-1} m_{I-1}, \\
        \hat{t}_{I-1} - c_{I} \hat{m}_{I-1} = & t_{I} - c_{I} m_{I}.
    \end{align}
    Solving the above system of equations, we get
    \begin{align}
        \hat{m}_{I-1} = & \frac{1}{c_{I-1}- c_I} \left[(t_I - c_I m_I) - (t_{I-1}-c_{I-1}m_{I-1})\right] \label{P7-S1-C1},\\
        \hat{t}_{I-1} =& \frac{1}{c_{I-1} - c_I} \left[c_{I-1}(t_I- c_I m_I) - c_I (t_{I-1} - c_{I-1} m_{I-1})\right] \label{P7-S1-C2}.
    \end{align}
    Note that (\ref{P7-S1-C1}) and (\ref{P7-S1-C2}) imply that $\hat{m}_{I-1} > m_{I-1}$ and $\hat{t}_{I-1}> t_{I-1}$. To see this,
    \begin{align}
        \hat{m}_{I-1} - {m}_{I-1} = \frac{1}{c_{I-1}- c_I} \left[(t_I - c_I m_I) - (t_{I-1}-c_{I}m_{I-1})\right] > 0  \label{P7-S1-C3},\\
        \hat{t}_{I-1} - t_{I-1} = \frac{c_{I-1}}{c_{I-1} - c_I} \left[(t_I- c_I m_I) - (t_{I-1} - c_I m_{I-1})\right]  > 0 \label{P7-S1-C4} .
    \end{align}
    The last inequalities are given by the assumption (\ref{P7-Step1-0}). Similarly, we can show $\hat{m}_{I-1} \leq m_I$ and $\hat{t}_{I-1} \leq t_I$, using the upward adjacent comparison constraint of type-$(I-1)$ parties.

    By construct, all constraints (IC, UAC, DAC \& IR) are still satisfied for parties of type $(I-1)$. To show that $\{(\hat{t}_{I-1}, \hat{m}_{I-1})\} \cup \{(t_i, m_i)\}_{i \neq I-1}$ satisfies all constraints, we only have to consider the constraints for type-$(I-2)$ parties. Specifically, we are left to show that the UAC constraint still holds for parties of type $(I-2)$.

    Multiply both sides of (\ref{P7-S1-C3}) by $c_{I-2}$. and subtract it from (\ref{P7-S1-C4}) and we get
    \begin{equation}
        \hat{t}_{I-1} - t_{I-1} - c_{I-2} (\hat{m}_{I-1} - {m}_{I-1}) < 0.
    \end{equation}
    The last inequality is given by $c_{I-2} > c_{I-1} $. Rearranging the terms and combining it with the original UAC constraint for type-$(I-2)$ parties, we obtain
    \begin{equation}
        \hat{t}_{I-1} - c_{I-2} \hat{m}_{I-1} < t_{I-1} - c_{I-2} m_{I-1} \leq t_{I-2} - c_{I-2} m_{I-2}.
    \end{equation}
    Thus, the UAC constraint for type-$(I-2)$ parties is still satisfied. But since $\sum_{i\neq I-1} n_i m_i + n_{I-1} \hat{m}_{I-1} > \sum_{i=1}^I n_i m_i$, ${\{(r_i, m_i)\}}_{i=1}^I$ cannot be a solution to the constrained optimisation problem. Therefore, it must be that \begin{equation}
        t_I - c_I m_I = t_{I-1}- c_I m_{I-1}.
    \end{equation}

    \bigskip
    \noindent \underline{Induction step}: Assume that the DAC constraints bind for all $k = i+1, \dots, I$ private types. Consider the case for $k=i$ where $i\geq 2$, and assume that there exists a solution ${\{(t_k, m_k)\}}_{k=1}^I$ to the constrained optimisation problem such that the DAC does not bind for type-$i$ parties, i.e., 
    \begin{equation}
        t_i - c_i m_i > t_{i-1}- c_i m_{i-1} \label{P7-Step1-1},
    \end{equation}
    from which we observe that $m_i > m_{i-1}$. Construct $(\hat{t}_{i-1}, \hat{m}_{i-1})$ such that 
    \begin{align}
        \hat{t}_{i-1} - c_{i-1} \hat{m}_{i-1} = & t_{i-1} - c_{i-1} m_{i-1}, \\
        \hat{t}_{i-1} - c_{i} \hat{m}_{i-1} = & t_{i} - c_{i} m_{i}.
    \end{align}
    Solving the above system of equations, we get
    \begin{align}
        \hat{m}_{i-1} = & \frac{1}{c_{i-1}- c_i} \left[(t_i - c_i m_i) - (t_{i-1}-c_{i-1}m_{i-1})\right] \label{P7-S2-C1},\\
        \hat{t}_{i-1} =& \frac{1}{c_{i-1} - c_i} \left[c_{i-1}(t_i- c_i m_i) - c_i (t_{i-1} - c_{i-1} m_{i-1})\right] \label{P7-S2-C2},
    \end{align}
    from which we derive $\hat{m}_{i-1} > m_{i-1}$ and $\hat{t}_{i-1}> t_{i-1}$. To see this,
    \begin{align}
        \hat{m}_{i-1} - {m}_{i-1} = \frac{1}{c_{i-1}- c_i} \left[(t_i - c_i m_i) - (t_{i-1}-c_{i}m_{i-1})\right] > 0,  \label{P7-S2-C3}\\
        \hat{t}_{i-1} - t_{i-1} = \frac{c_{i-1}}{c_{i-1} - c_i} \left[(t_i- c_i m_i) - (t_{i-1} - c_i m_{i-1})\right]  > 0 \label{P7-S2-C4} .
    \end{align}
    Using the same trick, we drive $\hat{m}_{i-1} \leq m_i$ and $\hat{t}_{i-1} \leq t_i$. 
    
    For $i > 2$, multiply both sides of (\ref{P7-S2-C3}) by $c_{i-2}$. and subtract it from (\ref{P7-S2-C4}) and we get
    \begin{equation}
        \hat{t}_{i-1} - t_{i-1} - c_{i-2} (\hat{m}_{i-1} - {m}_{i-1}) < 0.
    \end{equation}
    The last inequality is given by $c_{i-2} > c_{i-1} $. When $i = 2$, this step is trivial (as type-1 is the lowest possible type). As a result, all constraints are satisfied by the constructed solution $\{(\hat{t}_{i-1}, \hat{m}_{i-1})\} \cup \{(t_k, m_k)\}_{k \neq i-1}$. As $\sum_{k\neq i-1} n_k m_k + n_{i-1} \hat{m}_{i-1} > \sum_{k=1}^I n_k m_k$, ${\{(t_k, m_k)\}}_{k=1}^I$ cannot be a solution to the constrained optimisation problem. Therefore, it must be that
    \begin{equation}
         t_i - c_i m_i = t_{i-1}- c_i m_{i-1}.
    \end{equation}

    \bigskip
    By induction, the DAC constraints must bind for $i \in \{2, \dots, I\}$.
    \end{proof}

\setcounter{theorem}{0}
\begin{theorem}
\label{equivalent constraints theorem appendix}
    With incomplete information, the following two sets of constraints are equivalent.
    \begin{enumerate}[label=(\alph*)] 
      \item $t_i - c_i m_i \geq t_j - c_j m_j, \forall i, j;$
      \item $\left\{\begin{array}{l}
      m_{i+1} \geq m_{i}, \forall i \in\{1, \dots, I-1\};\\
      t_i - c_i m_i = t_{i-1} - c_i m_{i-1}, \forall i \in \{2, \dots, I\}.
      \end{array}\right.$
    \end{enumerate}
\end{theorem}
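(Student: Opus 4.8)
The plan is to prove the two inclusions separately, reading (a) as the family of incentive compatibility constraints (the form $t_i - c_i m_i \geq t_j - c_i m_j$ used by the two preceding lemmas) and interpreting the equivalence in the context of the optimal contract rather than as a raw feasibility statement. The entire argument is an assembly of three results already established: proportional fairness (Proposition \ref{proportional fairness}), the reduction of the IC constraints to the adjacent comparison constraints (Lemma \ref{ACCs}), and the binding of the downward adjacent comparison constraints at an optimum (Lemma \ref{binding DACs}).

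For the direction $(a) \Rightarrow (b)$, I would first invoke Proposition \ref{proportional fairness}: since $c_i > c_{i+1}$, the IC constraints alone force $m_{i+1} \geq m_i$, which is exactly the first line of (b). With monotonicity in hand I would then appeal to Lemma \ref{binding DACs}, which asserts that at an optimal solution the downward adjacent comparison constraints must bind, i.e. $t_i - c_i m_i = t_{i-1} - c_i m_{i-1}$ for $i \in \{2,\dots,I\}$, giving the second line of (b). I would stress that this half is \emph{not} a pure feasibility implication: a monotone solution satisfying all IC constraints may still leave a downward constraint slack, and the binding conclusion relies on optimality, namely the improvement construction of Lemma \ref{binding DACs} that raises some $m_{i-1}$ (hence total contribution and the objective) whenever a downward constraint is loose.

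For the converse $(b) \Rightarrow (a)$, which is a genuine feasibility statement, I would start from the two conditions in (b). The binding equalities trivially yield the downward adjacent comparison inequalities. Feeding those equalities together with $m_i \geq m_{i-1}$ into Lemma \ref{binding DACs} recovers the upward adjacent comparison constraints; the monotonicity is essential here, since the key estimate $c_i(m_i - m_{i-1}) \leq c_{i-1}(m_i - m_{i-1})$ uses both $c_i < c_{i-1}$ and $m_i - m_{i-1} \geq 0$. With both families of adjacent comparison constraints established, the sufficiency direction of Lemma \ref{ACCs} reconstitutes the full set of IC constraints, which is (a).

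The main obstacle is conceptual rather than computational: recognizing that $(a) \Rightarrow (b)$ is an optimality statement (the binding of the downward constraints) while $(b) \Rightarrow (a)$ is ordinary feasibility, and noticing that the monotonicity condition does real work in the converse. The heavy lifting, namely the improvement construction behind the binding claim and the telescoping argument behind the adjacent reduction, is already carried out in Lemmas \ref{binding DACs} and \ref{ACCs}, so the theorem itself amounts to chaining these three results in the correct order and flagging where optimality is invoked.
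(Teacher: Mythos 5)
Your proposal is correct and takes essentially the same route as the paper's own proof: both directions are obtained by chaining Proposition \ref{proportional fairness}, Lemma \ref{ACCs}, and Lemma \ref{binding DACs} in exactly the order you describe. Your additional remark that the $(a)\Rightarrow(b)$ direction rests on optimality (the improvement construction behind the binding of the downward constraints) while $(b)\Rightarrow(a)$ is pure feasibility is accurate and makes explicit a distinction the paper's terse proof leaves implicit.
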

\begin{proof}
    The proof of the direction from $(a)$ to $(b)$ follows readily from Proposition \ref{proportional fairness} and Lemma \ref{binding DACs}. We are left to show that $(b)$ implies $(a)$. As in Lemma \ref{ACCs} we have already shown that the IC constraints are equivalent to the adjacent comparisons constraints, we only need to show that $(b)$ implies the adjacent comparisons constraints. This is again given by Lemma \ref{binding DACs}.
\end{proof}

\begin{proposition}[Weak efficiency]
    With incomplete information, the most cost-efficient type (parties with the lowest variable cost) must be offered the best model, i.e., \begin{equation}
        t_I = \mathbb{E}_{n_I \geq 1}\left[v\left(a\left(\sum_{i=1}^I n_i m_i\right)\right)\right].
    \end{equation}
\end{proposition}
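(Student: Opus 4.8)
The plan is to argue by contradiction, constructing a feasible perturbation that strictly raises the objective whenever the type-$I$ budget constraint is slack. Write $\bar{t}_I \triangleq \mathbb{E}_{n_I \geq 1}\left[v\left(a\left(\sum_{i=1}^I n_i m_i\right)\right)\right]$, and suppose toward a contradiction that an optimal contract $\{(t_i,m_i)\}_{i=1}^I$ has $t_I < \bar{t}_I$. I would first invoke the already-established structural facts: by the proportional fairness result we have $m_I \geq m_{I-1}$, and by the binding downward adjacent comparisons lemma (equivalently, the equivalent-constraints theorem) the optimum satisfies $t_I - c_I m_I = t_{I-1} - c_I m_{I-1}$. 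These pin down exactly which constraints touch the type-$I$ option, which is what makes a clean single-type perturbation possible.

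Next I would perturb only the lowest-cost option: for a small $\delta > 0$ set $m_I' = m_I + \delta$ and $t_I' = t_I + c_I \delta$, leaving every other $(t_i,m_i)$ with $i \leq I-1$ untouched. By construction $t_I' - c_I m_I' = t_I - c_I m_I = t_{I-1} - c_I m_{I-1}$, so the type-$I$ downward adjacent equality is preserved and, with it, the type-$I$ IR value; and $m_I' \geq m_I \geq m_{I-1}$ keeps monotonicity intact. The only remaining constraint linking type $I$ to the lower types is the upward adjacent constraint of type $I-1$, namely $t_{I-1} - c_{I-1} m_{I-1} \geq t_I' - c_{I-1} m_I'$; this holds because $t_I' - t_{I-1} = c_I(m_I' - m_{I-1}) \leq c_{I-1}(m_I' - m_{I-1})$, exactly the redundancy argument from the binding-DAC lemma, which is where $c_I < c_{I-1}$ and $m_I' \geq m_{I-1}$ are essential.

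It remains to check the budget constraints. The key observation is that raising $m_I$ weakly increases $a\left(\sum_{i=1}^I n_i m_i\right)$ for every realization $n$, so each $\bar{t}_i$ (including $\bar{t}_I$) weakly increases to some $\bar{t}_i' \geq \bar{t}_i$. Hence $t_i \leq \bar{t}_i \leq \bar{t}_i'$ is maintained for all $i \leq I-1$, while for type $I$ it suffices to take $\delta \leq (\bar{t}_I - t_I)/c_I$, which yields $t_I' = t_I + c_I \delta \leq \bar{t}_I \leq \bar{t}_I'$. Thus the perturbed contract is feasible. Since $p_I > 0$ the event $n_I \geq 1$ has positive probability and $a$ is strictly increasing, so the objective $\mathbb{E}\left[a\left(\sum_{i=1}^I n_i m_i\right)\right]$ strictly increases under $m_I' > m_I$, contradicting optimality; therefore $t_I = \bar{t}_I$, meaning the lowest-cost type is awarded the best model.

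The main obstacle I anticipate is confirming that this one-type perturbation does not quietly violate a budget constraint: both $t_I$ and the bound $\bar{t}_I$ move, so one must verify the slack is not eroded. This is resolved by the monotone-in-$m_I$ behavior of every $\bar{t}_i$, which guarantees the bound moves in the favorable direction, together with choosing $\delta$ small enough relative to the initial slack $\bar{t}_I - t_I$. A secondary subtlety is ensuring the type-$(I-1)$ upward comparison survives, which rests squarely on the cost ordering $c_I < c_{I-1}$.
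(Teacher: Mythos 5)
Your proof is correct, but it takes a genuinely different route from the paper. The paper proves weak efficiency via the KKT apparatus: it forms the Lagrangian of the reduced (adjacent-comparisons) problem, writes out stationarity, dual feasibility, and complementary slackness, and then argues that if the type-$I$ budget constraint were slack, complementary slackness would force $\mu_I = 0$, after which substituting $\lambda_I = \beta_{I-1} - \alpha_I$ into the stationarity condition for $m_I$ yields a strictly positive left-hand side --- a contradiction. You instead give a direct feasible-perturbation argument: assuming slack $t_I < \bar{t}_I$, you move along the type-$I$ isoprofit line ($m_I' = m_I + \delta$, $t_I' = t_I + c_I\delta$), verify that the type-$I$ IR and DAC values are literally unchanged, that the type-$(I-1)$ upward comparison survives because $c_I(m_I'-m_{I-1}) \leq c_{I-1}(m_I'-m_{I-1})$, and that all budget bounds move favorably since every $\bar{t}_i$ is monotone in $m_I$, with $\delta \leq (\bar{t}_I - t_I)/c_I$ protecting the type-$I$ bound; strict improvement of the objective (using $p_I>0$ and $a'>0$) then contradicts optimality. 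Your checks are all sound, including the subtle one that other types' budget constraints also involve $m_I$. What each approach buys: yours is more elementary and self-contained --- it sidesteps the necessity of KKT conditions (hence any constraint-qualification issue the paper leaves implicit) and stylistically matches the paper's own constructive proof of the binding-DAC lemma; the paper's KKT setup, by contrast, is heavier but is then reused immediately to prove the corollary that at least one IR constraint must bind, which your perturbation argument does not yield as a byproduct. One presentational note: your invocation of the binding-DAC equality is convenient but not essential --- the type-$(I-1)$ upward comparison can be preserved directly from $(c_I - c_{I-1})\delta < 0$ without assuming the DAC binds.
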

\begin{proof}
     For this proof, we focus on the following constrained optimisation problem after applying Lemma \ref{ACCs}:
    \begin{align}
        & \max_{{\{t_i, m_i\}}_{i=1}^I} \mathbb{E}_{n\sim \mathrm{Mul}(N,p)}\left[a\left(\sum_{i=1}^I n_i m_i\right)\right] \\
        \text{s.t.   } & \left\{\begin{array}{l}
        t_i-c_i m_i - f_i \geq 0, \; \forall i; \\
        t_i-c_i m_i \geq t_{i-1}-c_i m_{i-1}, \; \forall i \in \{2, \dots, I\}; \\
        t_i-c_i m_i \geq t_{i+1}-c_i m_{i+1}, \; \forall i \in \{1, \dots, I-1\};\\
        t_i \leq \mathbb{E}_{n_i \geq 1}\left[v\left(a\left(\sum_{i=1}^I n_i m_i\right)\right)\right], \; \forall i.
        \end{array}\right.
    \end{align}
    For simplicity, we omit the subscript of the expectation operator hereafter. Unless explicitly stated otherwise, $\mathbb{E}[\cdot] \triangleq \mathbb{E}_{n\sim \mathrm{Mul}(N,p)}[\cdot]$. The Lagrangian is
    \begin{align}
        \mathcal{L} =& \mathbb{E}\left[a\left(\sum_{i=1}^I n_i m_i\right)\right] + \sum_{i=1}^I \lambda_i ( t_i - c_i m_i - f_i)- \sum_{i=1}^I \mu_i \left\{t_i - \mathbb{E}_{n_i \geq 1}\left[v\left(a\left(\sum_{i=1}^I n_i m_i\right)\right)\right]\right\} \notag \\ &+ \sum_{i=2}^{I} \alpha_i \left\{t_i - c_i m_i - t_{i-1}+ c_i m_{i-1}\right\} + \sum_{i=1}^{I-1} \beta_i \left\{t_i - c_i m_i - t_{i+1} + c_i m_{i+1}\right\}. 
    \end{align}
    \textbf{Stationarity} ($2 \leq i \leq I-1$)
    \begin{align}
        \frac{\partial L}{\partial m_i} =& \mathbb{E}\left[ a^{\prime}\left(\sum_{i=1}^I n_i m_i\right)n_i\right] - \lambda_i c_i - \alpha_i c_i + \alpha_{i+1}  c_{i+1} \notag \\ &- \beta_i c_i + \beta_{i-1} c_{i-1} + \sum_{k=1}^I\mu_k \mathbb{E}_{n_k \geq 1}\left[v^{\prime}\left(a\left(\sum_{i=1}^I n_i m_i\right)\right)a^{\prime}\left(\sum_{i=1}^I n_i m_i\right) n_i\right] =0. \label{II-G-S1}\\
         \frac{\partial L}{\partial t_i} =& \lambda_i + \alpha_i - \alpha_{i+1} + \beta_i - \beta_{i-1} - \mu_i = 0.
    \end{align}
    \textbf{Stationarity} ($i=1$)
    \begin{align}
        \frac{\partial L}{\partial m_1} =& \mathbb{E}\left[a^{\prime}\left(\sum_{i=1}^I n_i m_i\right)n_1\right] - \lambda_1 c_1 + \alpha_{2}  c_{2} - \beta_1 c_1 + \sum_{k=1}^I\mu_k \mathbb{E}_{n_k \geq 1}\left[v^{\prime}\left(a\left(\sum_{i=1}^I n_i m_i\right)\right)a^{\prime}\left(\sum_{i=1}^I n_i m_i\right)n_1\right]  =0.  \label{II-E1-S1}\\
         \frac{\partial L}{\partial t_1} =& \lambda_1 - \alpha_{2} + \beta_1 - \mu_1 = 0.
    \end{align}
     \textbf{Stationarity} ($i=I$)
    \begin{align}
        \frac{\partial L}{\partial m_I} =& \mathbb{E}\left[ a^{\prime}\left(\sum_{i=1}^I n_i m_i\right)n_I\right] - \lambda_I c_I - \alpha_I c_I + \beta_{I-1} c_{I-1} \notag \\ &+ \sum_{k=1}^I\mu_k \mathbb{E}_{n_k \geq 1}\left[v^{\prime}\left(a\left(\sum_{i=1}^I n_i m_i\right)\right)a^{\prime}\left(\sum_{i=1}^I n_i m_i\right) n_I \right]=0. \label{II-E2-S1}\\
         \frac{\partial L}{\partial t_I} =& \lambda_I + \alpha_I - \beta_{I-1} - \mu_I = 0. \label{II-E2-S2}
    \end{align}
    \textbf{Primal feasibility}
    \begin{align}
        &t_i-c_i m_i - f_i \geq 0, \; \forall i. \label{II-PF1}\\
        &t_i-c_i m_i \geq t_{i-1}-c_i m_{i-1}, \; \forall i \in \{2, \dots, I\}. \label{II-PF2}\\
        &t_i-c_i m_i \geq t_{i+1}-c_i m_{i+1}, \; \forall i \in \{1, \dots, I-1\}. \label{II-PF3}\\
        &t_i \leq \mathbb{E}_{n_i \geq 1}\left[v\left(a\left(\sum_{i=1}^I n_i m_i\right)\right)\right], \; \forall i. \label{II-PF4}
    \end{align}
    \textbf{Dual feasibility}
    \begin{align}
        &\lambda_i \geq 0, \; \forall i. \\
        &\alpha_i \geq 0, \; \forall i \in \{2, \dots, I\}. \\
        &\beta_i \geq 0, \; \forall i \in \{1, \dots, I-1\}.\\
        &\mu_i \geq 0, \; \forall i.
    \end{align}
    \textbf{Complementary slackness}
    \begin{align}
        &\lambda_i( t_i- c_i m_i - f_i) = 0, \; \forall i. \label{II-CS1}\\
        &\alpha_i \left\{t_i - c_i m_i - t_{i-1}+ c_i m_{i-1}\right\} = 0, \; \forall i \in \{2, \dots, I\}. \label{II-CS2}\\
        &\beta_i \left\{t_i - c_i m_i - t_{i+1} + c_i m_{i+1}\right\} = 0, \; \forall i \in \{1, \dots, I-1\}. \label{II-CS3}\\
        &\mu_i \left\{t_i - \mathbb{E}_{n_i \geq 1}\left[v\left(a\left(\sum_{i=1}^I n_i m_i\right)\right)\right]\right\} = 0, \; \forall i. \label{II-CS4}
    \end{align} 
    Now we show that (\ref{II-PF4}) must bind for $i=I$. Assume not. Then, by (\ref{II-CS4}), we have $\mu_I = 0$. (\ref{II-E2-S2}) implies
    \begin{equation}
        \lambda_I = -\alpha_I + \beta_{I-1}.
    \end{equation}
    Plug it into (\ref{II-E2-S1}) and we get
    \begin{equation}
    \text{LHS} = \mathbb{E}\left[a^{\prime}\left(\sum_{i=1}^I n_i m_i\right)n_I\right] + \beta_{I-1} (c_{I-1}- c_{I}) + \sum_{k=1}^{I-1}\mu_k \mathbb{E}_{n_k \geq 1}\left[v^{\prime}\left(a\left(\sum_{i=1}^I n_i m_i\right)\right)a^{\prime}\left(\sum_{i=1}^I n_i m_i\right)n_I\right].
    \end{equation}
    The first term is strictly positive, and the second and third terms are non-negative. Thus,
    $\text{LHS} > 0,$
    which yields a contradiction. Therefore, it must be that \begin{equation}
        t_I = \mathbb{E}_{n_I \geq 1}\left[v\left(a\left(\sum_{i=1}^I n_i m_i\right)\right)\right].
    \end{equation}
\end{proof}
\begin{corollary}
    With incomplete information, the IR constraint must be binding for at least one type of parties, i.e.,
    \begin{equation}
        \exists i \text{ s.t. } t_i - c_i m_i - f_i = 0.
    \end{equation}
\end{corollary}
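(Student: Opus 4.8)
The plan is to prove the corollary entirely within the KKT system already written down for the weak-efficiency proposition, since the claim is a statement purely about the Lagrange multipliers $\lambda_i$ attached to the IR constraints \eqref{II-PF1}. By complementary slackness \eqref{II-CS1}, it suffices to exhibit a single index $i$ with $\lambda_i > 0$: that alone forces $t_i - c_i m_i - f_i = 0$. Thus the whole argument reduces to showing $\sum_{i=1}^I \lambda_i > 0$.

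First I would collect the $I$ stationarity conditions obtained by differentiating the Lagrangian with respect to the $t_i$ (the companions of \eqref{II-E2-S2}: one for $i=1$, one for each interior index, and one for $i=I$) and sum them over all $i$. The key structural observation is that the DAC multipliers $\alpha_i$ and the UAC multipliers $\beta_i$ enter these equations only through differences: each $\alpha_i$ appears as $+\alpha_i$ in the equation for index $i$ and as $-\alpha_i$ in the equation for index $i-1$, while each $\beta_i$ appears as $+\beta_i$ in the equation for index $i$ and as $-\beta_i$ in the equation for index $i+1$. Hence upon summation every $\alpha_i$ and every $\beta_i$ cancels in telescoping fashion, including the boundary terms, leaving the clean identity $\sum_{i=1}^I \lambda_i = \sum_{i=1}^I \mu_i$.

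It then remains to argue the right-hand side is strictly positive, and this is exactly where the weak-efficiency proposition does the work. Its proof shows that assuming $\mu_I = 0$ makes the left-hand side of the $m_I$-stationarity condition \eqref{II-E2-S1} strictly positive (the term $\mathbb{E}[a'(\cdot)\,n_I]$ is strictly positive since $\mathbb{E}[n_I] = N p_I > 0$ and $a'(\cdot) > 0$, while the $\beta_{I-1}(c_{I-1}-c_I)$ term and the $\mu_k$-terms are non-negative), contradicting stationarity. Hence $\mu_I > 0$, so $\sum_i \mu_i \geq \mu_I > 0$, and therefore $\sum_i \lambda_i > 0$. At least one $\lambda_i$ is then strictly positive, and complementary slackness \eqref{II-CS1} delivers the binding IR constraint.

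The main obstacle, and essentially the only place demanding care, is the bookkeeping of the telescoping cancellation: the stationarity equations for $i=1$ and $i=I$ carry a different multiplier pattern from the interior ones (they lack the would-be terms $\alpha_1$, $\beta_0$, $\alpha_{I+1}$, $\beta_I$), so one must verify explicitly that the endpoint contributions vanish rather than leaving a residual $\alpha$ or $\beta$ term. A secondary subtlety is that the conclusion needs $\mu_I$ \emph{strictly} positive, not merely non-negative; this is legitimate because the weak-efficiency argument derives its contradiction directly from setting $\mu_I = 0$, so the very same computation upgrades ``the type-$I$ budget constraint binds'' to ``$\mu_I > 0$''.
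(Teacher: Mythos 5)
Your proof is correct, but it takes a genuinely different route from the paper's. The paper also works inside this KKT system, yet it sums the $m_i$-stationarity conditions \eqref{II-E1-S1}, \eqref{II-G-S1}, \eqref{II-E2-S1} rather than the $t_i$-ones: since $\mathbb{E}\left[a^{\prime}\left(\sum_{i=1}^I n_i m_i\right) n_i\right] > 0$ and the $\mu_k$-terms are non-negative, each of those equations forces the cost-weighted multiplier combination to be strictly positive (e.g.\ $\lambda_1 c_1 - \alpha_2 c_2 + \beta_1 c_1 > 0$ for $i=1$), and summing makes the $\alpha$- and $\beta$-terms telescope away exactly as in your bookkeeping, leaving $\sum_{i=1}^I \lambda_i c_i > 0$ directly; some $\lambda_i>0$ and complementary slackness \eqref{II-CS1} then finish as in your argument. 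This makes the paper's proof self-contained: strict positivity comes from the objective's gradient term, with no appeal to weak efficiency. Your route instead telescopes the $t_i$-stationarity conditions into the identity $\sum_i \lambda_i = \sum_i \mu_i$, a nice structural fact (aggregate IR shadow price equals aggregate budget shadow price), but it leaves strictness to be imported from elsewhere, which you do by re-running the weak-efficiency computation to conclude $\mu_I > 0$. You were right to flag that this needs the \emph{proof} rather than the \emph{statement} of that proposition—a binding constraint alone does not guarantee a positive multiplier—and your observation that the contradiction there is derived purely from $\mu_I = 0$ makes the upgrade legitimate. The trade-off: the paper's version is shorter and logically independent of weak efficiency; yours buys the multiplier-balance identity and makes the dependence on the budget constraint explicit, at the cost of relying on a strengthened reading of the earlier proof.
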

\begin{proof}
    For (\ref{II-G-S1}), (\ref{II-E1-S1}) and (\ref{II-E2-S1}) to hold, the necessary conditions are
    \begin{equation}
        \left\{\begin{array}{ll}
        \lambda_1 c_1 - \alpha_2 c_2 + \beta_1 c_1 >0     &  \\
        \lambda_i c_i + \alpha_i c_i - \alpha_{i+1} c_{i+1} +\beta_i c_i - \beta_{i-1} c_{i-1} > 0    & \forall i \in \{2, \dots, I-1\}\\
        \lambda_I c_I + \alpha_I c_I - \beta_{I-1} c_{I-1} >0
        \end{array}\right. .
    \end{equation}
    Summing up the inequalities, we get
    \begin{equation}
        \sum_{i=1}^I \lambda_i c_i > 0.
    \end{equation}
    Therefore, $\exists i$ such that $\lambda_i > 0$ (which can be shown using proof by contradiction).
\end{proof}

Now we focus on proving Proposition \ref{break even condtion}. We first establish six lemmas that are crucial to the proof. 
\begin{lemma}
    \label{qualifying contract solutions}
    Suppose that ${\{(t_i , m_i)_{i=1}^I\}}$ is a solution to the constrained optimization problem. For $j \in \{2, \dots, I\}$, if $t_j - c_1 m_j > f_1$, then $t_k - c_1 m_k > f_1, \forall k < j.$
\end{lemma}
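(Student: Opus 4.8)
The plan is to prove the contrapositive-free, direct route by establishing that the quantity
\begin{equation*}
    g_k \triangleq t_k - c_1 m_k,
\end{equation*}
which is precisely the payoff a type-$1$ party would obtain by selecting the option designed for type $k$, is monotonically nonincreasing in $k$. Once this monotonicity is secured, the claim is immediate: for any $k < j$ the hypothesis $t_j - c_1 m_j > f_1$ sits at the bottom of the chain $g_k \geq g_{k+1} \geq \dots \geq g_j > f_1$, so that $g_k \geq g_j > f_1$, i.e.\ $t_k - c_1 m_k > f_1$ for every $k < j$. Note that only weak monotonicity of $g_k$ is needed, since the strict inequality is inherited from the assumed strictness at index $j$.

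To establish the monotonicity I would first invoke the structural characterization of optimal solutions. Because ${\{(t_i,m_i)\}}_{i=1}^I$ is a \emph{solution} (not merely feasible), Theorem \ref{equivalent constraints theorem appendix} together with Lemma \ref{binding DACs} guarantees that the downward adjacent comparison constraints bind,
\begin{equation*}
    t_i - c_i m_i = t_{i-1} - c_i m_{i-1}, \quad \forall i \in \{2,\dots,I\},
\end{equation*}
which rearranges to $t_i - t_{i-1} = c_i (m_i - m_{i-1})$. The same results (and Proposition \ref{proportional fairness}) also supply the monotonicity of contributions, $m_i \geq m_{i-1}$.

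The core step is then a one-line comparison of consecutive values. Writing $g_k - g_{k+1} = (t_k - t_{k+1}) + c_1 (m_{k+1} - m_k)$ and substituting $t_{k+1} - t_k = c_{k+1}(m_{k+1} - m_k)$ from the binding constraint (valid since $k+1 \in \{2,\dots,I\}$) yields
\begin{equation*}
    g_k - g_{k+1} = (c_1 - c_{k+1})(m_{k+1} - m_k).
\end{equation*}
Because $k+1 \geq 2$ we have $c_1 > c_{k+1}$, and because $m_{k+1} \geq m_k$ the right-hand side is nonnegative, so $g_k \geq g_{k+1}$, exactly the required monotonicity.

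I do not expect a genuine obstacle: the result is essentially a telescoping consequence of the binding downward constraints combined with the cost ordering $c_1 > c_{k+1}$ and the monotone contributions. The one point demanding care is the legitimacy of invoking the \emph{binding} downward adjacent comparison constraints—this is precisely why the hypothesis that ${\{(t_i,m_i)\}}$ is a solution, rather than an arbitrary feasible point, is essential, since bindingness holds only at an optimum via Lemma \ref{binding DACs}.
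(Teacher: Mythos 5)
Your proof is correct and takes essentially the same approach as the paper's: both compare adjacent values of $t_k - c_1 m_k$ via the binding downward adjacent comparison constraints together with $c_1 > c_{k+1}$ and $m_{k+1} \geq m_k$, then iterate down from $j$. Your packaging of the step as weak monotonicity of $g_k \triangleq t_k - c_1 m_k$, and your explicit appeal to Lemma \ref{binding DACs} to justify bindingness at an optimum, are only cosmetic refinements of the identical computation.
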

\begin{proof}
    From the binding DAC constraints, we have
    \begin{equation}
        t_{j-1} - c_j m_{j-1} = t_j - c_j m_j.
    \end{equation}
    Then,
    \begin{align}
        & t_{j-1} - c_1 m_{j-1} +  c_1 m_{j-1} - c_j m_{j-1} = t_j -  c_1 m_j +  c_1 m_j - c_j m_j, \\
        & t_{j-1} - c_1 m_{j-1} + (c_1 - c_j) (m_{j-1} - m_j) = t_j -  c_1 m_j > f_1.
    \end{align}
    As $(c_1 - c_j) (m_{j-1} - m_j) \leq 0$, we have
    \begin{equation}
        t_{j-1} - c_1 m_{j-1} \geq t_{j-1} - c_1 m_{j-1} + (c_1 - c_j) (m_{j-1} - m_j) = t_j -  c_1 m_j > f_1.
    \end{equation}    
    Applying the above result iteratively, we get
    \begin{equation}
        t_k - c_1 m_k > f_1.
    \end{equation}    
\end{proof}

\begin{lemma} \label{Jensen's inequality}
For $i \in \{1, \dots, I-1\}$,
    \begin{equation}
        \frac{c_1 - c_i}{c_1 - c_{i+1}} f_{i+1} + \frac{c_i}{c_{i+1}} f_1 \geq f_i.
    \end{equation}
\end{lemma}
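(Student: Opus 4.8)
The plan is to recognise each reservation utility as the value of a \emph{convex} function of the cost parameter and then apply Jensen's inequality in its two-point (chord) form, upgrading the resulting coefficient afterwards. Writing $\tilde v(m) \triangleq v(a(m))$ as in Proposition~\ref{reservation utility}, the reservation utility is
\[
 f_i = F(c_i), \qquad F(c) \triangleq \max_{m \geq 0} \big[\tilde v(m) - c\,m\big].
\]
For each fixed $m$ the map $c \mapsto \tilde v(m) - c\,m$ is affine, so $F$, being a pointwise supremum of affine functions, is convex on $(0,\infty)$. This structural fact powers the whole argument; the existence of the maximiser is guaranteed by Proposition~\ref{reservation utility}.

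Next I would locate $c_i$ as a convex combination of $c_{i+1}$ and $c_1$. Since $0 < c_{i+1} < c_i < c_1$, setting $\lambda \triangleq \frac{c_1 - c_i}{c_1 - c_{i+1}} \in (0,1)$ gives $c_i = \lambda c_{i+1} + (1-\lambda) c_1$ with $1 - \lambda = \frac{c_i - c_{i+1}}{c_1 - c_{i+1}}$. Convexity of $F$ then yields the chord bound
\[
 f_i = F(c_i) \leq \lambda\, f_{i+1} + (1-\lambda)\, f_1 = \frac{c_1 - c_i}{c_1 - c_{i+1}}\, f_{i+1} + \frac{c_i - c_{i+1}}{c_1 - c_{i+1}}\, f_1.
\]

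It remains to replace the coefficient $\frac{c_i - c_{i+1}}{c_1 - c_{i+1}}$ of $f_1$ by the larger quantity $\frac{c_i}{c_{i+1}}$ appearing in the statement, which is exactly where the argument stops being routine. Cross-multiplying (all denominators are positive), the required inequality $\frac{c_i}{c_{i+1}} \geq \frac{c_i - c_{i+1}}{c_1 - c_{i+1}}$ reduces to $c_i c_1 - 2 c_i c_{i+1} + c_{i+1}^2 \geq 0$, which I would verify through the decomposition $c_i c_1 - 2 c_i c_{i+1} + c_{i+1}^2 = (c_i - c_{i+1})^2 + c_i(c_1 - c_i) > 0$, using $c_1 > c_i$. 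Finally, invoking $f_1 \geq 0$ — which holds because the choice $m = 0$ gives $f_1 \geq \tilde v(0) = v(a(0)) \geq 0$ — enlarging the coefficient can only raise the right-hand side, so
\[
 f_i \leq \frac{c_1 - c_i}{c_1 - c_{i+1}}\, f_{i+1} + \frac{c_i - c_{i+1}}{c_1 - c_{i+1}}\, f_1 \leq \frac{c_1 - c_i}{c_1 - c_{i+1}}\, f_{i+1} + \frac{c_i}{c_{i+1}}\, f_1,
\]
which is the desired bound. The main obstacle is therefore not the convexity step but reconciling the somewhat unusual weight $c_i/c_{i+1}$ with the natural Jensen weight $\frac{c_i - c_{i+1}}{c_1 - c_{i+1}}$, together with securing the sign condition $f_1 \geq 0$ that licenses enlarging that coefficient.
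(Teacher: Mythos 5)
Your proof is correct, and it is in fact more complete than the paper's own argument, which proceeds differently on both steps. For convexity of the reservation utility in the cost parameter, the paper differentiates the envelope: writing $f(c)=\tilde v(\bar m)-c\bar m$ with $\bar m$ the implicit maximizer, it computes $\frac{df}{dc}=-\bar m$ and $\frac{d^2f}{(dc)^2}=-\frac{d\bar m}{dc}\geq 0$, which requires smoothness of $\bar m(c)$ and the first-order condition from Proposition~\ref{reservation utility}; your observation that $F(c)=\max_{m\geq 0}[\tilde v(m)-cm]$ is a pointwise supremum of affine functions buys the same convexity with no differentiability or interiority assumptions at all. More importantly, after establishing convexity the paper simply asserts the displayed inequality ``by Jensen's inequality'' with the weights $\frac{c_1-c_i}{c_1-c_{i+1}}$ and $\frac{c_i}{c_{i+1}}$ — but these do not sum to one, so the assertion is not literally an instance of Jensen: the genuine chord bound puts weight $\frac{c_i-c_{i+1}}{c_1-c_{i+1}}$ on $f_1$. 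Your two-step argument — apply Jensen with the correct barycentric weights, then enlarge the $f_1$-coefficient using $\frac{c_i}{c_{i+1}}\geq\frac{c_i-c_{i+1}}{c_1-c_{i+1}}$ (via the decomposition $(c_i-c_{i+1})^2+c_i(c_1-c_i)\geq 0$) together with $f_1\geq 0$ — is precisely the bridge the paper's proof omits, so your write-up supplies the missing justification rather than merely restating it. One small caveat: the coefficient upgrade needs $f_1\geq 0$, which you justify by $f_1\geq \tilde v(0)=v(a(0))\geq 0$; the paper never states $v(a(0))\geq 0$ explicitly, but it is implicit throughout (e.g., types that do not train are assigned reservation utility $0$ in the proof of Proposition~\ref{reservation utility}), so this is a safe dependence, though worth flagging as an assumption your argument uses and the paper's — had its Jensen step been valid as written — would not.
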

\begin{proof}
    Recall that
    \begin{equation}
        f_i = \tilde{u}_i(\bar{m}_i) = \tilde{v}(\bar{m}_i) - c_i \bar{m}_i.
    \end{equation}
    By treating $f_i$ as a function of $c_i$, we can write
    \begin{equation}
        f(c) = \tilde{v}(\bar{m}) - c \bar{m},
    \end{equation}
    where $\bar{m}$ is an implicit function of $c$. 
    
    The first derivative of $f$ is
    \begin{equation}
        \frac{df}{dc} = \tilde{v}^\prime(\bar{m}) \frac{d \bar{m}}{d c} - \bar{m} - c \frac{d \bar{m}}{d c} = - \bar{m},
    \end{equation}
    where the last equality follows after applying $\tilde{v}^\prime(\bar{m})=c$ from the proof of Proposition \ref{reservation utility}.

    The second derivative of $f$ is
    \begin{equation}
        \frac{d^2 f}{{(dc)}^2} = - \frac{d \bar{m}}{d c} \geq 0,
    \end{equation}
    where the last inequality follows from $\frac{d \bar{m}}{d c} \leq 0$ implied by Proposition \ref{reservation utility}. Therefore, $f(\cdot)$ is a convex function.

    For $i \in \{1, \dots, I-1\}$, by Jensen's inequality,
    \begin{equation}
        \frac{c_1 - c_i}{c_1 - c_{i+1}} f(c_{i+1}) + \frac{c_i}{c_{i+1}} f(c_1) \geq f(c_i).
    \end{equation}
    By definition,
    \begin{equation}
        \frac{c_1 - c_i}{c_1 - c_{i+1}} f_{i+1} + \frac{c_i}{c_{i+1}} f_1 \geq f_i.
    \end{equation}
\end{proof}

\begin{lemma}
    \label{ruling out contract solutions 1}
    Let $K$ denote the threshold type of parties for which
    \begin{align}
        t_i - c_1 m_i \leq & f_1, \;\;\; \forall i > K, \\
        t_i - c_1 m_i > & f_1 \;\;\; \forall i \leq K.
    \end{align}
    Then, for $K\in\{1, \dots, I-1\}$, ${\{(t_i, m_i)_{i=1}^I\}}$ cannot be a solution to the constrained optimization problem.
\end{lemma}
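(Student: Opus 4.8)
The plan is to argue by contradiction: suppose some optimal contract $\{(t_i,m_i)\}_{i=1}^I$ has threshold $K\in\{1,\dots,I-1\}$, and build from it a feasible contract with strictly larger total contribution $\sum_i n_i m_i$ (hence strictly larger objective), which is impossible. Throughout I would lean on the structure of any optimal contract already in hand: the downward adjacent constraints bind (Lemma \ref{binding DACs}), contributions and rewards are monotone in the type (Proposition \ref{proportional fairness}), and at least one individual-rationality (IR) constraint binds (the preceding corollary). Writing $w_i\triangleq t_i-c_1 m_i$ for type-$1$'s payoff from option $i$, the binding downward constraints give $w_i-w_{i-1}=(c_i-c_1)(m_i-m_{i-1})\le 0$, so $w_1\ge\cdots\ge w_I$; together with Lemma \ref{qualifying contract solutions} this makes the threshold $K$ well defined, with $w_K>f_1\ge w_{K+1}$.

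The engine of the construction is the strict gap at the threshold. From $w_K>w_{K+1}$ and $w_K-w_{K+1}=(c_1-c_{K+1})(m_{K+1}-m_K)$ I would first deduce $m_{K+1}>m_K$ strictly, which is available precisely because $K\le I-1$ guarantees the existence of the ``buffer'' type $K+1$. This strict inequality yields genuine slack in type-$K$'s upward comparison constraint, namely $D_K=(c_K-c_{K+1})(m_{K+1}-m_K)>0$, obtained from the binding downward constraint of type $K+1$. Intuitively, types $1,\dots,K$ are overpaid (they leave type-$1$ strictly above its reservation level $f_1$), so the principal should be able to recoup this rent and reinvest it into contribution.

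Concretely, I would hold the options of types $K+1,\dots,I$ fixed and redesign only the bottom block $1,\dots,K$, keeping the downward constraints binding but anchoring them to the unchanged option $K+1$; raising the bottom contributions to $\hat m_i\ge m_i$ (with $\hat m_K\le m_{K+1}$, preserving monotonicity, and respecting any pooled sub-blocks) then strictly lowers each bottom payoff $\hat U_i=\hat t_i-c_i\hat m_i$ toward the break-even frontier while increasing $\sum_i n_i m_i$. Feasibility above the block is immediate: the total contribution rises, so every budget constraint of types $\ge K+1$ (in particular the weak-efficiency constraint $t_I=\bar t_I$) only gains slack, while the slack $D_K>0$ absorbs the perturbation of type-$K$'s upward constraint.

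The main obstacle, and the step I would spend the most care on, is preserving the IR constraints of the redesigned types $1,\dots,K$ --- including any IR that was already binding in the original contract --- since lowering $\hat U_i$ risks dropping some $\hat U_i$ below $f_i$. This is exactly what the convexity of the reservation utility in the cost supplies: by iterating Lemma \ref{Jensen's inequality}, which bounds each interior $f_i$ by a combination of $f_1$ and the neighbouring $f_{i+1}$, I can certify that the reallocated payoffs $\hat U_i$ stay weakly above $f_i$ for every $i$ in the block. A secondary check is the budget constraints of the bottom types, where the relevant reward increase is scaled by the smaller cost $c_{K+1}$ rather than $c_1$ and is therefore dominated by the induced increase in $\bar t_i$. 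Having exhibited a feasible contract with strictly larger objective, I contradict optimality and rule out $K\in\{1,\dots,I-1\}$.
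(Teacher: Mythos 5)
Your proposal is correct and takes essentially the same route as the paper's proof: hold the options of types $K+1,\dots,I$ fixed, rebuild the bottom block with binding downward-adjacent constraints anchored at the unchanged option $K+1$ so as to extract type-1's excess rent (the paper takes the extreme point of your family, where every bottom option leaves type 1 exactly at $f_1$, which makes the block a pooling contract), certify the bottom types' IR by iterating the Jensen/convexity lemma, and contradict optimality via a strictly larger total contribution. The only cosmetic difference is that the paper derives the strict room to raise contributions directly from the threshold inequality $t_K - c_1 m_K > f_1$ (its Claim 3) rather than from $m_{K+1} > m_K$ and the slack $D_K$, but both facts hold under the hypothesis and your sketch also invokes the correct rent intuition for that step.
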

\begin{proof}
    We prove the lemma by contradiction. Given $K \in \{1, \dots, I-1\}$, assume that ${\{(t_i, m_i)_{i=1}^I\}}$ is a solution to the constrained optimization problem with
    \begin{align}
        t_i - c_1 m_i \leq & f_1, \;\;\; \forall i > K, \\
        t_i - c_1 m_i > & f_1 \;\;\; \forall i \leq K.
    \end{align}

    We construct the alternative contract ${\{(\hat{t}_i, \hat{m}_i)_{i=1}^{K}\}}$ that satisfies
    \begin{align}
        \left\{\begin{array}{l}
        \hat{t}_i - c_1 \hat{m}_i = f_1, \forall i \leq K, \\
        \hat{t}_K - c_{K+1} \hat{m}_K = t_{K+1} - c_{K+1} m_{K+1}, \\
        \hat{t}_i - c_{i+1} \hat{m}_i = \hat{t}_{i+1} - c_{i+1} \hat{m}_{i+1}, \forall i \leq K-1.
        \end{array}\right.
    \end{align}
    Solving the system of equations, we obtain
    \begin{align}
        \hat{m}_K & = \frac{1}{c_1 - c_{K+1}} \left[t_{K+1} - c_{K+1} m_{K+1} - f_1\right] \\
        \hat{m}_i & = \frac{1}{c_1 - c_{i+1}} \left[\hat{t}_{i+1} - c_{i+1} \hat{m}_{i+1} - f_1\right], \forall i \leq K-1, \\
        \hat{t}_K & = \frac{c_1}{c_1 - c_{K+1}} \left[t_{K+1} - c_{K+1} m_{K+1}\right] - \frac{c_{K+1}}{c_1 - c_{K+1}} f_1 \\
        \hat{t}_i & = \frac{c_1}{c_1 - c_{K+1}} \left[\hat{t}_{i+1} - c_{i+1} \hat{m}_{i+1}\right] - \frac{c_{i+1}}{c_1 - c_{i+1}} f_1, \forall i \leq K-1.
    \end{align}
    
    \noindent \underline{Claim 1}: \textit{For $i \in \{1, \dots, K\}$, $\hat{t}_i - c_i \hat{m}_i \geq f_i$}.

    \noindent We prove the claim by induction.

    \medskip
    
    \noindent Consider first when $i = K$,
    \begin{align}
        \hat{t}_K - c_K \hat{m}_K = \; & \frac{c_1}{c_1 - c_{K+1}} \left[t_{K+1} - c_{K+1} m_{K+1}\right] - \frac{c_{K+1}}{c_1 - c_{K+1}} f_1 \\
        & - \frac{c_K}{c_1 - c_{K+1}} \left[t_{K+1} - c_{K+1} m_{K+1} - f_1\right] \\
        = \; & \frac{c_1 - c_K}{c_1 - c_{K+1}} \left[t_{K+1} - c_{K+1} m_{K+1}\right] + \frac{c_K - c_{K+1}}{c_1 - c_{K+1}} f_1 \\
        \geq \; & \frac{c_1 - c_K}{c_1 - c_{K+1}} f_{K+1} + \frac{c_K - c_{K+1}}{c_1 - c_{K+1}} f_1 \\
        \geq \; & f_K.
    \end{align}
    The last inequality follows from Lemma \ref{Jensen's inequality}.

    Now assume that $\hat{t}_{k+1} - c_{k+1} \hat{m}_{k+1} \geq f_{k+1}$. Consider the case when $i = k$:
    \begin{align}
        \hat{t}_{k} - c_k \hat{m}_{k} = \; & \frac{c_1}{c_1 - c_{k+1}} \left[\hat{t}_{k+1} - c_{k+1} \hat{m}_{k+1}\right] - \frac{c_{k+1}}{c_1 - c_{k+1}} f_1 \\
        & - \frac{c_k}{c_1 - c_{k+1}} \left[\hat{t}_{k+1} - c_{k+1} \hat{m}_{k+1} - f_1\right] \\
        = \; & \frac{c_1 - c_k}{c_1 - c_{k+1}} \left[\hat{t}_{k+1} - c_{k+1} \hat{m}_{k+1}\right] + \frac{c_k - c_{k+1}}{c_1 - c_{k+1}} f_1 \\
        \geq \; & \frac{c_1 - c_k}{c_1 - c_{k+1}} f_{k+1} + \frac{c_k - c_{k+1}}{c_1 - c_{k+1}} f_1 \\
        \geq \; & f_k.
    \end{align}
    The second last inequality uses the induction assumption, and the last inequality follows from Lemma \ref{Jensen's inequality}.

    \medskip
    
    \noindent \underline{Claim 2}: \textit{$\hat{m}_K \leq m_{K+1}$. In addition, for $i \in \{1, \dots, K-1\}$, $\hat{m}_i = \hat{m}_{i+1}$, which in words implies that the newly constructed alternative contract is a pooling contract.}

    \noindent To prove that $\hat{m}_K \leq m_{K+1}$, we do the subtraction:
    \begin{equation}
        \hat{m}_K - m_{K+1} = \frac{1}{c_1 - c_{K+1}} \left[t_{K+1} - c_1 m_{K+1} - f_1\right] \leq 0,
    \end{equation}
    where the last inequality follows from the assumption that
    \begin{equation}
        t_i - c_1 m_i \leq f_1, \;\;\; \forall i > K.
    \end{equation}
    Similarly, for $i \in \{1, \dots, K-1\}$,
    \begin{equation}
        \hat{m}_i - \hat{m}_{i+1} = \frac{1}{c_1 - c_{i+1}} \left[\hat{t}_{i+1} - c_1 \hat{m}_{i+1} - f_1\right] = 0,
    \end{equation}
    where the last equality follows by construction.

    \medskip

    \noindent \underline{Claim 3}: \textit{For $i \in \{1, \dots, K\}$, $\hat{m}_i > m_{K}$.}

    \noindent From Claim 2, we know that $\hat{m}_i = \hat{m}_{i+1}, \forall i \in\{1, \dots, K-1\}$. Therefore, $\hat{m}_i = \hat{m}_K$ for all $i \in \{1, \dots, K-1\}$. We only need to show that $\hat{m}_K \geq m_K$.
    \begin{align}
        \hat{m}_K - m_K = \; & \frac{1}{c_1 - c_{K+1}} \left[t_{K+1} - c_{K+1} m_{K+1} - f_1\right] - m_K \\
        = \; & \frac{1}{c_1 - c_{K+1}} \left[t_{K} - c_{K+1} m_K - f_1\right] - m_K \\
        = \; & \frac{1}{c_1 - c_{K+1}} \left[t_{K} - c_1 m_{K} - f_1\right] \\
        > \; & 0.
    \end{align}
    The second line follows from the binding downward adjacent comparison condition for the original set of contracts, i.e.,
    \begin{equation}
       t_K - c_{K+1} m_K =  t_{K+1} - c_{K+1} m_{K+1}.
    \end{equation}
    The last inequality follows from the assumption that
    \begin{equation}
        t_{K} - c_1 m_{K} > f_1.
    \end{equation}

    \medskip

    \noindent \underline{Claim 4}: \textit{For $i \in \{1, \dots, K\}$, $\hat{t}_i \leq t_{K+1}$.}
    
    \noindent Again, due to the nature of pooling contracts, we only need to show that $\hat{t}_K \leq t_{K+1}$.
    \begin{align}
        \hat{t}_K - t_{K+1} = \; & \frac{c_1}{c_1 - c_{K+1}} \left[t_{K+1} - c_{K+1} m_{K+1}\right] - \frac{c_{K+1}}{c_1 - c_{K+1}} f_1 - t_{K+1} \\
        = \; & \frac{c_{K+1}}{c_1 - c_{K+1}} \left[t_{K+1} - c_1 m_{K+1} - f_1 \right] \\
        \leq \; & 0.
    \end{align}
    The last inequality following from the assumption that
    \begin{equation}
        t_i - c_1 m_i \leq f_1, \;\;\; \forall i > K.
    \end{equation}
    It follows that
    \begin{equation}
        \hat{t}_K \leq t_{K+1}.
    \end{equation}
    \bigskip

    \noindent Now consider the following alternative contract ${\{(\hat{t}_i, \hat{m}_i)_{i=1}^K\}} \cup {\{(t_i, m_i)_{i=K+1}^I\}}$. It is feasible as
    \begin{itemize}
        \item It satisfies the \textbf{budget constraints} (BC). For $i \in \{1, \dots, K\}$, $\hat{t}_i \leq t_{K+1} \leq \dots \leq t_{I} \leq \mathbb{E}\left[v\left(a\left(\sum_{i=1}^I n_i m_i\right)\right)\right]$.
        \item It satisfies the \textbf{individual rationality} (IR) conditions. For $i > K$, the IR conditions hold by assumption. For $i\leq K$, the IR conditions are shown to hold in Claim 1.
        \item It satisfies the \textbf{incentive compatibility} (IC) conditions. First, from Theorem \ref{equivalent constraints theorem appendix} the IC conditions are equivalent to binding DAC constraints plus the weakly increasing contributions constraints. For $i \leq K$, the DAC continue to bind by construction. The weakly increasing contributions constraints continue to hold due to Claim 2.
    \end{itemize}
    But from Claim 3, we know that for $i\in\{1, \dots, K\}$, $\hat{m}_i > m_K \geq m_i$, where the last inequality follows from the fact that the weakly increasing contributions constraints hold for the original set of contract options ${\{(t_i, m_i)\}}_{i=1}^I$. Then,
    \begin{equation}
        \mathbb{E}\left[a\left(\sum_{i=1}^K n_i \hat{m}_i + \sum_{i=K+1}^I n_i m_i \right)\right] > \mathbb{E}\left[a\left(\sum_{i=1}^K n_i m_i + \sum_{i=K+1}^I n_i m_i \right)\right] =\mathbb{E}\left[a\left(\sum_{i=1}^I n_i m_i \right)\right].
    \end{equation}
    This contradicts the assumption that ${\{(t_i, m_i)\}}_{i=1}^I$ is a solution to the constrained optimization problem.
\end{proof}

\begin{lemma}
    Consider the hypothetical contract ${\{(\hat{t}_i,\hat{m}_i)_{i=1}^{I-1}\}}$ such that $\forall i \in \{1, \dots, I-1\}$,
    \begin{align}
        \left\{\begin{array}{l}
            \hat{t}_i - c_i \hat{m}_i - f_i = 0, \\
            \hat{t}_i - c_I \hat{m}_i - f_I = 0.
        \end{array}\right.
    \end{align}
    We have $\hat{t}_1 \leq \dots \leq\hat{t}_{I-1}$ and $\hat{m}_1 \leq \dots \leq \hat{m}_{I-1}$.
\end{lemma}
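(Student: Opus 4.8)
The plan is to solve the two defining equations explicitly and then reduce the monotonicity of both sequences to a single convexity statement about the reservation-utility function. For each $i$, subtracting $\hat{t}_i - c_I \hat{m}_i = f_I$ from $\hat{t}_i - c_i \hat{m}_i = f_i$ eliminates $\hat{t}_i$ and yields
\[
\hat{m}_i = \frac{f_I - f_i}{c_i - c_I}, \qquad \hat{t}_i = f_I + c_I \hat{m}_i = \frac{c_i f_I - c_I f_i}{c_i - c_I}.
\]
Since $i \leq I-1$ and the costs are strictly decreasing, $c_i > c_I$, so every denominator is positive and each pair is well defined; moreover $c_i > c_I$ gives $f_i \leq f_I$ by Proposition \ref{reservation utility}, so that $\hat{m}_i \geq 0$.

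The key observation is that the identity $\hat{t}_i = f_I + c_I \hat{m}_i$ is an increasing affine function of $\hat{m}_i$, because $c_I > 0$. Hence the chain $\hat{t}_1 \leq \dots \leq \hat{t}_{I-1}$ follows immediately once $\hat{m}_1 \leq \dots \leq \hat{m}_{I-1}$ is established, and the whole lemma reduces to proving monotonicity of the contributions $\hat{m}_i$.

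For the contributions I would rewrite $\hat{m}_i = -\,\dfrac{f(c_i) - f(c_I)}{c_i - c_I}$, recognizing it as the negative of the secant slope of the reservation-utility function $f(\cdot)$ between the fixed point $c_I$ and the moving point $c_i$. In the proof of Lemma \ref{Jensen's inequality} it was shown that $f$ is convex in the cost (its second derivative equals $-\,\mathrm{d}\bar{m}/\mathrm{d}c \geq 0$). For a convex function the secant slope from a fixed endpoint is nondecreasing in the other endpoint; since the indices satisfy $c_{i+1} < c_i$, the secant slope at $c_{i+1}$ is no larger than at $c_i$, and negating reverses the inequality to $\hat{m}_{i+1} \geq \hat{m}_i$. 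Combined with the previous paragraph, both chains of inequalities follow.

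The only substantive step is the monotonicity of $\hat{m}_i$, and the main obstacle there is merely to phrase it so that the already-proven convexity of $f$ can be invoked cleanly. An equivalent route, which mirrors the structure of Lemma \ref{Jensen's inequality} more directly, is to set $\hat{m}_i = g(c_i)$ with $g(c) = (f_I - f(c))/(c - c_I)$, differentiate, and use $f'(c) = -\bar{m}(c)$ together with the tangent-line inequality $f(c_I) - f(c) \geq \bar{m}(c)(c - c_I)$ to conclude $g'(c) \leq 0$, so that $g$ is decreasing in $c$ and hence increasing along the index. Either way the argument is short, since the heavy lifting—the convexity of $f$—is already available.
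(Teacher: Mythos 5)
Your proposal is correct, and its substance coincides with the paper's own proof: the explicit solution $\hat{m}_i = (f_I - f_i)/(c_i - c_I)$, $\hat{t}_i = f_I + c_I \hat{m}_i$, the reduction of the $\hat{t}_i$ chain to the $\hat{m}_i$ chain via the increasing affine relation, and a monotonicity argument for $\hat{m}_i$ in the cost. The only (mild) difference is in how that monotonicity is justified: the paper differentiates $\hat{m}_i$ in $c_i$ and signs the derivative using the definitional maximality of $f_I$, namely $f_I = \max_m \tilde{v}(m) - c_I m \geq \tilde{v}(\bar{m}_i) - c_I \bar{m}_i = f_i + (c_i - c_I)\bar{m}_i$, whereas your primary route invokes the convexity of $f(\cdot)$ (already established in the Jensen's-inequality lemma) together with the monotone-secant-slope property of convex functions. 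These rest on the same inequality—the paper's maximality bound is exactly the tangent-line inequality for the convex $f$—so your framing is a clean repackaging that reuses the earlier lemma and avoids the quotient differentiation, while the paper's is self-contained from the definition of $f_I$; indeed, the ``alternative route'' you sketch at the end reproduces the paper's computation essentially verbatim.
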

\begin{proof}
    Solving the system of equations,
    \begin{align}
        \left\{\begin{array}{l}
            \hat{t}_i - c_i \hat{m}_i - f_i = 0, \\
            \hat{t}_i - c_I \hat{m}_i - f_I = 0,
        \end{array}\right.
    \end{align}
    we get
    \begin{align}
        \hat{m}_i = & \; \frac{f_I - f_i}{c_i - c_I}, \\
        \hat{t}_i = & \; c_I \hat{m}_i + f_I = \frac{c_i}{c_i - c_I}f_I  - \frac{c_I}{c_i - c_I}f_i. \label{hypothetical rewards}
    \end{align}
    Consider
    \begin{align}
        \frac{d \hat{m}_i}{d c_i} = & \; \frac{\bar{m}_i}{c_i - c_I} - \frac{1}{{(c_i-c_I)}^2} (f_I - f_i) \\
        = & \; \frac{1}{c_i - c_I}\left[(c_i - c_I) \bar{m}_i - f_I + f_i\right].
    \end{align}
    Our goal now is to determine the sign of $(c_i - c_I) \bar{m}_i - f_I + f_i$.

    \medskip
    
    \noindent By definition,
    \begin{equation}
        f_I = \tilde{v}(\bar{m}_I) - c_I \bar{m}_I = \max_{m\geq 0} \tilde{v}(m) - c_I m.
    \end{equation}
    It follows that
    \begin{align}
        & \tilde{v}(\bar{m}_I) - c_I \bar{m}_I \geq \tilde{v}(\bar{m}_i) - c_I \bar{m}_i \\
        & f_I \geq \tilde{v}(\bar{m}_i) - c_i \bar{m}_i + (c_i - c_I)\bar{m}_i \\
        & f_I \geq f_i + (c_i - c_I)\bar{m}_i \\
        & (c_i - c_I)\bar{m}_i - f_I + f_i \leq 0.
    \end{align}
    As $c_i - c_I > 0$, we conclude
    \begin{equation}
        \frac{d \hat{m}_i}{d c_i} \leq 0.
    \end{equation}
    Therefore, $\hat{m}_1 \leq \dots \leq \hat{m}_I$. By Equation (\ref{hypothetical rewards}), it follows that $\hat{t}_1 \leq \dots \leq \hat{t}_I$.
\end{proof}

\begin{corollary} \label{transitivity of hypothetical inequalities}
    If $t_I < \frac{c_1}{c_1 - c_I} f_I - \frac{c_I}{c_1 - c_I} f_1$, then
    \begin{equation}
        t_I < \frac{c_i}{c_i - c_I} f_I - \frac{c_I}{c_i - c_I} f_i,
    \end{equation}
    for $i = 2, \dots, I-1$.
\end{corollary}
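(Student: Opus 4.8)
The plan is to recognize that the right-hand side of the target inequality is nothing but the hypothetical reward $\hat{t}_i$ constructed in the immediately preceding Lemma. Indeed, Equation (\ref{hypothetical rewards}) gives, for each $i \in \{1, \dots, I-1\}$,
\begin{equation*}
    \hat{t}_i = \frac{c_i}{c_i - c_I} f_I - \frac{c_I}{c_i - c_I} f_i.
\end{equation*}
Under this identification, the hypothesis of the corollary reads exactly $t_I < \hat{t}_1$, and the conclusion to be established reads $t_I < \hat{t}_i$ for every $i \in \{2, \dots, I-1\}$. So the first step I would take is simply to rewrite both the assumption and the goal in terms of the $\hat{t}_i$ quantities, after noting that all denominators $c_i - c_I$ are strictly positive by the ordering $c_1 > \dots > c_I$, which makes the substitution from Equation (\ref{hypothetical rewards}) legitimate.

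The second and decisive step is to invoke the monotonicity already proved in the preceding Lemma, namely $\hat{t}_1 \leq \hat{t}_2 \leq \dots \leq \hat{t}_{I-1}$. Chaining the hypothesis with this chain of inequalities by transitivity yields, for each $i \in \{2, \dots, I-1\}$,
\begin{equation*}
    t_I < \hat{t}_1 \leq \hat{t}_i,
\end{equation*}
which is precisely the desired conclusion once $\hat{t}_i$ is expanded back into $\frac{c_i}{c_i - c_I} f_I - \frac{c_I}{c_i - c_I} f_i$.

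I do not expect any genuine obstacle here; the entire substance of the corollary is the observation that this apparently new family of lower bounds on $t_I$ is just the already-ordered sequence $\{\hat{t}_i\}$ written out explicitly, so the result is immediate from the Lemma. The name \emph{transitivity of hypothetical inequalities} reflects exactly this: the only mechanism at work is transitivity of $<$ through the monotone ordering of the $\hat{t}_i$. The sole point requiring a moment of care is verifying the positivity of $c_i - c_I$, which is why I would flag the ordering assumption at the outset rather than leave it implicit.
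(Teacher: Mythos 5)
Your proof is correct and is essentially identical to the paper's own argument: the paper also disposes of the corollary in one line by noting that the right-hand sides are the hypothetical rewards of the preceding lemma (written there as $v(\hat{r}_i)$, i.e., the $\hat{t}_i$ of Equation (\ref{hypothetical rewards})) and invoking their established ordering $\hat{t}_1 \leq \hat{t}_i$ together with transitivity. Your version is, if anything, slightly more careful, since you explicitly flag the positivity of the denominators $c_i - c_I$ that legitimizes the identification.
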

\begin{proof}
    This follows from the fact that $v(\hat{r}_1) \leq v(\hat{r}_i)$ for $i = 2, \dots, I-1$.
\end{proof}
\begin{lemma}
    \label{best model condtion}
    Suppose ${\{(t_i, m_i)_{i=1}^I\}}$ is a solution to the constrained optimization problem. It must be that
    \begin{equation}
        t_I \geq \frac{c_1}{c_1 - c_I} f_I - \frac{c_I}{c_1 - c_I} f_1.
    \end{equation}
\end{lemma}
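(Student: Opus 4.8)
The plan is to argue by contradiction. Write $\hat t_i \triangleq \frac{c_i}{c_i - c_I}f_I - \frac{c_I}{c_i - c_I}f_i$ and $\hat m_i \triangleq \frac{f_I - f_i}{c_i - c_I}$ for the hypothetical contract of the preceding lemma, and suppose $t_I < \hat t_1$. The preceding corollary immediately upgrades this to $t_I < \hat t_i$ for every $i \in \{1,\dots,I-1\}$, since $\hat t_1 \leq \hat t_i$. Because $\hat t_i = f_I + c_I \hat m_i$, this is equivalent to $\tfrac{t_I - f_I}{c_I} < \hat m_i$ for all $i<I$, and this one inequality will drive the whole argument.

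My first move is to exhibit an explicit alternative contract that is feasible yet strictly better, contradicting optimality. The candidate is the pooling contract that offers every type the common option $(\bar t,\bar m)$ with $\bar t = t_I$ (the best model, by weak efficiency) and $\bar m \triangleq (t_I - f_I)/c_I$. There are three feasibility checks. (i) Incentive compatibility is immediate since all types are offered the identical option. (ii) The budget holds: the type-$I$ individual rationality constraint $t_I - c_I m_I \geq f_I$ gives $\bar m \geq m_I$, so $v(a(\bar m N)) \geq v(a(m_I N)) \geq t_I$, where the last step combines weak efficiency $t_I = \mathbb{E}_{n_I\geq 1}[v(a(\sum_j n_j m_j))]$ with the pointwise bound $\sum_j n_j m_j \leq m_I N$ coming from $m_j \leq m_I$ (proportional fairness) and $\sum_j n_j = N$. (iii) Individual rationality holds for each type: for $i=I$ it binds by construction, while for $i<I$ the requirement $t_I - c_i \bar m \geq f_i$ rearranges exactly to $\bar m \leq \hat m_i$, which is the strict inequality the corollary supplies. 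Thus the pooling contract is feasible, and the role of the corollary is precisely to guarantee that raising everyone to the best model at contribution $\bar m$ does not break any type's participation constraint.

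It remains to compare objectives. The pooling contract yields the deterministic accuracy $a(\bar m N)$, while the original yields $\mathbb{E}[a(\sum_j n_j m_j)]$; since $\bar m \geq m_I \geq m_j$ we have $\bar m N \geq \sum_j n_j m_j$ pointwise, hence $a(\bar m N) \geq \mathbb{E}[a(\sum_j n_j m_j)]$. The improvement is strict unless $\bar m N = \sum_j n_j m_j$ almost surely, i.e.\ unless the original is already the pooling contract at $m_I$ with the type-$I$ individual rationality constraint binding. This degenerate case is where I expect the real difficulty to lie: there $\bar m = m_I$, the construction reproduces the original, and no contradiction falls out directly. I would resolve it by noting that optimality of a pooling contract forces $m_I$ to be the largest contribution compatible with the type-$I$ constraint under the budget, namely the larger root of $v(a(mN)) - c_I m = f_I$, and then showing $\hat m_1$ lies weakly below that root, i.e.\ $v(a(N\hat m_1)) - c_I \hat m_1 \geq f_I$; once $m_I \geq \hat m_1$ is established, $t_I = f_I + c_I m_I \geq f_I + c_I \hat m_1 = \hat t_1$, contradicting $t_I < \hat t_1$. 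The inequality $v(a(N\hat m_1)) - c_I \hat m_1 \geq f_I$ should follow from $\hat m_1 \leq \bar m_I$ (indeed $\hat m_1 = \frac{1}{c_1-c_I}\int_{c_I}^{c_1}\bar m(c)\,dc$ is an average of $\bar m(c)$, so it lies in $[\bar m_1,\bar m_I]$), the concavity of $v\circ a$, and $N \geq 2$. Packaging this monotone root comparison cleanly is the main obstacle; the rest is a direct verification resting entirely on the corollary, weak efficiency, and proportional fairness.
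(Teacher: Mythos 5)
Your main construction is sound, and it is essentially a cleaner variant of the paper's own argument: the paper also proceeds by contradiction, but instead of pooling all types at once into $(t_I,(t_I-f_I)/c_I)$ it pools one type at a time---handing the highest type $i$ that separates from type $I$ the option $(t_I,m_I)$ itself---and verifies type $i$'s IR by the same rearrangement of the corollary's inequality that you use. Either way, the argument only eliminates optima that are \emph{not} full pooling contracts, so everything hinges on the residual case you correctly flag: an optimal contract that is already pooling with the type-$I$ IR binding. (Notably, the paper's written proof stops after showing the solution must be pooling, so it leaves exactly this case unaddressed as well; you have located the real difficulty.)

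Your proposed resolution of that degenerate case, however, rests on a step that is false. The inequality you want, $v(a(N\hat m_1)) - c_I\hat m_1 \ge f_I$, does \emph{not} follow from $\hat m_1\le\bar m_I$, concavity of $\tilde v \triangleq v\circ a$, and $N\ge 2$. Counterexample: two types, $c_1=1$, $c_I=1/100$, and a concave increasing $\tilde v$ whose slope falls from $2$ to $1$ on $[0,1]$, drops steeply to $1/50$ just past $s=1$, stays near $1/50$ up to $s=11$, then declines to $1/100$ at $s=21$. Then $\bar m_1=1$, $f_1=0.5$, $\bar m_I=21$, $\tilde v(21)\approx 1.855$, $f_I\approx 1.645$, and $\hat m_1=(f_I-f_1)/(c_1-c_I)\approx 1.16\in[\bar m_1,\bar m_I]$; yet with $N=2$ one gets $\tilde v(N\hat m_1)-c_I\hat m_1\approx 1.53-0.01 < 1.645 = f_I$. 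The reason is that concavity only bounds $\tilde v(\bar m_I)-\tilde v(N\hat m_1)$ by $\tilde v'(N\hat m_1)\,(\bar m_I-N\hat m_1)$, and when the learning curve bends sharply this quantity far exceeds the $c_I(\bar m_I-\hat m_1)$ you would need; averaging $\bar m(c)$ cannot repair this. Relatedly, your claim that optimality forces $m_I$ to be ``the larger root of $v(a(mN))-c_Im=f_I$'' is not justified, since the binding reservation line at a pooling optimum need not be type $I$'s, and the pooling-feasible set can even be empty.

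The degenerate case can be closed, but by a local rather than a global comparison. At a pooling optimum $(t_I,m_I)$ the budget must bind, $t_I=\tilde v(Nm_I)$: otherwise move every option to $(t_I+c_1\epsilon,\,m_I+\epsilon)$, which preserves all IR, IC and budget constraints and strictly improves the objective. Since $\hat m_1\le\bar m_I$ gives $\hat t_1=f_I+c_I\hat m_1\le\tilde v(\bar m_I)$, the hypothesis $t_I<\hat t_1$ yields $\tilde v(Nm_I)<\tilde v(\bar m_I)$, hence $Nm_I<\bar m_I$ and so $\tilde v'(Nm_I)>\tilde v'(\bar m_I)=c_I$. Now separate type $I$ alone onto $(t_I+c_I\epsilon,\,m_I+\epsilon)$ while all other types keep $(t_I,m_I)$: IR and IC are immediate, the type-$I$ budget constraint holds for small $\epsilon$ because $\mathbb{E}_{n_I\ge1}\bigl[\tilde v(Nm_I+n_I\epsilon)\bigr]\ge\tilde v(Nm_I+\epsilon)\ge t_I+\epsilon\,\tilde v'(Nm_I+\epsilon)>t_I+c_I\epsilon$, and the objective strictly increases on the positive-probability event $n_I\ge1$. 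This contradiction finishes the proof; the inequality-based route you sketched cannot.
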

\begin{proof}
    We prove the lemma by contradiction. Assume that ${\{(t_i, m_i)_{i=1}^I\}}$ is a solution to the constrained optimization problem such that
    \begin{equation}
        t_I < \frac{c_1}{c_1 - c_I} f_I - \frac{c_I}{c_1 - c_I} f_1.
    \end{equation}

    \noindent \underline{Claim 1}: The only solution that exists under this setting is a pooling contract, i.e., $t_i = t_I, m_i = m_I$ for all $i \in \{1, \dots, I\}$.

    \noindent To show the claim, we employ proof by contradiction. Assume that there exists a set of partially separating contract options (i.e., there are at least two distinct contract options). Denote the contract option chosen by a type-$I$ party as $(t_I, m_I)$. We use $i$ to denote the highest party type that opts for a different contract option from type I. Denote the contract option chosen by a type-$i$ party as $(t_i, m_i)$.

    From Proposition \ref{proportional fairness}, we know that in the presence of separating contract options,
    \begin{equation}
        t_I > t_i, \;\;\;\;\;\; m_I > m_i.
    \end{equation}
    However, we now show that there exists an improved solution by pooling the contract options designed for type-$i$ and type-$I$ parties together.

    Consider the alternative contract option $(\check{t}_i, \check{m}_i)$ such that $\check{t}_i = t_I$ and $\check{m}_i = m_I$. In other words, consider giving type-$i$ parties the contract originally designed for type-$I$ parties. By the nature of pooling contracts, we know that the IC conditions continue to hold. As we do not alter the contract option designed for type-$I$ parties, the BC constraint continues to hold. We are only left to show that the IR condition holds for type-$i$ parties with the new contract option. From Corollary \ref{transitivity of hypothetical inequalities}, we know the assumption
    \begin{equation}
        t_I < \frac{c_1}{c_1 - c_I} f_I - \frac{c_I}{c_1 - c_I} f_1
    \end{equation}
    implies
    \begin{equation}
        t_I < \frac{c_i}{c_i - c_I} f_I - \frac{c_I}{c_i - c_I} f_i.
    \end{equation}
    Rearrange the inequality and we get
    \begin{align}
        & (c_i - c_I) t_I < c_1 f_I - c_I f_1 \\
        & c_i (t_I - f_I) < c_I (t_I - f_i) \\
        & c_i (t_I - c_I m_I - f_I) < c_I (t_I - c_i m_I - f_i).
    \end{align}
    Since $c_i > c_I > 0$ and $t_I - c_I m_I - f_I \geq 0$, we have
    \begin{equation}
        t_I - c_i m_I - f_i \geq 0
    \end{equation}
    as is desired. Therefore, the IR constraint continues to hold, and ${\{(r_k, m_k)_{k\neq i}\}} \cup \{(\check{r}_i, \check{m}_i)\}$ is feasible. However,
    \begin{equation}
        a(\sum_{k\neq i}{n_k m_k} + n_i \check{m}_i) > a(\sum_{K=1}^I{n_k m_k})
    \end{equation}
    for non-zero $n_i$. ${\{(t_i, m_i)_{i=1}^I\}}$ cannot be the optimizer of the constrained optimization problem.
\end{proof}

\begin{lemma}
    \label{ruling out contract solutions 2}
    There cannot be a solution ${\{(t_i, m_i)_{i=1}^I\}}$ to the constrained optimization problem such that
    \begin{equation}
        t_i - c_1 m_i > f_1, \forall i \in \mathcal{I}.
    \end{equation}
\end{lemma}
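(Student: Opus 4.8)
The plan is to argue by contradiction. Suppose $\{(t_i,m_i)_{i=1}^I\}$ is a solution with $t_i - c_1 m_i > f_1$ for every $i$; I will build a strictly better feasible contract, contradicting optimality. I only need the assumption at $i=I$, namely $t_I - c_1 m_I > f_1$. The construction is a \emph{full pooling} contract $(\hat t,\hat m)$ offered to all types, with $\hat t \triangleq t_I$ (the best model, which type-$I$ already receives by weak efficiency) and $\hat m \triangleq (t_I - f_1)/c_1$, the contribution level at which a type-$1$ party's IR binds. The assumption yields $\hat m > m_I$, and since proportional fairness gives $m_1\le\cdots\le m_I$, this means $\hat m$ strictly exceeds every original $m_i$.

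I then verify that the pooling contract is feasible. Incentive compatibility is automatic, since all types face the same option. For the budget constraint, note that under pooling every realization contributes total data $N\hat m$ deterministically, so the right-hand side becomes $v(a(N\hat m))$; using weak efficiency $t_I = \mathbb{E}_{n_I\ge 1}[v(a(\sum_j n_j m_j))]$ together with $\sum_j n_j m_j \le N m_I < N\hat m$ for every realization, the monotonicity of $a$ and $v$ gives $t_I \le v(a(N\hat m))$, so the budget constraint holds for all types. Once feasibility is established, the objective improves strictly: since $\hat m > m_i$ for all $i$ and $a'(\cdot)>0$, we get $a(N\hat m) > \mathbb{E}[a(\sum_j n_j m_j)]$, contradicting optimality of the original contract.

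The main obstacle is verifying the individual rationality constraints for the \emph{intermediate} types, i.e.\ $\hat t - c_i \hat m \ge f_i$. Type-$1$ binds by construction, and substituting $\hat t=t_I$, $\hat m=(t_I-f_1)/c_1$ and rearranging shows the condition is equivalent to
\[
t_I \;\ge\; \frac{c_1 f_i - c_i f_1}{c_1 - c_i}, \qquad i\in\{2,\dots,I\}.
\]
To discharge this, I would combine the best-model condition $t_I \ge \frac{c_1 f_I - c_I f_1}{c_1 - c_I}$ with the monotonicity of the map $h(c)\triangleq \frac{c_1 f(c) - c f_1}{c_1 - c}$, where $f(\cdot)$ is the reservation utility viewed as a function of cost. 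A short computation gives that the numerator of $h'(c)$ equals $c_1\big[f(c) - f_1 + f'(c)(c_1 - c)\big]$, which is nonpositive because $f$ is convex (established in the Jensen-inequality lemma), so the tangent line at $c$ lies below $f(c_1)=f_1$. Hence $h$ is non-increasing, and $c_I < c_i$ gives $h(c_I)\ge h(c_i)$, i.e.\ $\frac{c_1 f_I - c_I f_1}{c_1 - c_I} \ge \frac{c_1 f_i - c_i f_1}{c_1 - c_i}$. Chaining with the best-model bound yields the required IR inequality for every $i$, completing the feasibility check and hence the contradiction.
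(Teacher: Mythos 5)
Your proof is correct, and its skeleton coincides with the paper's: assume such a solution, build the full-pooling deviation $(\hat t, \hat m) = \bigl(t_I, (t_I - f_1)/c_1\bigr)$ --- exactly the contract the paper constructs via a system of binding constraints and then shows (its Claim 2) to be pooling --- check feasibility, and contradict optimality via $\hat m > m_I \ge m_i$. Where you genuinely depart is the IR check for the intermediate types. The paper runs a backward induction from type $I$ down, invoking its Lemma \ref{Jensen's inequality} once per adjacent pair $(c_k, c_{k+1})$; you instead prove a one-shot monotonicity claim, namely that $h(c) = (c_1 f(c) - c f_1)/(c_1 - c)$ is non-increasing because the numerator of $h'(c)$ equals $c_1\bigl[f(c) - f_1 + f'(c)(c_1 - c)\bigr] \le 0$ by the tangent-line characterization of convexity, and then chain $t_I \ge h(c_I) \ge h(c_i)$ using Lemma \ref{best model condtion}. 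Both routes rest on the same two ingredients (the best-model lower bound on $t_I$ and convexity of the reservation utility $f(\cdot)$), so this is a repackaging rather than a new idea, but yours is arguably cleaner: it replaces an induction by a single derivative computation. One caveat is that the paper's induction template is reused almost verbatim in Lemma \ref{ruling out contract solutions 1}, where the pooled block covers only types $1,\dots,K$ and is anchored at the threshold type rather than at type $I$, so the induction is not wasted machinery in the paper's overall development. Two minor points: your budget check is actually more explicit than the paper's (you observe the post-deviation budget is the deterministic $v(a(N\hat m))$ and verify $t_I \le v(a(N\hat m))$ by pointwise monotonicity, whereas the paper only writes $\hat t_i \le t_I \le \mathbb{E}[v(a(\sum_i n_i m_i))]$, leaving implicit that enlarging contributions only loosens the budget), and your appeal to weak efficiency there is unnecessary --- feasibility of the assumed solution already gives $t_I \le \mathbb{E}_{n_I \ge 1}[v(a(\sum_j n_j m_j))]$, which is all your argument uses.
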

\begin{proof}
Assume the opposite is true, i.e., there exists a solution ${\{(t_i, m_i)_{i=1}^I\}}$ to the constrained optimization problem such that
    \begin{equation}
        t_i - c_1 m_i > f_1, \forall i \in \mathcal{I}.
    \end{equation}     
    Consider the alternative contract $\{(\hat{t}_i, \hat{m}_i)_{i=1}^{I}\}$ that satisfies
    \begin{align}
        \left\{\begin{array}{l}
        \hat{t}_i - c_1 \hat{m}_i = f_1, \forall i \in \{1, \dots, I \}, \\
        \hat{t}_I = t_I, \\
        \hat{t}_i - c_{i+1} \hat{m}_i = \hat{t}_{i+1} - c_{i+1} \hat{m}_{i+1}, \forall i \leq I-1.
        \end{array}\right.
    \end{align}
    Solving the system of equations, we have
    \begin{align}
        \hat{m}_I & = \frac{t_I -f_1}{c_1}  \\
        \hat{m}_i & = \frac{1}{c_1 - c_{i+1}} \left[\hat{t}_{i+1} - c_{i+1} \hat{m}_{i+1} - f_1\right], \forall i \leq I-1, \\
        \hat{t}_I & = t_I \\
        \hat{t}_i & = \frac{c_1}{c_1 - c_{K+1}} \left[\hat{t}_{i+1} - c_{i+1} \hat{m}_{i+1}\right] - \frac{c_{i+1}}{c_1 - c_{i+1}} f_1, \forall i \leq I-1.
    \end{align}

    \noindent \underline{Claim 1}: \textit{For $i \in \{1, \dots, I\}$, $\hat{t}_i - c_i \hat{m}_i \geq f_i$}.

    \noindent We prove the claim by induction.

    \medskip
    
    \noindent Consider first when $i = I$,
    \begin{align}
        \hat{t}_I - c_I \hat{m}_I = \; & \frac{c_1 - c_I}{c_1} t_I + \frac{c_I}{c_1} f_1 \\
        \geq \; & \frac{c_1 - c_I}{c_1} \left( \frac{c_1}{c_1 - c_I} f_I - \frac{c_I}{c_1 - c_I} f_1\right) + \frac{c_I}{c_1} f_1 \\
        = \; & f_I.
    \end{align}
    The second last inequality utilizes Lemma \ref{best model condtion}.

    Now assume that $\hat{t}_{k+1} - c_{k+1} \hat{m}_{k+1} \geq f_{k+1}$. Consider the case when $i = k$:
    \begin{align}
        \hat{t}_{k} - c_k \hat{m}_{k} = \; & \frac{c_1}{c_1 - c_{k+1}} \left[\hat{t}_{k+1} - c_{k+1} \hat{m}_{k+1}\right] - \frac{c_{k+1}}{c_1 - c_{k+1}} f_1 \\
        & - \frac{c_k}{c_1 - c_{k+1}} \left[\hat{t}_{k+1} - c_{k+1} \hat{m}_{k+1} - f_1\right] \\
        = \; & \frac{c_1 - c_k}{c_1 - c_{k+1}} \left[\hat{t}_{k+1} - c_{k+1} \hat{m}_{k+1}\right] + \frac{c_k - c_{k+1}}{c_1 - c_{k+1}} f_1 \\
        \geq \; & \frac{c_1 - c_k}{c_1 - c_{k+1}} f_{k+1} + \frac{c_k - c_{k+1}}{c_1 - c_{k+1}} f_1 \\
        \geq \; & f_k.
    \end{align}
    The second last inequality follows from the induction assumption, and the last inequality is a consequence of Lemma \ref{Jensen's inequality}.

    \medskip
    
    \noindent \underline{Claim 2}: The newly constructed alternative contract is a pooling contract.

    \noindent To prove the result, for $i \in \{1, \dots, I-1\}$, we subtract $\hat{m}_{i+1}$ from $\hat{m}_i$:
    \begin{equation}
        \hat{m}_i - \hat{m}_{i+1} = \frac{1}{c_1 - c_{i+1}} \left[\hat{t}_{i+1} - c_1 \hat{m}_{i+1} - f_1\right] = 0,
    \end{equation}
    where the last equality follows by construction.

    \medskip

    \noindent \underline{Claim 3}: \textit{For $i \in \{1, \dots, I\}$, $\hat{m}_i > m_{I}$.}

    \noindent Claim 2 tells us that $\hat{m}_i = \hat{m}_{i+1}, \forall i \in\{1, \dots, K-1\}$. As a result, it suffices to show that $\hat{m}_I \geq m_I$.
    \begin{align}
        \hat{m}_I - m_I = \; & \frac{t_I - c_1 m_I - f_1}{c_1} \\
        = \; & \frac{1}{c_1 - c_{K+1}} \left[t_{K} - c_{K+1} m_K - f_1\right] - m_K \\
        > \; & 0.
    \end{align}
    The last inequality follows from the assumption that
    \begin{equation}
        t_{I} - c_1 m_{I} > f_1.
    \end{equation}

    \medskip

    \noindent \underline{Claim 4}: \textit{For $i \in \{1, \dots, I\}$, $\hat{t}_i \leq t_{I}$.}
    
    \noindent Due to the nature of pooling contracts, we only need to show $\hat{t}_I \leq t_I$. This holds true readily, as $\hat{t}_I = t_I$ by construction.
    
    \bigskip

    \noindent Putting the claims together, we see that ${\{(\hat{t}_i, \hat{m}_i)_{i=1}^I\}}$ is feasible as
    \begin{itemize}
        \item It satisfies the \textbf{budget constraints} (BC). For $i \in \{1, \dots, I\}$, $\hat{t}_i \leq t_{I} \leq \mathbb{E}\left[v\left(a\left(\sum_{i=1}^I n_i m_i\right)\right)\right]$, given by Claim 4.
        \item It satisfies the \textbf{individual rationality} (IR) conditions, given by Claim 1.
        \item It satisfies the \textbf{incentive compatibility} (IC) conditions. First, from Theorem \ref{equivalent constraints theorem appendix} the IC conditions are equivalent to binding DAC constraints plus the weakly increasing contributions constraints. The DAC continue to bind for the alternative contract by construction. The weakly increasing contributions constraints hold due to Claim 2.
    \end{itemize}
    However, from Claim 3, we know that for $i\in\{1, \dots, I\}$, $\hat{m}_i > m_I \geq m_i$, where the last inequality holds as the weakly increasing contributions constraints hold for the original set of contract options ${\{(t_i, m_i)_{i=1}^I\}}$. Then,
    \begin{equation}
        \mathbb{E}\left[a\left(\sum_{i=1}^K n_i \hat{m}_i + \sum_{i=K+1}^I n_i m_i \right)\right] > \mathbb{E}\left[a\left(\sum_{i=1}^K n_i m_i + \sum_{i=K+1}^I n_i m_i \right)\right] =\mathbb{E}\left[a\left(\sum_{i=1}^I n_i m_i \right)\right].
    \end{equation}
    This creates a contradiction with the assumption that ${\{(t_i, m_i)_{i=1}^I\}}$ is a solution to the constrained optimization problem.
\end{proof}

\begin{corollary}
    A solution to the constrained optimization problem cannot exist with
    \begin{equation}
        t_i - c_1 m_i > f_1, \exists i \in \{1, \dots, I\}.
    \end{equation}
    In other words, if ${\{(t_i, m_i)_{i=1}^I\}}$ is a solution to the constrained optimization problem, it must satisfy
    \begin{equation}
        t_i - c_1 m_i \leq f_1, \forall i \in \{1, \dots, I\}.
    \end{equation}
\end{corollary}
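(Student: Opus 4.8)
The plan is to argue by contradiction, assembling the three preceding lemmas into a single case analysis. Suppose $\{(t_i, m_i)_{i=1}^I\}$ is a solution to the constrained optimization problem for which $t_i - c_1 m_i > f_1$ holds for at least one index. First I would gather all such indices into the set $S \triangleq \{i \in \{1,\dots,I\} : t_i - c_1 m_i > f_1\}$, which is nonempty by assumption. The structural fact I need is that $S$ is an initial segment of $\{1, \dots, I\}$. This is exactly the content of Lemma \ref{qualifying contract solutions}: whenever some $j \geq 2$ lies in $S$, every $k < j$ also lies in $S$. Consequently $S$ is downward closed, so $1 \in S$ and $S = \{1, \dots, K\}$ where I set $K \triangleq \max S \in \{1, \dots, I\}$.

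With this threshold in hand, the remaining work is simply to observe that the two ``ruling out'' lemmas jointly exclude every admissible value of $K$. If $K \in \{1, \dots, I-1\}$, then by construction $t_i - c_1 m_i > f_1$ for all $i \leq K$ and $t_i - c_1 m_i \leq f_1$ for all $i > K$, which is precisely the hypothesis of Lemma \ref{ruling out contract solutions 1}; that lemma shows such a configuration cannot be a solution, a contradiction. If instead $K = I$, then $S = \{1, \dots, I\}$ and $t_i - c_1 m_i > f_1$ holds for every $i$, directly contradicting Lemma \ref{ruling out contract solutions 2}. Since $K$ must take one of these two exhaustive values, the supposition is impossible, and therefore any solution satisfies $t_i - c_1 m_i \leq f_1$ for all $i$, which is the contrapositive of the stated claim.

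I do not anticipate a genuine obstacle here, since the hard constructions --- building improved pooling contracts that strictly raise the objective while preserving the IR, IC, and budget constraints --- have already been carried out inside Lemmas \ref{ruling out contract solutions 1} and \ref{ruling out contract solutions 2}. The only points demanding care are confirming via Lemma \ref{qualifying contract solutions} that $S$ really is an initial segment, so that a single threshold type $K$ is well-defined and Lemma \ref{ruling out contract solutions 1} applies verbatim, and checking that the cases $K \leq I-1$ and $K = I$ are exhaustive over $K \in \{1, \dots, I\}$, which they are. In this sense the corollary is essentially a bookkeeping consolidation of the lemmas rather than a new argument.
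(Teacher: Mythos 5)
Your proposal is correct and follows exactly the paper's own route: the paper's proof is the one-line statement that the corollary ``follows by combining the results'' of Lemma \ref{qualifying contract solutions}, Lemma \ref{ruling out contract solutions 1}, and Lemma \ref{ruling out contract solutions 2}, and your argument is precisely the spelled-out version of that combination (initial-segment structure from the first lemma, then the two exhaustive cases $K \leq I-1$ and $K = I$ handled by the two ruling-out lemmas). No gaps; your write-up is simply more explicit than the paper's.
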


\begin{proof}
    The corollary follows by combining the results of Lemma \ref{qualifying contract solutions}, Lemma \ref{ruling out contract solutions 1} and Lemma \ref{ruling out contract solutions 2}.
\end{proof}

\begin{proposition}[Highest-cost type break even condition]
    With incomplete information, type-$1$ parties would obtain utilities no greater than their reservation level if choosing options designed for the other types. In addition, the individual rationality constraint must bind for parties of type $1$. Namely,
        \begin{align}
            t_j - c_1 m_j \leq & f_1 , \forall j \in \{2, \dots, I\}, \\
            t_1 - c_1 m_1 = & f_1.
        \end{align}
\end{proposition}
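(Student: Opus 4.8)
The plan is to lean almost entirely on the machinery already assembled in the preceding lemmas and corollary, so that this proposition collapses to a two-line argument. The first claim, that $t_j - c_1 m_j \leq f_1$ for every $j \in \{2,\dots,I\}$, is an immediate instance of the Corollary stated just before this proposition, which asserts that any solution $\{(t_i,m_i)_{i=1}^I\}$ to the constrained optimization problem satisfies $t_i - c_1 m_i \leq f_1$ for all $i \in \{1,\dots,I\}$. Restricting the index to $i \in \{2,\dots,I\}$ yields exactly the desired inequality, whose economic reading is that a type-$1$ party can never be made strictly better off than its reservation level by mimicking any option tailored to another type.

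For the binding-IR claim, I would first invoke the same Corollary at $i=1$ to obtain the upper bound $t_1 - c_1 m_1 \leq f_1$. I would then pair this with the type-$1$ individual rationality constraint $t_1 - c_1 m_1 \geq f_1$, which is part of the primal feasibility conditions of the problem (the IR constraint for $i=1$). The two inequalities are jointly consistent only when they hold with equality, which forces $t_1 - c_1 m_1 = f_1$. No optimization or perturbation argument is needed at this stage; the equality is squeezed out of a lower bound supplied by feasibility and an upper bound supplied by optimality.

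The substantive difficulty therefore does \emph{not} reside in this proposition but in the Corollary and the three lemmas feeding it (the ``qualifying contract solutions'' lemma and the two ``ruling out contract solutions'' lemmas), which together establish that an optimal contract cannot leave a type-$1$ party strictly above its reservation level when it contemplates any option on offer. The only thing I would verify locally is that the type-$1$ IR constraint genuinely belongs to the feasible region of the optimization problem, which it does by construction; consequently there is no hidden obstacle here. The main step to get right is simply the bookkeeping of directions: confirming that the Corollary delivers the $\leq$ direction for $i=1$ and that feasibility delivers the matching $\geq$ direction, so that combining them yields equality while the $j\ge 2$ cases need only the one-sided bound.
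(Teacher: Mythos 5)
Your proposal is correct and matches the paper's own proof exactly: the paper likewise derives both claims from the preceding corollary (which gives $t_i - c_1 m_i \leq f_1$ for all $i$) combined with the type-$1$ IR constraint $t_1 - c_1 m_1 \geq f_1$ to squeeze out the equality. You also correctly locate the real work in the corollary and its three supporting lemmas rather than in this proposition itself.
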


\begin{proof}
    The proposition follows from the previous lemma, after noting the fact that
    \begin{equation}
        t_1 - c_1 m_1 \geq f_1,
    \end{equation}
    which is required by the IR condition.
\end{proof}

\onecolumn

\section{Supplementary Sections}

\subsection{Related Work}


In collaborative machine learning, the incentivization issue arises due to the presence of information asymmetry where participants possess extra information that cannot be obtained by the coordinator. There has been a plethora of studies in economics on information asymmetry under the banners of principal-agency problem and contract theory (see \citealt{mwg1995, laffont_theory_2002, bolton2004} for a comprehensive treatment of the subject). 
Existing works can be largely subsumed into two strands depending on the source of the private information—the first concerns the principal's inability to observe agents' specific actions (termed the \textit{moral hazard} problem), while the second deals with the principal's inability to observe agents' private features, such as ability, that do not change within the scope of the game (called the \textit{adverse selection} problem). 
The monopolistic screening setup we adopt in this paper falls into the second strand and models the game where the principal moves first by offering a set of contract options, from which the agents then choose. The monopolistic part refers to the assumption of only one principal in the market, thereby devoid of competition. 
Regardless of which type of hidden information is being tackled, works in economics share the descriptive nature, stressing the theoretical explainability rather than the practical implementability. This differs from the goal of contract design for collaborative machine learning, which is innately prescriptive and demands attention to practical considerations.



Prior to our work, there has been a line of research that alludes to monopolistic screening in addressing the incentive issue in collaborative machine learning \citep{kang_incentive_2019, ding_incentive_2020, karimireddy_mechanisms_2022, Liu2023}. Our work differs from most of the works in the sense of using models as the rewards for the contracts and taking into account the stochasticity of the rewards due to different type realizations. Drawing on the limitations of previous works, we strengthen the coherence of our setup by ensuring that the utility functions have valid economic and machine learning meanings, and that the specification of the reservation utility reflects the training technology utilized by the collaboration.

In this regard, the setup of our work is most related to that of \citet{karimireddy_mechanisms_2022}, which utilizes models as rewards to maximize the total amount of collected data, assuming that agents have different per-unit data collection costs. 
Different from their work, we assume that the goal of the principal is to maximize the accuracy of the collectively trained model rather than the total amount of collected data. 
While under complete information, the two goals are equivalent, they lead to different optimal contracts under incomplete information (cf. Appendix A). Our work also differs in the setup of the individual rationality and incentive compatibility constraints, which are more in line with the definitions in the economic literature. While \citet{karimireddy_mechanisms_2022} focuses extensively on the complete information case and derives one specific incentive mechanism that satisfies the optimality condition, we elaborate on the incomplete information case and provide a general solution framework that characterizes the optimal contribution-reward pairs that must be satisfied by all optimal mechanisms. The formulation of the optimal contracting problem through constrained optimization also allows practitioners to readily tweak the components based on their practical needs—resulting in increased flexibility of the framework.


\subsection{Enacting Additional Fairness Requirements through Contract Design} \label{enacting additional fairness requirements}

In Section \ref{Contracting with Observable Costs} of the paper, we state that a coordinator can enact additional fairness requirements by modifying the IR constraints of the optimal contracting problem. Here we provide a viable framework under complete information. Note that it also addresses the concern some readers may have—that by partaking in the CML scheme each party receives the same level of utility as they do by training a model on their own. The modification comes with a cost—by enable additional surpluses to be gained by the participating parties, the coordinator must relinquish the insistence on obtaining the highest accuracy of the trained model—this reflects a long-lasting theme in the economic field about trading off performance and welfare.

The modified framework is as follows:
\begin{align}
        \max_{\{r_i, m_i\}_{i=1}^{I}} & \; a\left(\sum_{i=1}^I n_i m_i\right) \\
    \text{s.t.} & \; \left\{\begin{array}{l}
       v(r_i) - c_i m_i - f_i \geq s_i,  \forall i; \\
       \\ r_i \leq  a\left(\sum_{i=1}^I n_i m_i\right), \forall i .
    \end{array}\right.
\end{align}
where $\{s_i\}_{i=1}^I$ are hyper-parameters set by the scheme designer. Ceteris paribus, having positive surpluses for scheme participants is akin to increasing the reservation utilities for them, thereby increasing their bargaining power and reducing the amount of contribution that can be asked from each participant by the coordinator. While we demonstrate the framework under complete information, the same trick applies to the incomplete information scenario.

\subsection{Collaboration Failure in the Presence of Hidden Information}
\begin{tcolorbox}[title=TL;DR]
\textbf{Q:} Without any incentive mechanism, what could happen in collaborative machine learning involving multiple participants with different private costs? 
\tcblower
\textbf{A:} In some cases, we can predict a \textbf{complete collaboration failure} in which no participants make any contribution. It is also possible that \textbf{no sensible equilibrium exists}—the result of the collaboration is not predictable.
\end{tcolorbox}

\citet{karimireddy_mechanisms_2022} demonstrate that catastrophic freeriding could occur in a collaborative machine learning scheme where participants have different but observable data collection costs. They show that in this case only participants with the lowest data collection cost choose to contribute a non-negative amount to the scheme, while the others receive the model reward without any contribution.

We extend the analysis to the general setting where data collection cost is not publicly observable and participants only know the probability distribution of the private costs. In this incomplete information game, a participant holds a belief about other participants' contributions, based on which they determine the amount they would contribute. As a solution concept, we extend the \textbf{perfect Bayesian equilibrium (PBE)} (cf. \citealp{mwg1995}) to the collaborative machine learning setting:
\begin{definition}
    A \textbf{perfect Bayesian equilibrium in collaborative machine learning} is a set of contributions and beliefs such that \begin{itemize}
        \item \textbf{(Rationality)} Given a participant's belief on other participants' contributions, the contribution chosen by the participant should maximize their expected utility.
        \item \textbf{(Consistency)} The beliefs held by the participants are correct in the sense that they align with the equilibrium contributions. 
        \item \textbf{(Reasonability)} The equilibrium contributions are reasonable in the sense that participants of the same private type make the same contribution.
        \end{itemize}
\end{definition}
With the above equilibrium concept, we can now establish several important results about unincentivized collaboration machine learning in the presence of hidden information.
\begin{proposition}
\label{non-positive PBE}
    In a collaborative machine learning scheme involving multiple participants with different privately observable costs, a perfect Bayesian equilibrium where some participants contribute positive amount of data does not exist.
\end{proposition}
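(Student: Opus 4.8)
The plan is to argue by contradiction: assume a perfect Bayesian equilibrium exists in which at least one type contributes a positive amount, and then expose an inconsistency. I would begin by writing down each participant's problem at the interim stage. A type-$i$ participant who contributes $m_i$ and believes the others play their equilibrium contributions obtains expected utility $\mathbb{E}_{n_i\geq 1}[v(a(T))]-c_i m_i$, where $T=m_i+M_{-i}$ is the total data and $M_{-i}$ is the (random) aggregate contribution of the other $N-1$ participants. Writing $\tilde v\triangleq v\circ a$, the assumptions $v'>0,\ v''\leq 0,\ a'>0,\ a''<0$ give $\tilde v'>0$ and $\tilde v''<0$ (exactly as established in the proof of Proposition \ref{reservation utility}), so the objective is strictly concave in $m_i$ and expectation preserves this concavity. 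Hence the best response is characterised by the first-order condition $\mathbb{E}[\tilde v'(m_i+M_{-i})]=c_i$ when interior, and by the corner condition $\mathbb{E}[\tilde v'(M_{-i})]\leq c_i$ when $m_i=0$.

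The second step exploits that types are drawn i.i.d.\ from $p$: conditional on a participant's own type, the other $N-1$ types are distributed as $\mathrm{Mul}(N-1,p)$ \emph{regardless} of the own type. Combined with the reasonability requirement (same type, same contribution), this makes the law of $M_{-i}$, and therefore the marginal-value function $g(m)\triangleq\mathbb{E}[\tilde v'(m+M_{-i})]$, common to every participant; strict concavity of $\tilde v$ makes $g$ strictly decreasing. Now the strict ordering $c_1>\dots>c_I$ enters through the corner conditions: a type contributes a positive amount if and only if $g(0)>c_i$. Since $g$ is common, the set of contributing types is a ``lowest-cost block''; in particular, if \emph{any} type contributes then $g(0)>c_I$, so the lowest-cost type $I$ must contribute, and if exactly one type contributes it must be type $I$. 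This reproduces the complete-information free-riding picture, but now derived from a single common $g$ rather than from observability.

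The crux, which I expect to be the main obstacle, is closing the contradiction by ruling out the lowest-cost block once beliefs are required to be consistent with the \emph{random presence} of efficient contributors. Under complete information the lowest-cost agent knows it is the efficient contributor and supplies its standalone amount, but under incomplete information no participant knows whether cheaper types are present, and consistency forces $g$ to be evaluated against a distribution of $M_{-i}$ that itself reflects this uncertainty and the within-type free-riding among the (randomly many) type-$I$ participants. The plan is to substitute the fixed-point relation $g(m_I)=c_I$ (with $M_{-i}$ generated by the conjectured contributions via $\mathrm{Mul}(N-1,p)$) back into the corner conditions of the adjacent types and show that interim optimality, consistency, and reasonability cannot hold simultaneously, so that any candidate profile with a positive component admits a profitable deviation by some type. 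The delicate part is demonstrating that this system is genuinely \emph{inconsistent} rather than merely pinning down an inefficient-but-valid equilibrium; I would attack it by comparing, across the realizations in which no cheaper type appears, the marginal value a would-be contributor expects against its cost, and showing the averaging induced by the unknown type profile cannot be reconciled with a single positive $m_I$ satisfying the shared monotone condition.
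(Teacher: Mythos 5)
Your setup is fine as far as it goes, but the gap you flag at the end is fatal to this route, and the paper closes it with an ingredient your proposal never uses. Working only with individual (interim) best responses, consistency, and ``same type, same contribution,'' you cannot reach a contradiction: a symmetric profile in which each contributing type's contribution is a fixed point of its own first-order condition $\mathbb{E}\left[\tilde{v}'\left(m_i + (n_i-1)m_i^\circ + M_{-i}\right)\right] = c_i$ is internally consistent, and nothing in those conditions rules out positive contributions. (In the degenerate one-type case with $N=2$, the condition $\tilde{v}'(2m^\circ)=c$ has a positive solution whenever $\tilde{v}'(0)>c$, and it is an equilibrium in exactly your sense.) So your worry that the analysis might ``merely pin down an inefficient-but-valid equilibrium'' is not a technical obstacle to be worked around --- it is the correct diagnosis that this route cannot produce the proposition.

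What the paper actually does is different in kind: it reads the reasonability requirement as saying that the equilibrium contribution $m_i^\circ$ solves the \emph{collective} (lockstep) problem $\max_{m}\, \mathbb{E}_{n_i\geq 1}\left[\tilde{v}\left(n_i m + M_{-i}(n)\right)\right] - c_i m$, in which the contributions of all $n_i$ same-type participants move together. The resulting first-order condition carries the factor $n_i$ inside the expectation, $c_i = \mathbb{E}_{n_i \geq 1}\left[n_i\, \tilde{v}'\left(n_i m_i^\circ + M_{-i}(n)\right)\right]$, which is strictly larger than $\mathbb{E}_{n_i \geq 1}\left[\tilde{v}'\left(n_i m_i^\circ + M_{-i}(n)\right)\right]$ because $n_i>1$ occurs with positive probability and $\tilde{v}'>0$. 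Rationality, by contrast, pins down the individual best response $m_i^*$ via $c_i = \mathbb{E}_{n_i \geq 1}\left[\tilde{v}'\left(m_i^* + (n_i-1)m_i^\circ + M_{-i}(n)\right)\right]$; comparing the two conditions and using that $\tilde{v}'$ is decreasing forces $m_i^* < m_i^\circ$, i.e., every type-$i$ participant strictly prefers to free-ride on the other members of its \emph{own} type --- the contradiction. The missing idea in your proposal is thus not a cleverer analysis of the common marginal-value function $g$, but the tension between collective and individual optimality created by the random multiplicity $n_i$ of one's own type; without invoking that (paper-specific) reading of reasonability, the statement is not provable, because under the standard Bayesian--Nash notion you are implicitly using it is false.
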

\begin{proof}
    Assume it is not the case, i.e., there exist participants of private type $i$ who contribute $m_i^\circ > 0$ in equilibrium. For brevity of notation, we let $M_{-i}(n)\triangleq \sum_{k \neq i} n_k m_k^\circ$ denote the total contribution of participants of all other types given type realization $n$, and we denote $v(a(\cdot))$ by $\tilde{v}$. By the reasonability requirement of a perfect Bayesian equilibrium, $m_i^\circ$ is determined as follows:
    \begin{equation}
    m_i^\circ := \arg\max_{m_i}\mathbb{E}_{n_i \geq 1} \left[ \tilde{v}(n_i m_i + M_{-i}(n))\right] - c_i m_i
    \end{equation}
    Equivalently, by the first-order condition, we have
    \begin{equation}
        c_i = \mathbb{E}_{n_i \geq 1} \left[ \tilde{v}^\prime(n_i m_i^\circ + M_{-i}(n)) n_i\right] >  \mathbb{E}_{n_i \geq 1} \left[ \tilde{v}^\prime(n_i m_i^\circ + M_{-i}(n))\right], \label{reasonability requirement}
    \end{equation}
    where the last inequality follows from realizing $n_i > 1$ with positive probability.

    Now, the rationality requirement of a perfect Bayesian equilibrium mandates that all participants should not have any incentive to deviate to another contribution amount. Consider a participant of type $i$. They decide not to deviate if 
    \begin{equation}
    m_i^\circ = \arg\max_{m_i}\mathbb{E}_{n_i \geq 1} \left[ \tilde{v}(m_i + (n_i-1) m_i^\circ + M_{-i}(n))\right] - c_i m_i.
    \end{equation}
    The optimal contribution $m_i^*$ for the participant satisfies the following first-order condition,
    \begin{equation}
        c_i = \mathbb{E}_{n_i \geq 1} \left[ \tilde{v}^\prime(m_i^* + (n_i-1) m_i^\circ + M_{-i}(n))\right].
    \end{equation}
    Combining the result with (\ref{reasonability requirement}), we have
    \begin{equation}
        \mathbb{E}_{n_i \geq 1} \left[ \tilde{v}^\prime(m_i^* + (n_i-1) m_i^\circ + M_{-i}(n))\right] >  \mathbb{E}_{n_i \geq 1} \left[ \tilde{v}^\prime(n_i m_i^\circ + M_{-i}(n))\right].
    \end{equation}
    Consequently, $m_i^* < m_i^\circ$. Therefore, the participant would indeed deviate—a contradiction.
\end{proof}

Proposition \ref{non-positive PBE} leaves us with only 1 candidate perfect Bayesian equilibrium, where all participants make no contribution, i.e., $m_i^\circ = 0, \forall i \in \mathcal{I}$. It is indeed the only perfect Bayesian equilibrium when all participants have 0 reservation utilities.
\begin{proposition}
    \label{collaboration failure}
    In a collaborative machine learning scheme involving multiple participants with different privately observable costs, if all participants have zero reservation utilities, then the only perfect Bayesian equilibrium is a complete collaboration failure where all participants make no contribution. 
\end{proposition}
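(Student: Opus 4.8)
The plan is to combine the non-existence result of Proposition \ref{non-positive PBE} with a direct verification that the zero-contribution profile constitutes a perfect Bayesian equilibrium. Proposition \ref{non-positive PBE} already guarantees that no PBE can involve any participant contributing a strictly positive amount, so every PBE, if one exists, must satisfy $m_i^\circ = 0$ for all $i \in \mathcal{I}$. It therefore remains to show two things: that the all-zero profile genuinely satisfies the three PBE requirements under the zero-reservation-utility hypothesis, and that this pins the equilibrium down uniquely.

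First I would dispatch the consistency and reasonability requirements, which hold trivially here: the common belief that every type contributes zero coincides with the equilibrium contributions, and participants of the same type all choose the identical amount (namely zero). The substantive step is rationality---checking that no type-$i$ participant wishes to deviate. Fixing the belief $m_k^\circ = 0$ for all $k$, a deviating type-$i$ participant who contributes $m_i$ faces a total data volume of $m_i + (n_i - 1)\cdot 0 + M_{-i}(n) = m_i$, which is independent of the realization $n$. Consequently the operator $\mathbb{E}_{n_i \geq 1}$ collapses, and the deviation objective reduces exactly to the solo-training problem $\max_{m_i} v(a(m_i)) - c_i m_i$ that defines the reservation utility.

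The crux is then the interpretation of the zero-reservation-utility hypothesis. I would invoke Proposition \ref{reservation utility}, and in particular the remark in its proof that $c_i > \tilde{v}^\prime(0)$ forces $\bar{m}_i = 0$ and $f_i = 0$: a reservation utility of zero means precisely that the optimal solo contribution is $\bar{m}_i = 0$. Hence the maximizer of the deviation objective is $m_i = 0$ with value $f_i = 0$, so no type-$i$ participant can profitably deviate and the all-zero profile satisfies rationality. The main obstacle---and the reason the hypothesis is indispensable---lies exactly here: without zero reservation utilities, a participant facing a zero aggregate contribution from everyone else would strictly prefer to contribute $\bar{m}_i > 0$, which would break the equilibrium. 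Finally, uniqueness is immediate: by Proposition \ref{non-positive PBE} any PBE must be all-zero, and since we have shown the all-zero profile is a PBE, it is the unique one, establishing the complete collaboration failure.
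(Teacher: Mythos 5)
Your proof is correct and takes essentially the same route as the paper: the paper likewise verifies that, against all-zero beliefs, a deviator's objective collapses to the solo-training problem whose optimum is zero under the zero-reservation-utility hypothesis, and it relies on the preceding non-existence result (Proposition \ref{non-positive PBE}) to rule out any other PBE. If anything, your write-up is more careful on one point: the paper's proof states the hypothesis as $c_i \leq \tilde{v}^\prime(0)$, which has the inequality reversed (zero reservation utility corresponds to $c_i \geq \tilde{v}^\prime(0)$, as in the paper's own corner-case discussion), whereas you state the condition in the correct direction.
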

\begin{proof}
    If all participants have zero reservation utilities, it must be that
    \begin{equation}
        c_i \leq \tilde{v}^\prime(0), \forall i.
    \end{equation}
    All participants would not deviate because there does not exist a positive amount of contribution that can earn them a positive profit. In other words,
    \begin{equation}
         0 = \arg\max_{m_i\geq 0} \tilde{v}(m_i) - c_i m_i.
    \end{equation}
\end{proof}

Finally, if some participants have non-zero reservation utilities, we establish the non-existence of a perfect Bayesian equilibrium, which gives us no predictive results about the collaboration.
\begin{proposition}
    \label{non-existence of PBE}
    In a collaborative machine learning scheme involving multiple participants with different privately observable costs, if some participants have non-zero reservation utilities, then no perfect Bayesian equilibrium exists. 
\end{proposition}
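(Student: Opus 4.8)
The plan is to proceed by elimination, leveraging the characterization already obtained in Proposition \ref{non-positive PBE}. That result rules out any perfect Bayesian equilibrium in which some participant contributes a positive amount, so the only surviving candidate is the degenerate profile $m_i^\circ = 0$ for all $i \in \mathcal{I}$. It therefore suffices to show that this all-zero profile fails one of the equilibrium requirements whenever some type has a strictly positive reservation utility.

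First I would translate the hypothesis ``some participant has non-zero reservation utility'' into a statement about $\bar{m}_i$. By Proposition \ref{reservation utility} and its proof, a type-$i$ party has $f_i > 0$ precisely when $c_i < \tilde{v}^\prime(0)$, in which case the solo-training optimum $\bar{m}_i$ solving $\tilde{v}^\prime(\bar{m}_i) = c_i$ is strictly positive. Fix such a type $i$. I would then examine the rationality requirement for this type in the candidate profile. When every participant contributes zero, we have $M_{-i}(n) = \sum_{k\neq i} n_k m_k^\circ = 0$ and the remaining $(n_i - 1)$ type-$i$ participants also contribute zero, so a deviating type-$i$ party faces
\begin{equation*}
\max_{m_i \geq 0} \; \mathbb{E}_{n_i \geq 1}\!\left[\tilde{v}\big(m_i + (n_i-1)\cdot 0 + 0\big)\right] - c_i m_i = \max_{m_i \geq 0}\; \tilde{v}(m_i) - c_i m_i.
\end{equation*}
The expectation collapses because the integrand no longer depends on the realization $n$, so this deviation problem coincides exactly with the solo-training problem that defines $\bar{m}_i$ and $f_i$. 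Its maximizer is $\bar{m}_i > 0$ with optimal value $f_i > 0$, strictly exceeding the utility of $0$ obtained by contributing nothing. Hence the type-$i$ participant strictly prefers to deviate, the rationality requirement fails, and the all-zero profile is not an equilibrium.

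Combining the two exclusions---Proposition \ref{non-positive PBE} for positive-contribution profiles and the argument above for the zero profile---no perfect Bayesian equilibrium survives, which proves the claim. The main obstacle, and the only place requiring genuine care, is the middle step: one must verify that the deviation problem truly reduces to the solo problem (so that the best response delivers exactly $f_i$, not merely a nonnegative payoff) and that the hypothesis is correctly read through Proposition \ref{reservation utility} as $\bar{m}_i > 0$. Everything else is bookkeeping over the two mutually exhaustive families of candidate profiles.
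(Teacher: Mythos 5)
Your proposal is correct and follows essentially the same route as the paper: invoke Proposition \ref{non-positive PBE} to eliminate all positive-contribution profiles, then show the remaining all-zero profile fails rationality because a type with $f_i > 0$ strictly gains by deviating to its solo optimum $\bar{m}_i > 0$. Your reading of the hypothesis as $c_i < \tilde{v}^\prime(0)$ is the right one (the paper's proof writes $c_i > \tilde{v}^\prime(0)$, which is a typo, as its own Corner Case section confirms), and your collapse of the expectation to the solo-training problem makes explicit a step the paper leaves implicit.
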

\begin{proof}
    Let $i$ denote one type of participants who have non-zero reservation utilities. It follows that
    \begin{equation}
        c_i > \tilde{v}^\prime(0).
    \end{equation}
    When all other participants make no contribution, a type-$i$ participant would want to deviate by contributing their reservation level of contribution $\tilde{m}_i >0$, as
    \begin{equation}
        \tilde{m}_i := \arg\max_{m_i\geq 0} \tilde{v}(m_i) - c_i m_i.
    \end{equation}
    Therefore, all participants making no contribution can not constitute a perfect Bayesian equilibrium.

    Combining the result with that of (\ref{non-positive PBE}), we conclude that no perfect Bayesian equilibrium exists in this case.   
\end{proof}

\subsection{Type Selection}
\begin{tcolorbox}[title=TL;DR]
\textbf{Q:} If the end goal is to \textbf{maximize the coordinator's objective} (i.e., achieving the highest accuracy of the collectively trained model), is it possible that the optimal solution involves disincentivizing certain types of parties from joining the scheme (their IR conditions are not satisfied)? 
\tcblower
\textbf{A:} In general, it is possible. But for the setting of collaborative machine learning we consider in the paper, we can prove that the optimal solution must be such that all types' IR conditions are satisfied.
\end{tcolorbox}
To aid the understanding of the problem in a more general context, we use the conventional term of agents in place parties in the following discussion.

So far we have only considered the case where the IR constraints are satisfied for all types of agents in the population, based on which the principal maximizes the expected model accuracy. In other words, the contract is designed such that all agents find it in their interest to partake in the scheme. Absent from the welfare implication, one legitimate question to ask is whether the principal could be better off if she decides to contract only a subset of the types in the population (i.e., designing a contract that explicitly disincentivize certain types in the population from participation). We call this problem the \textbf{type selection problem}, which is a relatively understudied concept in economics.

We conjecture one of the reasons for the lack of scholarly discussion on this problem is the commonly made assumption of a homogeneous reservation utility, often assumed to be 0 in most cases. This has made the type selection problem trivial, as the null option (demanding 0 contribution and assigning 0 reward), which represents the opt-out option for the agents, is in the design space of the optimal contracting problem with the full set of ordinary IR constraints ($t_i - c_i m_i \geq 0, \forall i \in \{1,\dots, I\}$).

When the reservation utilities of agents are assumed to be homogeneous, denoted by $\tilde{u}$, but not equal to 0, the case becomes slightly involved. The null option (demanding 0 contribution and assigning 0 reward) is no longer in the design space with the full set of ordinary IR constraints ($t_i - c_i m_i \geq \tilde{u}, \forall i$). The options that give agents their reservation utilities (which, in the worse case, requires designing $I$ distinct options, one for each private type) are no longer equivalent to opt-outs, as now these options if chosen by agents will directly affect the principal's objective in a non-trivial manner. Yet, type selection does not complicate the monopolistic screening process by much, due to the transitivity of the IR constraints. To see this, recall that $c_1 > \dots > c_I$. Thus, if $\exists k \in \{1, \dots, I\}, t_k - c_k m_k \geq \tilde{u}$, it follows that $\forall j \geq k, t_k - c_j m_k \geq \tilde{u}$. In words, if the principal wants to contract type-$k$ agents, she must entertain contracting all agents with higher ability, i.e., $j \geq k$. Thus, the type selection problem reduces to determining the lowest type, denoted by $l$, the principal hopes to contract, which can be expressed by a pair of IR constraints:
\begin{equation}
    \left\{\begin{array}{l}
       t_l - c_l m_l \geq \tilde{u} \\
       t_l - c_{l-1} m_l < \tilde{u}
    \end{array}\right. .
\end{equation}
Using backward induction as the solution concept, the screening problem entails solving $I$ optimal contracting problems with different threshold type $l=1,\dots, I$ and then comparing the principal's optimal profits under these $I$ scenarios to determine the best case.

When agents have heterogeneous reservation utilities, the worst case scenario becomes intractable. For a single optimal contracting problem, let $\mathcal{I}_s$ be the set of types a principal selects to contract, and $\mathcal{I} = \{1,\dots, I\}$ all agent types in the population. The IR constraints correspond to the optimal contracting problem are:
\begin{equation}
    \left\{\begin{array}{l}
       t_i - c_i m_i \geq \tilde{u}_i, \forall i\in \mathcal{I}_s\\
       t_l - c_{i} m_l < \tilde{u}_i, \forall l \in \mathcal{I}_s, \forall i\in \mathcal{I} \backslash \mathcal{I}_s.
    \end{array}\right. .
\end{equation}
In the general case, each type may or may not be included in $\mathcal{I}_s$. In total, the principal needs to consider $2^I$ optimal contracting problems before settling down on the best case. This process is detailed in Figure \ref{fig:two-stage problem}.

\begin{figure*}[!htb]
    \centering
    \includegraphics[width=\textwidth]{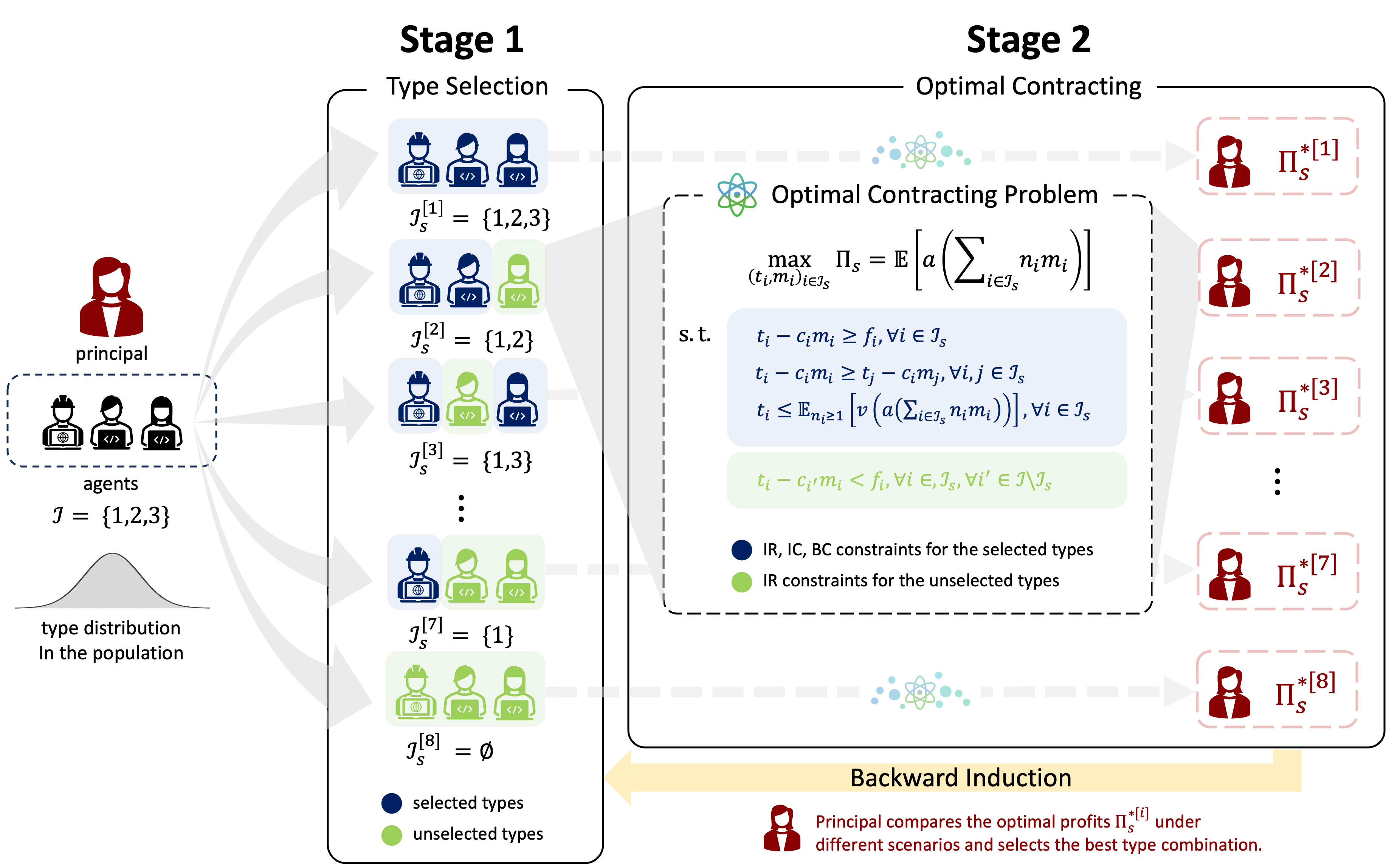}
    \caption{Monopolistic Screening as a Two-Stage Problem.}
    \label{fig:two-stage problem}
\end{figure*}

Returning to our setup of monopolistic screening in collaborative machine learning, we face the situation of heterogeneous reservation utilities. Yet, the situation seems less labyrinthine than the general case discussed above, as the reservation utilities are structurally determined by the individual training problem (\ref{indiv training problem}) and consequently have nice properties, such as monotonicity (i.e., $f_i \leq f_{i+1}$). This motivates us to ask the question of whether type selection in this case can be simplified to the determination of a threshold type, akin to the situation of homogeneous reservation utilities. We show that simplification is indeed possible and moreover the type selection problem becomes altogether \textit{trivial} for monopolistic screening in collaborative machine learning.
\begin{proposition}
    For optimal contract design in collaborative machine learning, the only type selection that the coordinator needs to entertain is the case where all types of parties are selected. Namely, any type selection that entails at least one type of parties not being selected cannot be optimal for the coordinator. Formally, $\forall i \in \mathcal{I}, i \in \mathcal{I}_s$.
\end{proposition}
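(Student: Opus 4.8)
The plan is to prove the claim by a domination argument: for any proper selection $\mathcal{I}_s \subsetneq \mathcal{I}$, I will show that any feasible contract tailored to $\mathcal{I}_s$ is weakly dominated, in the coordinator's objective, by a feasible contract that serves the \emph{entire} population, with strict domination whenever a dropped type would carry a positive contribution. The key lever is that opted-out agents contribute nothing, so under $\mathcal{I}_s$ the realised input is $\sum_{i\in\mathcal{I}_s} n_i m_i$; since $a(\cdot)$ is increasing, it suffices to exhibit a full-population contract whose per-type contributions are no smaller than those under $\mathcal{I}_s$ and in which each previously excluded type is asked for a nonnegative amount. Applying this to the $\mathcal{I}_s$-optimal contract then shows a fully inclusive contract attains the optimum, so no proper selection can be strictly better.

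First I would fix an optimal contract $\{(t_i,m_i)\}_{i\in\mathcal{I}_s}$ and apply the constraint analysis already established for the selected sub-population, which is itself an instance of the problem: by Theorem~\ref{equivalent constraints theorem appendix} the downward adjacent comparison constraints bind and contributions are weakly increasing, and the weak-efficiency property leaves the lowest-cost selected type on the best model. I would then \emph{extend the binding-DAC frontier} to the excluded types exactly as in the construction of Lemma~\ref{ruling out contract solutions 2}: keep the reward of the lowest-cost type pinned at the best model, leave the selected options untouched, and fill in options for the excluded higher-cost (lower-index) types by solving the telescoping binding-DAC recursion downward, forcing the overall lowest-index type to break even, $t_1 - c_1 m_1 = f_1$. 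Because $f(c)$ is convex and decreasing in $c$ (the first- and second-derivative computation in the proof of Lemma~\ref{Jensen's inequality}, together with Proposition~\ref{reservation utility}), the Jensen inequality of Lemma~\ref{Jensen's inequality} certifies that every inserted type's individual rationality constraint holds, while monotonicity of $f_i$ and $\bar{m}_i$ and Proposition~\ref{proportional fairness} give the ordering $m_1 \le \cdots \le m_I$ and the budget chain $t_i \le t_I \le \mathbb{E}_{n_I\ge 1}[v(a(\sum_i n_i m_i))]$.

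With the selected options left intact and each inserted type contributing a nonnegative amount, the total realised input weakly increases for every realisation $n$, and strictly increases whenever some inserted $m_i>0$ and $n_i\ge 1$; monotonicity of $a(\cdot)$ then yields $\mathbb{E}[a(\sum_{i\in\mathcal{I}} n_i m_i)] \ge \mathbb{E}[a(\sum_{i\in\mathcal{I}_s} n_i m_i)]$, with strict inequality whenever any dropped type carries a positive optimal contribution. Hence a fully inclusive contract always attains the optimum and any selection that forgoes a positive contribution is strictly suboptimal, giving $i\in\mathcal{I}_s$ for all $i$.

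The main obstacle is the \emph{feasibility} of the inserted options, not the monotonicity bookkeeping: I must ensure that reinstating the excluded types never forces a reduction in the contributions of the already-served types. The frontier-extension construction defuses this by keeping the selected chain, and in particular the best-model assignment to the lowest-cost type, unchanged and only appending options beneath it, so the budget constraints are inherited through $t_i\le t_I$. The genuinely delicate step is verifying the IR of the inserted types, which is exactly where the convexity of $f(\cdot)$ is indispensable; the one place requiring extra care is arguing that it suffices to treat the excluded set as the highest-cost (upper) segment, handling an interleaved selection by running the same downward recursion on the appropriate sub-chain.
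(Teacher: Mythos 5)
Your approach diverges from the paper's, and it has a genuine gap at precisely the step you defer to the end. The binding-DAC frontier extension is only meaningful when the excluded types form the highest-cost segment $\{1,\dots,k\}$ and the selected types are $\{k+1,\dots,I\}$; the proposition, however, quantifies over \emph{all} proper selections $\mathcal{I}_s \subsetneq \mathcal{I}$, and there is no reduction of an interleaved or bottom exclusion to the top-segment case. Worse, your construction provably fails for a middle excluded type $l$: optimality of the $\mathcal{I}_s$-contract for the type-selection problem includes the exclusion constraints $t_i - c_l m_i < f_l$ for every selected $i$ (otherwise type $l$ would simply take an existing option), so every selected option lies strictly below type $l$'s reservation line; a binding-DAC insertion against the adjacent selected option below, $\hat t_l - c_l \hat m_l = t_i - c_l m_i$, therefore violates type $l$'s IR outright. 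If you instead anchor the inserted option on type $l$'s reservation line, you are no longer running a DAC recursion, and verifying that the adjacent selected types do not envy the inserted option requires a different argument than Jensen's inequality. Even in the top-segment case your system is underdetermined: binding DACs plus the single break-even condition give $k+1$ equations for $2k$ unknowns, so you would need the full construction of Lemma~\ref{ruling out contract solutions 1}, which pins \emph{every} inserted option on type 1's reservation line (yielding a pooling block). A further soft spot is your appeal to Theorem~\ref{equivalent constraints theorem appendix} and weak efficiency ``for the selected sub-population'': the type-selection problem carries the extra exclusion constraints, and the improvement constructions underlying those results are not shown to respect them.

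The paper's proof avoids all of this with a purely local construction that works for an excluded type in \emph{any} position in the cost ordering: append the autarky option $(t_l, m_l) = \bigl(v(a(\bar m_l)), \bar m_l\bigr)$, i.e., demand exactly the data type $l$ would use on its own and reward exactly the accuracy it would attain on its own. Its IR binds by definition; type $l$ strictly prefers it to every selected option by the exclusion constraints; and no selected type is tempted, because $g(c;l) \triangleq v(a(\bar m_l)) - c\,\bar m_l - f(c)$ is maximized at $c = c_l$ with value $0$ (by monotonicity of $\bar m(c)$ from Proposition~\ref{reservation utility}), so the appended option yields every other type strictly less than its own reservation utility, hence strictly less than its own option. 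The budget constraint is preserved because total contribution only grows, and the objective strictly improves whenever $\bar m_l > 0$, contradicting optimality of excluding $l$. If you wish to salvage your write-up, replace the DAC recursion for the inserted types by this reservation-point insertion; the Jensen machinery then becomes unnecessary.
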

\begin{proof}
    The key of the proof is to show that based on the optimal solution to the case where a type is not selected we can easily improve upon the optimal solution by appending to the existing contract a designed option that will only be chosen by the unselected type.
    
    Recall the individual training problem:
    \begin{equation}
        \max_{m} \tilde{u} = v(a(m)) - c m
    \end{equation}
    (again dropping the $m\geq 0$ constraint for simplicity), and the FOC
    \begin{equation}
    \tilde{v}^\prime(\bar{m})= c,
    \end{equation}
    where $\tilde{v}(m)\triangleq v(a(m))$. Implicit differentiation yields
    \begin{equation}
        \bar{m}^{\prime}(c) = \frac{1}{\tilde{v}^{\prime\prime}(\bar{m})} <0.
    \end{equation}
    The reservation utility, written as a function of per-unit cost $c$, is defined as the utility level achieved with $\bar{m}$, i.e.,
    \begin{equation}
        f(c) \triangleq \tilde{u}(\bar{m}) = \tilde{v}(\bar{m}) - c \bar{m}.
    \end{equation}
    The derivative of reservation utility with respect to per-unit cost is:
    \begin{align}
    \frac{d f}{d c} = \frac{\tilde{v}^{\prime}(\bar{m})}{\tilde{v}^{\prime\prime}(\bar{m})} - \bar{m} - \frac{c}{\tilde{v}^{\prime\prime}(\bar{m})} = - \bar{m} \leq 0.
    \end{align}

    Define $g(c; i) = t_i - cm_i - f(c)$. We have
    \begin{equation}
        \frac{d g(c; i)}{d c} = \bar{m} - m_i.
    \end{equation}
    or equivalently,
    \begin{equation}
        \frac{d g(c; i)}{d  (-c)} =  m_i - \bar{m}.
    \end{equation}    

    Without loss of generality, let $l\in \mathcal{I}$ be an unselected type in the population, i.e., $l \notin \mathcal{I}_s$. Let $(t_i^*,m_i^*)_{i\in I_s}$ denote the optimal solution to the optimal contracting problem with type selection $\mathcal{I}_s$. By definition,
    $t_i^* - c_l m_i^* < f_l, \forall i \in \mathcal{I}_s$, which means existing contract options will not be selected by parties of type $l$.

    Consider tailoring the option $(t_l, m_l)$ for type-$l$ parties, where $t_l = v(a(\bar{m}_l))$ and $m_l = \bar{m}_l$, i.e., we demand type-$l$ parties contributing the same amount of data as they would utilize when training the model on their own and reward them with the same level of model accuracy as they would obtain with individual model training. This option clearly meets the IR and IC constraints of type-l parties, i.e. $t_l - c_l m_l = f_l \geq t^*_i - c_l m^*_i, \forall i \in \mathcal{I}_s$. The BC constraint holds trivially. The option also does not violate the IC constraints for other selected types. To see this, first note that
    \begin{equation}
        \frac{d g(c_l; l)}{d c} = \frac{d g(c_l; l)}{d (-c)}=0.
    \end{equation}
    For $c > c_l$, 
    \begin{equation}
        \frac{d g(c; l)}{d c} = \bar{m} - \bar{m}_l <0 \label{reserv relation 1}
    \end{equation}
    due to the monotonicity of $\bar{m}$, given by equation (\ref{reserv contribution}). We know that $g(c_l; l)=0$. Consequently, $\forall i \in \mathcal{I}_s$ with $i < l$,
    \begin{equation}
        g(c_i, l) = t_l - c_i m_l - f_i < 0.
    \end{equation}
    Therefore,
        \begin{equation}
         t_l - c_i m_l < f_i \leq t^*_i - c_i m^*_i.
    \end{equation}
    Similarly, for $c < c_l$, 
    \begin{equation}
        \frac{d g(c; l)}{d (-c)} = \bar{m}_l - \bar{m} <0 \label{reserv relation 2}
    \end{equation}
    due to the monotonicity of $\bar{m}$, given by equation (\ref{reserv contribution}). We know that $g(c_l; l)=0$. Consequently, $\forall i \in \mathcal{I}_s$ with $i > l$,
    \begin{equation}
        g(c_i, l) = t_l - c_i m_l - f_i < 0.
    \end{equation}
    Therefore,
        \begin{equation}
         t_l - c_i m_l < f_i \leq t^*_i - c_i m^*_i.
    \end{equation}
    But now with type-$l$ parties contributing non-trivially to the learning scheme and other types of parties contributing the same as before, the optimal solution has been improved.
\end{proof}
It is worth noting that the simplification depends crucially on the common technology, i.e., $f_i$ is determined using the same rule but with varying parameter $c_i$. If we relax this assumption, e.g., allowing parties to determine $f_i$ according to different technologies, of if the coordinator hopes to enact additional fairness requirements through the introduction of $s_i$ (cf. Appendix \ref{enacting additional fairness requirements}), then the type selection problem may no longer be trivial.
\subsection{Convexity of the Optimization Problem}
\begin{tcolorbox}[title = TL;DR]
\textbf{Q:} Is the simplified first-moment problem convex?
\tcblower
\textbf{A:} Yes, we can show the convexity of the problem by showing the convexity of the objective function and constraints.
\end{tcolorbox}

In this section, we prove that the constrained optimization problem under incomplete information is convex. Recall the optimization problem:
\begin{align}
        \max_{\{(t_i, m_i)_{i=1}^{I}\}} & \; \mathbb{E}_{n\sim \mathrm{Multi}(N,p)}\left[a\left(\sum_{i=1}^I n_i m_i\right)\right] \\
    \text{s.t.} & \; \left\{\begin{array}{ll}
       t_i - c_i m_i \geq f_i  &  \forall i; \\
       t_i - c_i m_i \geq t_j - c_i m_j & \forall i,j; \\
       t_i \leq  \mathbb{E}_{n_i \geq 1}\left[v\left(a\left(\sum_{i=1}^I n_i m_i\right)\right)\right], & \forall i.
    \end{array}\right.
\end{align}
Rewrite the problem slightly in standard form and we get
\begin{align}
        \min_{\{(t_i, m_i)_{i=1}^{I}\}} & -\; \mathbb{E}_{n\sim \mathrm{Multi}(N,p)}\left[a\left(\sum_{i=1}^I n_i m_i\right)\right] \\
    \text{s.t.} & \; \left\{\begin{array}{ll}
       f_i - t_i + c_i m_i \leq 0   &  \forall i; \\
        t_j - t_i - c_i (m_j-m_i) \leq 0 & \forall i,j; \\
       t_i -  \mathbb{E}_{n_i \geq 1}\left[v\left(a\left(\sum_{i=1}^I n_i m_i\right)\right)\right] \leq 0, & \forall i.
    \end{array}\right.
\end{align}
Denote the objective function by $\mathcal{O}$ and the left-hand sides of the IR constraints by $\mathcal{IR}_i$, those of the IC constraints by $\mathcal{IC}_{ij}$, and those of the BC constraints by $\mathcal{BC}_i$. The goal is to show that $\mathcal{O}, \mathcal{IR}_i, \mathcal{IC}_{ij}, \mathcal{BC}_i, \forall i,j$ are convex in $(t_i, m_i)_{i=1}^{I}$.
\subsubsection*{Convexity of $\mathcal{O}$}
First, it is obvious that
\begin{equation}
    \frac{\partial \mathcal{O}}{\partial t_i} = 0, \forall i.
\end{equation}
It follows that $\frac{\partial^2 \mathcal{O}}{\partial t_i\partial t_j} = 0, \forall i,j$ and $\frac{\partial^2 \mathcal{O}}{\partial t_i\partial m_j} = 0, \forall i,j$.

Then, consider the derivative with respect to $m_i$.
\begin{equation}
    \frac{\partial \mathcal{O}}{\partial m_i} = - \mathbb{E}\left[n_i a^\prime\left(\sum_{i=1}^I n_i m_i\right)\right], \forall i,
\end{equation}
from which we get the second derivatives:
\begin{equation}
    \frac{\partial^2 \mathcal{O}}{\partial m_i\partial m_j} = -\mathbb{E}\left[n_i n_j a^{\prime\prime}\left(\sum_{i=1}^I n_i m_i\right)\right], \forall i,j.
\end{equation}
We write the Hessian of $\mathcal{O}$ in the block matrix form,
\begin{equation}
    \mathcal{H}_\mathcal{O} = \left[\begin{array}{cc}
        \mathcal{H}_\mathcal{O}^{tt} & \mathcal{H}_\mathcal{O}^{tm} \\
        \mathcal{H}_\mathcal{O}^{mt} & \mathcal{H}_\mathcal{O}^{mm}
    \end{array}\right]
\end{equation}
where the only non-trivial component is $\mathcal{H}_\mathcal{O}^{mm}$ (i.e., $\mathcal{H}_\mathcal{O}^{tt}= \mathcal{H}_\mathcal{O}^{tm}=\mathcal{H}_\mathcal{O}^{mt}=0$). For any $\mathbf{x}\neq 0\in \mathbb{R}^I$,
\begin{equation}
    \mathbf{x}^T \mathcal{H}_\mathcal{O}^{mm} \mathbf{x} = - \mathbb{E}\left[(n_1 x_1 + \dots + n_I x_I)^2  a^{\prime\prime}\left(\sum_{i=1}^I n_i m_i\right)\right]  \geq 0.
\end{equation}
Therefore, $\mathcal{H}_\mathcal{O}^{mm}$ is positive semi-definite. Consequently, $\mathcal{H}_\mathcal{O}$ is positive semi-definite and $\mathcal{O}$ is convex.

\subsubsection*{Convexity of $\mathcal{IR}_i$}
Note that $\mathcal{IR}_i$ is linear in $t_i$ and $m_i$. Therefore, convexity holds trivially.
\subsubsection*{Convexity of $\mathcal{IC}_{ij}$}
Note that $\mathcal{IR}_i$ is linear in $t_i$, $t_j$, $m_i$ and $m_j$. Therefore, convexity holds trivially.
\subsubsection*{Convexity of $\mathcal{BC}_{i}$}
This is proved in a similar fashion to how we prove the convexity of $\mathcal{O}$. Denote the Hessian of $\mathcal{BC}_{i}$ as $\mathcal{H}_{\mathcal{BC}_i}$. Then,
\begin{equation}
    \mathcal{H}_{\mathcal{BC}_i} = \left[\begin{array}{cc}
        0 & 0 \\
        0 & \mathcal{H}_{\mathcal{BC}_i}^{mm}
    \end{array}\right].
\end{equation}
For any $\mathbf{x}\neq 0\in \mathbb{R}^I$,
\begin{equation}
    \mathbf{x}^T \mathcal{H}_{\mathcal{BC}_i}^{mm} \mathbf{x} = - \mathbb{E}_{n_i \geq 1}\left[(n_1 x_1 + \dots + n_I x_I)^2  \tilde{v}^{\prime\prime}\left(\sum_{i=1}^I n_i m_i\right)\right]  \geq 0
\end{equation}
where $\tilde{v}(x)\triangleq v(a(x))$. Therefore, $\mathcal{H}_{\mathcal{BC}_i}^{mm}$ is positive semi-definite. Consequently, $\mathcal{H}_{\mathcal{BC}_i}$ is positive semi-definite and $\mathcal{BC}_i$ is convex. As a result, the constrained optimization problem is convex. Note that the simplified first-moment problem may not be convex, due to the equality of the BC for type-$I$ parties. This issue can be mitigated by re-enacting the inequality of the BC constraint:
\begin{align}
    & \max_{\{(t_i, m_i)_{i=1}^{I}\}} \; \mathbb{E}_{n\sim \mathrm{Mul}(N,p)} \left[ a\left(\sum_{i=1}^I n_i m_i\right)\right] \label{the constrained optimization problem 1} \\
    & \text{s.t.}  \; \left\{\begin{array}{l}
       t_1 - c_1 m_1 - f_1 = 0; \\
       t_i \leq \mathbb{E}_{n_i \geq 1} \left[ v\left( a\left(\sum_{i=1}^I n_i m_i\right)\right)\right], \forall i \in \mathcal{I};\\
       t_i - c_i m_i = t_{i-1} - c_i m_{i-1},  \forall i \in \{2, \dots, I\}; \\
       m_i \geq m_{i-1}, \forall  i \in \{2, \dots, I\};\\
       t_i - c_i m_i - f_i \geq 0,   \forall i \in \{2, \dots, I\}. \\
    \end{array}\right. \label{a convex version of the simplified constrained optimization problem}
\end{align}

\subsection{Existence of Equilibrium}

\begin{tcolorbox}[title = TL;DR]
\textbf{Q:} The \textbf{propositions} in the paper about the properties of optimal contracts \textbf{hinges on the existence of an equilibrium}. Can you show that an equilibrium exists?
\tcblower
\textbf{A:} Yes, we can show the existence of an equilibrium in two steps:
\begin{enumerate}
    \item Prove that the feasible set is not empty.

    \item 
    Show that at least one optimal solution is in a closed and bounded subset of the feasible set.
\end{enumerate}
\end{tcolorbox}
The main results of constraint simplifications in the paper are contingent upon the existence of an equilibrium, which we now show it is indeed the case. We start by showing that the feasible set defined by (\ref{the simplified constrained optimization problem}) is non-empty.
\begin{proposition}
\label{prop: non-empty feasible set}
    There exists at least one candidate solution to monopolistic screening in collaborative machine learning. That is, the principal offers a contract with options stipulating the data usages and model accuracies pertinent to the parties when training a model on their own. Formally, let $\bar{t}_i = v(a(\bar{m}_i))$. The contract $\bar{\mathcal{C}} \triangleq \{\bar{t}_i, \bar{m}_i\}$ satisfies all feasibility constraints under incomplete information.
\end{proposition}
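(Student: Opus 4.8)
The plan is to verify directly that the candidate contract $\bar{\mathcal{C}} = \{\bar{t}_i, \bar{m}_i\}$, with $\bar{t}_i = v(a(\bar{m}_i))$, satisfies each of the three families of feasibility constraints under incomplete information: the individual rationality (IR) constraints, the incentive compatibility (IC) constraints, and the budget constraint (BC) in first moments. The starting observation, used throughout, is that by the definition of the individual training problem in (\ref{indiv training problem}) we have $f_i = \max_{m \geq 0} \big(v(a(m)) - c_i m\big) = v(a(\bar{m}_i)) - c_i \bar{m}_i = \bar{t}_i - c_i \bar{m}_i$.

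First I would dispatch the IR constraints: the identity just stated gives $\bar{t}_i - c_i \bar{m}_i = f_i$ for every $i$, so the IR constraints hold with equality. Next, for the IC constraints I would exploit that $f_i$ is a maximum over all admissible contributions. Since $\bar{m}_j \geq 0$ is itself an admissible point of the type-$i$ training problem, we obtain $\bar{t}_i - c_i \bar{m}_i = f_i \geq v(a(\bar{m}_j)) - c_i \bar{m}_j = \bar{t}_j - c_i \bar{m}_j$ for all $i, j$, which is exactly the IC constraint. Both steps are immediate once the defining identity for $f_i$ is in place.

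The budget constraint requires a pointwise monotonicity argument before passing to the expectation. For any realization $n$ in the support of the conditional law with $n_i \geq 1$, every summand being nonnegative gives $\sum_{k=1}^I n_k \bar{m}_k \geq n_i \bar{m}_i \geq \bar{m}_i$; since $a(\cdot)$ and $v(\cdot)$ are both increasing, this yields $v\big(a(\sum_{k} n_k \bar{m}_k)\big) \geq v(a(\bar{m}_i)) = \bar{t}_i$ for each such $n$. Taking $\mathbb{E}_{n_i \geq 1}[\cdot]$ preserves the inequality and delivers $\bar{t}_i \leq \mathbb{E}_{n_i \geq 1}\big[v(a(\sum_k n_k \bar{m}_k))\big]$, which is the first-moment BC.

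With all three families verified, $\bar{\mathcal{C}}$ lies in the feasible set, proving it is nonempty. There is no deep obstacle here; the only step meriting care is the budget constraint, where one must establish the bound $\sum_k n_k \bar{m}_k \geq \bar{m}_i$ for \emph{every} realization with $n_i \geq 1$ — i.e., argue the inequality holds inside the conditional expectation rather than merely in expectation — before invoking monotonicity of the conditional expectation operator.
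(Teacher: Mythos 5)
Your proof is correct, and it takes a genuinely different route from the paper's on the only non-trivial step. The paper dismisses IR and BC as holding "trivially" and spends its effort on the IC constraints, which it establishes via the auxiliary function $g(c;l) = \bar{t}_l - c\,\bar{m}_l - f(c)$ from the type-selection analysis: using implicit differentiation of the first-order condition it shows $c_l = \arg\max_c g(c;l)$ with $g(c_l;l)=0$, hence $\bar{t}_l - c_i \bar{m}_l < f_i$ for all $i \neq l$, and IC follows. You instead observe that IC is an immediate consequence of the definition of $f_i$ as a maximum: since $\bar{m}_j \geq 0$ is a feasible point of the type-$i$ individual training problem, $f_i \geq v(a(\bar{m}_j)) - c_i \bar{m}_j = \bar{t}_j - c_i \bar{m}_j$, and IR holds with equality, so IC follows in one line. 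Your argument is more elementary and more robust: it requires no differentiability, no first-order condition, and covers corner solutions (e.g.\ $\bar{m}_j = 0$) without any separate case analysis. What the paper's heavier machinery buys is the \emph{strict} inequality $\bar{t}_l - c_i \bar{m}_l < f_i$ for $i \neq l$, which is not needed for mere feasibility but is reused later (in the type-selection proposition) to guarantee that an appended option is chosen \emph{only} by its intended type; your weak inequality would not suffice for that downstream purpose, though it fully proves the present statement. Separately, you are right to flesh out the budget constraint rather than call it trivial: the pointwise bound $\sum_k n_k \bar{m}_k \geq n_i \bar{m}_i \geq \bar{m}_i$ on every realization with $n_i \geq 1$, followed by monotonicity of $v \circ a$ and of the conditional expectation, is exactly the argument the paper leaves implicit.
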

\begin{proof}
    The IR and BC constraints hold trivially. We consider the IC constraints. From (\ref{reserv relation 1}) and (\ref{reserv relation 2}), we know that $\forall l, c_l = \arg\max_c g(c; l)$. Therefore,
    \begin{equation}
        \bar{t}_l - c_i \bar{m}_l < f_i, \forall i \neq l.
    \end{equation}
    Additionally,
    \begin{equation}
        \bar{t}_i - c_i \bar{m}_i = f_i.
    \end{equation}
    Therefore,
    \begin{equation}
        \bar{t}_i - c_i \bar{m}_i \geq \bar{t}_l - c_i \bar{m}_l, \forall i , l\neq i.
    \end{equation}
\end{proof}
Next we show that the feasible set is closed.

\begin{proposition}
\label{prop: closed feasible set}
    The feasible set defined by 
    \begin{equation}
       \left\{\begin{array}{ll}
       t_i - c_i m_i \geq f_i  &  \forall i; \\
       t_i - c_i m_i \geq t_j - c_i m_j & \forall i,j; \\
       t_i \leq  \mathbb{E}_{n_`i \geq 1}\left[v\left(a\left(\sum_{i=1}^I n_i m_i\right)\right)\right], & \forall i.
    \end{array}\right.
    \end{equation}
    is closed.
\end{proposition}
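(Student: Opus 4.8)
The plan is to exhibit the feasible set as a finite intersection of preimages of closed half-lines under continuous maps, and then invoke the elementary facts that such preimages are closed and that a finite intersection of closed sets is closed. Working in $\mathbb{R}^{2I}$ with coordinates $(t_i, m_i)_{i=1}^I$, I would rewrite each of the three families of constraints in the canonical form $\{x : g(x) \geq 0\}$ for a continuous $g$.

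First, the IR and IC constraints are immediate. The maps $(t,m) \mapsto t_i - c_i m_i - f_i$ and $(t,m) \mapsto (t_i - c_i m_i) - (t_j - c_i m_j)$ are affine, hence continuous, and the associated constraint sets are the preimages of the closed half-line $[0,\infty)$ under these maps; each is therefore closed.

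The only family that requires genuine attention is the budget constraint. Here I would define $\phi_i(m) \triangleq \mathbb{E}_{n_i \geq 1}\left[v\left(a\left(\sum_{j} n_j m_j\right)\right)\right]$ and argue that it is continuous in $m$. Because $N$ is finite, the (conditioned) multinomial has finite support, so $\phi_i$ is a finite weighted sum over realizations $n$ of terms $v\left(a\left(\sum_j n_j m_j\right)\right)$. For each fixed $n$ the argument $\sum_j n_j m_j$ is linear in $m$, and $a(\cdot)$ and $v(\cdot)$ are continuous (indeed differentiable) by the standing assumptions, so $v \circ a$ composed with a linear map is continuous; a finite linear combination of continuous functions is continuous, whence $\phi_i$ is continuous. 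The constraint set $\{(t,m) : \phi_i(m) - t_i \geq 0\}$ is then the preimage of $[0,\infty)$ under the continuous map $(t,m) \mapsto \phi_i(m) - t_i$, hence closed.

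Finally I would conclude that the feasible set, being the intersection of the finitely many closed sets above (one per IR constraint, one per IC constraint, and one per BC constraint), is closed. The main---and essentially the only---obstacle is establishing continuity of the expectation $\phi_i$; once its finite-sum structure over the finite support of $\mathrm{Mul}(N,p)$ is recognized, continuity reduces to that of $v \circ a$, which the smoothness hypotheses $a'(\cdot)>0$, $a''(\cdot)<0$, $v'(\cdot)>0$, $v''(\cdot)\leq 0$ guarantee.
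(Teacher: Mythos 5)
Your proof is correct and takes essentially the same approach as the paper, which simply notes that each inequality constraint defines a closed set and that an intersection of closed sets is closed. Your additional justification of the continuity of $\mathbb{E}_{n_i \geq 1}\left[v\left(a\left(\sum_{j} n_j m_j\right)\right)\right]$ via the finite support of the multinomial distribution is a worthwhile filling-in of a detail the paper leaves implicit.
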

\begin{proof}
    Note that the set represented by each inequality constraint is closed. The intersection of closed sets is also closed.
\end{proof}

Next we show that all $m_i$ and $t_i$ are upper bounded.

\begin{proposition}\label{prop: all mi ti upper bounded}
    All $m_i$ and $t_i$ are upper bounded.
\end{proposition}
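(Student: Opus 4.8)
The plan is to bound the $t_i$ first and then feed that bound into the individual rationality constraints to bound the $m_i$. The crucial ingredient is that the accuracy function is bounded above: since $a(\cdot)$ measures a model accuracy level, it admits a finite supremum $\bar{a} \triangleq \sup_{m\geq 0} a(m)$ (for instance $\bar{a} = a_{opt}$ under the generalization-bound form used in the experiments, or $\bar{a}=1$ as invoked in the complete-information analysis where $a(\cdot)\leq 1$). I would state this boundedness explicitly at the outset, as everything else is immediate from it.

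First I would bound each $t_i$ using the budget constraint. For any feasible point and any realization $n$ with $n_i\geq 1$, monotonicity of $v$ gives $v\big(a(\sum_{k=1}^I n_k m_k)\big)\leq v(\bar{a})$; taking the conditional expectation on both sides of the budget constraint yields
\begin{equation*}
    t_i \leq \mathbb{E}_{n_i\geq 1}\left[v\left(a\left(\sum_{k=1}^I n_k m_k\right)\right)\right] \leq v(\bar{a}), \quad \forall i,
\end{equation*}
so every $t_i$ is bounded above by the finite constant $v(\bar{a})$, uniformly over the feasible set.

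Next I would bound each $m_i$ using the individual rationality constraint $t_i - c_i m_i \geq f_i$. Rearranging and substituting the bound just obtained,
\begin{equation*}
    m_i \leq \frac{t_i - f_i}{c_i} \leq \frac{v(\bar{a}) - f_i}{c_i}, \quad \forall i,
\end{equation*}
which is finite since $c_i>0$ and $f_i$ is a fixed finite reservation utility (Proposition~\ref{reservation utility}). I would also note that this upper bound is well defined and nonnegative because $f_i = \max_{m} v(a(m)) - c_i m \leq v(\bar{a})$, so $v(\bar{a}) - f_i \geq 0$, consistent with the nonemptiness of the feasible set (Proposition~\ref{prop: non-empty feasible set}).

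I do not expect a genuine obstacle here: once the boundedness of $a(\cdot)$ is in hand, both bounds follow in one line each. The only point requiring care is justifying $\bar{a}<\infty$, which is not guaranteed by concavity and monotonicity alone but follows from the interpretation of $a(\cdot)$ as an accuracy that is capped (at $1$). This boundedness is what makes model rewards fundamentally different from unbounded monetary transfers and is precisely what forces the $t_i$—and hence the contributions $m_i$—to stay finite.
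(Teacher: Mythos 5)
Your proof is correct and follows essentially the same route as the paper's: bound each $t_i$ by $v(\bar a)$ via the budget constraint together with the cap on the accuracy function (the paper simply invokes $a(x)\leq 1$ so that $t_i \leq v(1)$), then rearrange the IR constraint to get $m_i \leq (v(\bar a)-f_i)/c_i$. Your extra remarks—making the use of monotonicity of $v$ and the conditional expectation explicit, and flagging that boundedness of $a(\cdot)$ is an assumption rather than a consequence of concavity—are sound refinements of the same argument, not a different one.
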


\begin{proof}
    As $a(x)\leq 1, \forall x \geq 0$, it follows that $t_i \leq v(1), \forall i$.   Therefore, $t_i$'s are all upper-bounded.
    Re-arranging the IR constraints, we have
    \begin{equation}
        m_i \leq \frac{t_i -f_i}{c_i} \leq \frac{v(1) -f_i}{c_i}.
    \end{equation}
    Hence, all $m_i$ are upper unbounded.
\end{proof}

Now, we use the objective of the optimization problem to show that at least one optimal solution must satisfy the condition that $m_i$ and $t_i$ lowered bounded.

\begin{proposition}
    \label{prop: mi ti lower bounded}
    The optimal solutions must have one that satisfies the condition that $m_i$ and $t_i$ lowered bounded for all $i \in \mathcal{I}$.
\end{proposition}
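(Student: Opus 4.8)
The plan is to finish the existence argument with the Weierstrass extreme value theorem: a continuous objective on a non-empty compact set attains its maximum. Propositions \ref{prop: non-empty feasible set}, \ref{prop: closed feasible set}, and \ref{prop: all mi ti upper bounded} already supply non-emptiness, closedness, and uniform upper bounds on every $m_i$ and $t_i$, so the only missing ingredient is a matching lower bound; with it, the relevant feasible points sit inside a closed and bounded—hence compact—box, and continuity of $\mathcal{O}(t,m)=\mathbb{E}[a(\sum_i n_i m_i)]$ closes the argument. Rather than bounding the whole feasible set, I would work with the super-level set $S\triangleq\{(t,m)\ \text{feasible}:\mathcal{O}(t,m)\geq \bar V\}$, where $\bar V$ is the objective value of the reference contract $\bar{\mathcal C}$ from Proposition \ref{prop: non-empty feasible set}. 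Any global maximizer lies in $S$, so it suffices to bound $S$ from below and then attain the maximum on it.

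First I would reduce the task to the contributions alone. Rearranging the individual rationality constraint gives $t_i\geq f_i+c_i m_i$, and since $c_i>0$ and $f_i\geq 0$, a finite lower bound on each $m_i$ immediately yields a finite lower bound on the corresponding $t_i$. Thus the whole problem collapses to producing a uniform lower bound on each $m_i$ over $S$, after which every $t_i$ inherits the bound $f_i+c_i m_i$.

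To bound each $m_i$ below I would exploit that $\mathcal{O}$ is non-decreasing in $m_i$, because $a'(\cdot)>0$ and $n_i\geq 0$: for any feasible point with $m_i<0$, replacing $m_i$ by $0$ weakly increases the objective. Hence, provided feasibility can be preserved under this replacement, the coordinator never strictly benefits from a negative contribution, and at least one maximizing solution satisfies $m_i\geq 0$ for every $i$. The quantitative backbone is that type $i$ is present with strictly positive probability, $\Pr(n_i\geq 1)=1-(1-p_i)^N>0$, so driving $m_i\to-\infty$ sends $\sum_j n_j m_j\to-\infty$ and $a(\cdot)\to 0$ on all of these positive-probability events, which is exactly what the monotone replacement repairs. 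Combined with the reduction above, establishing $m_i\geq 0$ lower-bounds every $t_i$ by $f_i\geq 0$ as well.

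The main obstacle is the feasibility bookkeeping in the replacement step: raising $m_i$ to $0$ may force $t_i$ up to $f_i$ to keep individual rationality, and it simultaneously tightens the budget constraint and the incentive compatibility constraints of the adjacent types. I would discharge the budget condition by the same comparison with $\bar{\mathcal C}$ that underlies Proposition \ref{prop: non-empty feasible set} (so that lifting $t_i$ to $f_i$ remains admissible), and I would discharge incentive compatibility by invoking Theorem \ref{equivalent constraints theorem appendix}, which reduces it to the weakly increasing contributions together with the binding downward adjacent comparisons—a system preserved by the monotone adjustment. Once each $m_i$, and therefore each $t_i$, is bounded below on $S$, the set $S$ is closed and bounded, hence compact; since $\mathcal{O}$ is continuous it attains its maximum on $S$, and that maximizer is an optimal solution with all $m_i$ and $t_i$ lower bounded, which is exactly the claim.
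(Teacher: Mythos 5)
Your super-level-set framing and your reduction of the $t_i$ bounds to the $m_i$ bounds via IR ($t_i \geq f_i + c_i m_i$) match the paper's skeleton, but there is a genuine gap exactly where you flag "the main obstacle": the truncate-and-repair step, and neither of your two discharges holds. For incentive compatibility, Theorem \ref{equivalent constraints theorem appendix} forces the \emph{binding} equalities $t_i - c_i m_i = t_{i-1} - c_i m_{i-1}$ on every feasible point; if $m_{i-1} < m_i < 0$ are both truncated to $0$, these equalities demand $t_i = t_{i-1}$, while the point you started from has $t_i - t_{i-1} = c_i(m_i - m_{i-1}) \neq 0$. So the reward vector must be re-solved globally (a pooling construction in the spirit of Lemmas \ref{ruling out contract solutions 1} and \ref{ruling out contract solutions 2}), and is not "preserved by the monotone adjustment". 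Worse, the budget discharge can fail outright: since $f_i = \max_{m \geq 0}\{v(a(m)) - c_i m\} \geq v(a(0))$, with strict inequality whenever $\bar m_i > 0$, consider repairing a feasible point in which \emph{every} $m_j$ is negative (such points can be feasible: pooled options $t_j \equiv \tau$, $m_j \equiv -M$ satisfy IC trivially, IR becomes slacker as $M$ grows, and the budget only requires $\tau$ small). After truncation the aggregate contribution is $0$ in every realization, the budget cap equals $v(a(0))$, yet IR demands $t_i' \geq f_i > v(a(0))$; no choice of $t'$ exists, so the repaired point is infeasible, full stop, and the comparison with $\bar{\mathcal{C}}$ concerns a different point and says nothing here. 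In short, the replacement map fails precisely on the points with very negative contributions—the ones the proposition is about—and where it does work you essentially already have the bound, which makes the route circular. (There is also a framing slip: even a valid replacement would not "bound $S$ from below"; it would only give $\sup_S \mathcal{O} = \sup_{S \cap \{m \geq 0\}} \mathcal{O}$, so Weierstrass would have to be applied to $S \cap \{m \geq 0\}$.)

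The paper's proof needs none of this machinery because it deploys the degradation fact you call the "quantitative backbone" \emph{directly as a value comparison}, not as motivation for a replacement: if some $m_k$ is unboundedly negative while all other contributions are capped by $U$ (Proposition \ref{prop: all mi ti upper bounded}), the objective falls to at most $0$, hence strictly below $\bar V$ whenever $\bar V > 0$; therefore no feasible point with value $\geq \bar V$—in particular no optimizer—can have $m_k$ below some finite $L$, and $t_i \geq f_i + c_i L$ finishes via IR exactly as in your reduction. The degenerate case $\bar V = 0$ (which occurs only when $\bar m_i = 0$ for all $i$) is handled separately by observing that $\bar{\mathcal{C}}$ itself then serves as the required lower-bounded solution—a case your write-up does not address at all. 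If you delete the replacement paragraph and instead apply this one-line comparison to the points of $S$, the rest of your argument (non-emptiness, closedness, upper bounds, compactness, continuity) closes the proof as intended.
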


\begin{proof}
    Suppose there exists a $k\in\mathcal{I}$ such that $m_k = -\infty$. By Proposition \ref{prop: all mi ti upper bounded}, we know that all $m_i$'s are upper bounded. Denote the upper bound by $U$. Now consider the value of the objective function under this scenario:
    \begin{equation}
        \mathbb{E}\left[a\left(n_k m_k + \sum_{i\neq k} n_i m_i\right)\right] \leq  \mathbb{E}\left[a\left(-\infty + \sum_{i\neq k} n_i U\right)\right] = 0.
    \end{equation}
    Since the contract in Proposition \ref{prop: non-empty feasible set} gives a non-negative value of the objective function. We conclude that unless $\bar{m}_i = 0, \forall i \in \mathcal{I}$ s.t. $\mathbb{E}\left[a\left( \sum_{i = 1}^I n_i \bar{m}_i\right)\right] =0$, all optimal solutions, if they exist, must have $m_i$ bounded below. If $\bar{m}_i = 0, \forall i \in \mathcal{I}$, it is possible the the optimal solution yields a value of 0 for the objective function.
    But in this case the contract in Proposition \ref{prop: non-empty feasible set} is indeed an optimal solution that satisfies the condition that all $m_i$'s are lower bounded.

    Now that we have show that there exists an optimal solution where all $m_i$'s are lower bounded. It follows that for this optimal solution, all $t_i$'s are also lower bounded. To see this, denote the lower bound of $m_i$ by $L$.

    Then by the IR constraints, we have
    \begin{equation}
        t_i \geq f_i + c_i m_i \geq f_i + c_i L.
    \end{equation}
    Therefore, all $t_i$'s are lower bounded.
\end{proof}

\begin{proposition}
\label{prop: compact feasible set}
    The modified feasible set defined by 
    \begin{equation}
        \left\{\begin{array}{ll}
       m_i \geq L > -\infty, & \forall i; \\
       t_i - c_i m_i \geq f_i  &  \forall i; \\
       t_i - c_i m_i \geq t_j - c_i m_j & \forall i,j; \\
       t_i \leq  \mathbb{E}_{n_i \geq 1}\left[v\left(a\left(\sum_{i=1}^I n_i m_i\right)\right)\right], & \forall i.
    \end{array}\right.
    \end{equation}
    is closed and bounded, and thus compact, and it is non-empty.
\end{proposition}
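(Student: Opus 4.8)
The plan is to verify the three asserted properties separately, each by assembling the preceding propositions. For \textbf{closedness}, I would observe that the modified feasible set is precisely the original feasible set of Proposition~\ref{prop: closed feasible set} intersected with the halfspaces $\{m_i \geq L\}$, $i = 1, \dots, I$. Each such halfspace is closed, and the original feasible set is closed by Proposition~\ref{prop: closed feasible set}; since a finite intersection of closed sets is closed, the modified set is closed.

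For \textbf{boundedness}, I would pin down an upper and a lower bound on every coordinate. The upper bounds $t_i \leq v(1)$ and $m_i \leq (v(1)-f_i)/c_i$ are exactly those derived in Proposition~\ref{prop: all mi ti upper bounded}. The lower bound $m_i \geq L$ is imposed by construction. The one remaining bound---a lower bound on $t_i$---is not stated outright but is inherited from the IR constraint: rearranging $t_i - c_i m_i \geq f_i$ and using $c_i > 0$ together with $m_i \geq L$ yields $t_i \geq f_i + c_i m_i \geq f_i + c_i L$. Hence each of the $2I$ coordinates lies in a bounded interval, so the set is bounded in $\mathbb{R}^{2I}$. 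Combining closedness and boundedness, compactness follows at once from the Heine--Borel theorem.

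For \textbf{non-emptiness}, I would invoke the candidate contract $\bar{\mathcal{C}} = \{\bar{t}_i, \bar{m}_i\}$ of Proposition~\ref{prop: non-empty feasible set}, which satisfies every constraint of the original feasible set. Since the reservation contributions $\bar{m}_i$ are finite and non-negative, any choice $L \leq \min_i \bar{m}_i$ (in particular $L \leq 0$) guarantees that $\bar{\mathcal{C}}$ also obeys $m_i \geq L$, placing it in the modified set. To make the set useful for the downstream existence argument, I would additionally take $L$ no larger than the finite lower bound supplied by Proposition~\ref{prop: mi ti lower bounded}, so that the compact modified set is guaranteed to contain an optimal solution.

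The main obstacle, such as it is, is bookkeeping rather than mathematics: the only coordinate whose lower bound is not imposed directly is $t_i$, and one must remember to recover it from the IR constraint instead of treating it as given; moreover, one must fix the single constant $L$ so that it simultaneously preserves boundedness, keeps the candidate contract $\bar{\mathcal{C}}$ feasible, and retains an optimal solution in the set. Once $L$ is chosen consistently, the compactness and non-emptiness claims are direct consequences of the four propositions already established.
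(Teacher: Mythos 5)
Your proposal is correct and follows essentially the same route as the paper's own proof: closedness via intersection of closed sets, boundedness via the upper bounds of Proposition~\ref{prop: all mi ti upper bounded} together with the imposed lower bound $L$ and the IR-derived bound $t_i \geq f_i + c_i L$ (which is exactly the content of Proposition~\ref{prop: mi ti lower bounded}), and non-emptiness via the candidate contract $\bar{\mathcal{C}}$ of Proposition~\ref{prop: non-empty feasible set}. Your explicit bookkeeping on choosing $L$ consistently (small enough to keep $\bar{\mathcal{C}}$ feasible and to retain an optimal solution) is a point the paper leaves implicit, but it is a refinement of the same argument rather than a different one.
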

\begin{proof}
    The modified feasible set is closed because the intersection of two closed sets is closed. It is bounded due to 
    Propositions \ref{prop: all mi ti upper bounded} and \ref{prop: mi ti lower bounded}. As a result, it is compact.

    The modified feasible set is non-empty, because the contract proposed in Proposition \ref{prop: non-empty feasible set} is contained by it.
\end{proof}

It directly follow from Proposition \ref{prop: compact feasible set} that there exists a solution to the associated optimization problem. Namely, an equilibrium exists.
\subsection{Corner Case}
\begin{tcolorbox}[title = TL;DR]
\textbf{Q:} In proving/stating some of the propositions in the paper, it is assumed that parties have non-zero reservation utilities. How would the analysis change when some parties have \textbf{zero reservation utilities} (i.e., they would not train a model by themselves)?
\tcblower
\textbf{A:} The presence of zero reservation utilities \textbf{would not affect the propositions by much}, with only minor notation changes. All propositions in the main paper and the appendix \textbf{continue to hold}.
\end{tcolorbox}

We have assumed throughout the analysis of our main paper that the reservation utilities of the parties are non-zero, $i.e., \forall i \in \mathcal{I}, c_i \leq \tilde{v}^\prime(0)$. In other words, even the type with the highest per-unit data cost finds it in their interest to utilize some data when training the model on their own. Now we consider the corner case, where some types of parties have 0 reservation utilities. Mathematically, a party of type $i$ will not train a model on their own if
\begin{equation}
    \tilde{v}^\prime(0) < c_i. \label{no training condition}
\end{equation}
This means the marginal cost of contributing one unit of data surpluses the marginal revenue.

In addition to the $I$ types analyzed in the main text, we assume that there are $K$ types that satisfy (\ref{no training condition}). For the ease of notation, we denote the per-unit costs of these types as $c_{-K+1} > \dots > c_0 > \tilde{v}^{\prime}(0)$ and let $\mathcal{K} = \{-K+1, \dots, 0\}$ be the set of indices for these types. By definition, the reservation utilities for the $K$ types are $f_k = 0, \forall k \in \mathcal{K}$. The addition of the $K$ types do not qualitatively change the results of our analysis, as the main propositions in the paper continue to hold with minor changes of notation. It also does not complicate the type selection problem, as the null option $(0,0)$ is in the design place allowing these parties to opt out from the learning scheme. It is worth noting that because these parties have homogeneous reservation utilities which are equal to 0, they together only introduce one additional IR constraint $t_{-K+1} - c_{-K+1} m_{-K+1} \geq 0$—the other IR constraints hold by transitivity. The general optimization problem is:
\begin{align}
        \max_{\{(t_i, m_i)_{i\in \mathcal{I} \cup \mathcal{K}}\}} & \; \mathbb{E}_{n\sim \mathrm{Multi}(N,p)}\left[a\left(\sum_{i\in \mathcal{I} \cup \mathcal{K}} n_i m_i\right)\right] \\
    \text{s.t.} & \; \left\{\begin{array}{ll}
       t_i - c_i m_i \geq f_i  &  \forall i \in \mathcal{I} \cup \mathcal{K}; \\
       t_i - c_i m_i \geq t_j - c_i m_j & \forall i,j \in \mathcal{I} \cup \mathcal{K}; \\
       t_i \leq  \mathbb{E}_{n_i \geq 1}\left[v\left(a\left(\sum_{i\in \mathcal{I} \cup \mathcal{K}} n_i m_i\right)\right)\right], & \forall i \in \mathcal{I} \cup \mathcal{K}.
    \end{array}\right.
\end{align}
The simplified convex first moment problem is:
\begin{align}
     \max_{\{(t_i, m_i \geq 0)_{i\in \mathcal{I} \cup \mathcal{K}}\}} & \; \mathbb{E}_{n\sim \mathrm{Multi}(N,p)}\left[a\left(\sum_{i\in \mathcal{I} \cup \mathcal{K}} n_i m_i\right)\right] \\
    & \text{s.t.}  \; \left\{\begin{array}{l}
       t_{-K+1} - c_{-K+1} m_{-K+1} - f_{-K+1} = 0; \\
       t_i \leq \mathbb{E}_{n_i \geq 1} \left[ v\left( a\left(\sum_{i\in \mathcal{I} \cup \mathcal{K}} n_i m_i\right)\right)\right], \forall i \in \mathcal{I};\\
       t_i - c_i m_i = t_{i-1} - c_i m_{i-1},  \forall i \in \{-K+2, \dots, I\}; \\
       m_i \geq m_{i-1}, \forall  i \in \{-K+2, \dots, I\};\\
       t_i - c_i m_i - f_i \geq 0,   \forall i \in \{1, \dots, I\}. \\
    \end{array}\right. \label{a convex version of the simplified constrained optimization problem new}
\end{align}
\subsection{Graphical Illustration of Disincentivized Contribution}
\begin{tcolorbox}[title = TL;DR]
\textbf{Q:} It seems interesting that that \textbf{parties could contribute less with collaboration} than they would if training a model on their own, which contrasts starkly with mechanisms that incentivize data contribution \cite{karimireddy_mechanisms_2022}. Could you further elucidate the phenomenon?
\tcblower
\textbf{A:} Absolutely, we provide a graphical illustration to clarify the cause of the disincentivized contribution. The core reason for it is the need to enforce \textbf{truth-telling of the higher-ability parties}.
\end{tcolorbox}

Figure \ref{fig:graphical illustration} illustrates the reward-contribution pairs of the optimal contract for Scenario 1 of the multitype case in the experiment section. The thick blue curve represents the model reward as a function of data contribution. The thick red line shows the maximum awardable reward from the collaborative machine learning scheme. Points on the dashed thin lines (termed \textbf{reservation lines}) generate the reservation utilities for the corresponding types of parties. As a result, all admissible contract options for a specific party type must lie on or above its reservation line. From the figure, we can see that the optimal contract option for a type-1 party lies on its reservation line, confirming that the IR condition must bind for the lowest-ability party. All the other types earn utility surpluses from the collaboration, with their options located strictly above the reservation lines. In addition, weak efficiency mandates that the best model be awarded to the highest-ability party. This is corroborated by the figure as the maximum awardable reward line goes through the designed option for Type-5 parties.
\begin{figure}[!htb]
    \centering
    \includegraphics[width=0.8\linewidth]{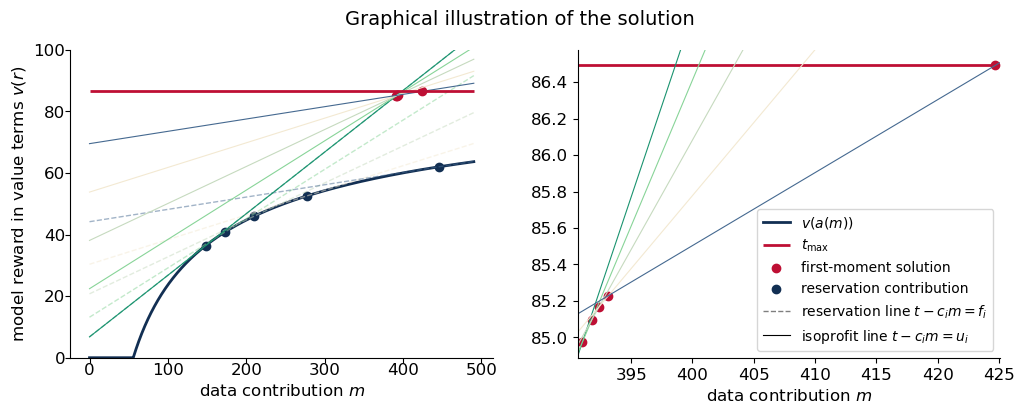}
    \caption{Graphical illustration of disincentivized contribution (Scenario 1 of the multitype case in the experiment section).}
    \label{fig:graphical illustration}
\end{figure}

The solid thin lines are the isoprofit lines for the parties going through the options tailored for them in the optimal contract. Any two points on the isoprofit line are equally preferred by the corresponding party. The truth-telling condition requires that no other types' contract options can lie above a party's isoprofit line that goes through the contract option designed for its type. The right panel of Figure \ref{fig:graphical illustration} reinforces the notion that the downward adjacent comparison constraints must bind for the optimal contracts—each type is indifferent between choosing the option designed for their type, $i$, and that designed for the downward adjacent type $i-1$. This also explains why optimal collaboration can entail reduced contribution from certain types. In the simulated case, requiring a contribution above a type-5 party's reservation level would cause the party to lie about their type and choose the option designed for a type-$4$ party. Whether or not disincentivized contribution is present depends on the curvature of the model reward function (i.e. the training technology), the probability distribution of different types and the cost profiles for these types.

\subsection{Limitations and Future Work}
\begin{tcolorbox}[title = TL;DR]
\textbf{Q:} What are the \textbf{limitations} of the work? How can it be \textbf{improved} in the future?
\tcblower
\textbf{A:} Our goal of this work is to provide a coherent framework—a stepping stone for venturing into the application of contract theory in collaborative machine learning. To improve the implementability and flexibility of the framework, the following areas could prove fertile:
\begin{itemize}
    \item empirical justifications of using model as the reward;
    \item prior-training accuracy measures;
    \item incorporating data quality into the framework;
    \item consideration of additional cost structures;
    \item resolving the combinatorial challenge posed by multinomial distribution.
\end{itemize}
\end{tcolorbox}

In order to effectively qualify the problem and enable tangible theoretical analysis, we have made several restrictive assumptions on the setting. In this section, we discuss the limitation and suggest potential ways to improve upon the paper in the future.
\paragraph{Model as the Reward: Economic Considerations}

In the setup of a party's utility functions, we have assumed the existence of a mapping $v(\cdot)$ from a model's accuracy level to some pecuniary profits. This assumption may seem contrived at first glance. We provide intuition from three perspectives: the analytical perspective, the model market perspective, and the utilitarian perspective.

From an \textbf{\textit{analytical perspective}}, we want to avoid the situation of comparing apples with oranges. In the collaborative machine learning context, this forestalls lumping model rewards/costs and monetary rewards/costs unconverted into a single utility function, which is hard to interpret from a realistic perspective. Some of the previous works try to resolve this issue by the assumption of some weight parameters (cf. \citealp{kang_incentive_2019, ding_incentive_2020}). However, as such weight parameters lack real-world interpretations, justifying the additive format of the utility functions remains a challenge. To ensure comparability and thereby additivity, we could either valuate the model rewards in monetary units, or translate the monetary costs into model training loss. We adopt the first approach here as it is more natural by intuition.

From a \textbf{\textit{model market perspective}}, $v(\cdot)$ can be thought as reflecting the market equilibrium for the trading of the models or services provided by the models. For a specific task on hand, a model with higher accuracy is expected to be sold at a higher price. Here we exclude the possibility of market segmentation, which would permit the same model defined by their accuracy being sold at different prices. This is reasonable as transactions involving models usually happen instantly and digitally and do not experience conventional geographical barriers that hinder the trading of physical goods. One key limitation of the model market perspective is that models are non-exclusive and relatively cheap to replicate. In the absence of legal requirements, the buyer of a model could resell it at a cheap price for multiple times until profits are fully extracted, thereby creating great market volatility. A better perspective is to think about the selling of services provided by the models (\textbf{\textit{model as a service}}). For instance, a participant in the collaborative learning scheme could be a private health institute, and the trained model is used for predictive or advisory purposes. Under this framework, a customer will be willing to pay a higher price for more accurate predictions and better tailored advice, thereby giving rise to the positive correlation between the value of a model and its accuracy level.

Finally from a \textbf{\textit{purely utilitarian perspective}}, models are physical capital that possesses economic values. Be they to supplement medical diagnostics or generate financial consultative advise, they are an indispensable part of the economy in the future of work and translate into increases to the economic output. While measuring the economic surplus generated by models is no mean feat, we conjecture that, ceteris paribus, the higher the quality of a model, the higher its economic value.

Despite the justifications, we acknowledge that the economic landscape of machine learning-driven economy is still burgeoning and stylized facts are yet to be established. In comparison, it may seem more tangible to use money as rewards for the collaboration. The problem setup can be adjusted accordingly, but we expect major conclusions to change as money unlike models cannot be replicated at no cost. One plausible scenario is that the coordinator owns the collectively trained model, use it to generate profits via service provision, and redistribute the profits among the participants/data contributors. The corresponding optimization problem is:
\begin{align}
        \max_{\{(t_i, m_i)_{i=1}^{I}\}} & \; \mathbb{E}_{n\sim \mathrm{Multi}(N,p)}\left[v\left(a\left(\sum_{i=1}^I n_i m_i\right)\right)  - \sum_{i=1}^I n_i t_i \right]\\
    \text{s.t.} & \; \left\{\begin{array}{ll}
       t_i - c_i m_i \geq f_i  &  \forall i; \\
       t_i - c_i m_i \geq t_j - c_i m_j & \forall i,j. \\
    \end{array}\right.
\end{align}

\paragraph{Accuracy function}
One significant determiner of the incentive mechanism is the accuracy function. Challenges pertinent to the choice of the accuracy function come from two front. On the one hand, the accuracy function itself should reflect the performance of the model effectively, and is expected to depend on the model structure, the definition of a validation/test set, the difficulty of the learning task and the quality and quantity of data used for model training. Estimating the accuracy function prior to model training remains an active research area. On the other hand, for practical considerations, participants and the coordinator might have different measures of the model performance, thereby resulting in different accuracy functions. The existing framework could accommodate this generalization, as the first-moment solution solves for the optimal contract in the form of data contributions and model rewards in pecuniary terms. Yet, in this case, the coordinator need to effectively gauge the authenticity of participants' reported accuracy measures to grant suitable models that achieve the values defined by the first-moment solution. This creates a new area of potential information asymmetry and may require the creation of additional contract dimensions for screening purposes.

\paragraph{Independent and identically distributed data assumption}
In the paper, we have made the simplifying assumption of independent and identically distributed data from different participants, which effectively fixes the data quality and makes the accuracy function dependent on the data quantity. In general, the data collected by parties might have different distributions reflecting the geographical or domain differences faced by the parties. We can accommodate this generalization by redefining $m_i$ in the problem setup as the \textbf{effective data size}—a measure of data quantity discounted by the data quality. Practical challenge remains as to find the theoretical support for this conversion. An alternative approach is to explicitly model data quality and data quantity through two separate variables (N.B. effective data size is a special case of this). This results in multi-dimensional contract design, but the key is still on finding and establishing associated theoretical support that have implementation merits.

\paragraph{Cost structure}
In the paper, we assume the per-unit contribution cost to be constant and motivate this assumption with two application scenarios—computing firms pooling GPUs and investment firms combining financial data. In both cases, the per-unit costs remain relatively stable as long as contributions stay within a certain operational scale. Additionally, even if fluctuations do occur, the constant per-unit cost can be interpreted as an average per-unit cost for contributions, making the framework more tangible when resource curation occurs prior to the contracting stage. While the fixed marginal cost assumption covers cases where data collection is standardized, users may want to generalize the framework to cover rising marginal cost. In the latter case, the first-moment setup can still be solved numerically as long as the cost function maintains constraint convexity. Since the cost is per contribution unit, another possible way that allows the framework to generalize is by preprocessing the data into a constant-cost contribution measure (in a similar spirit to the effective data size).

\paragraph{Combinatorial challenge of multinomial distribution}
The assumption of a multinomial distribution for the private types yields a combinatorial set of possible outcomes. This can make numerical optimization extremely exorbitant when the number of types is large in the population. Our preliminary experiments show that the running time of the algorithm increases by roughly a factor of 2 per additional type added, while increments in N have less effect. Consequently, our current model is best suited for cases where $N$ and $I$ are reasonably finite, as is the case of cross-silos collaboration. For reference, it took \textbf{8.03} seconds (wall-clock) to run the experiments in Section 6.2 on a Macbook Pro with M2 chip. Speeding up of the optimization depends crucially on finding a suitable approximation to the objective function and is a potential area for future work.

\end{document}